\definecolor{darkblue}{rgb}{0,0.08,0.45}
\newcommand{\ie}{\emph{i.e.}}
\newcommand{\eg}{\emph{e.g.}}
\newcommand{\inspace}{\ensuremath{\mathcal{X}}}   % the input space X
\newcommand{\pp}[1]{\ensuremath{\mathbb{#1}}}     % probability measure
\newcommand{\pspace}{\ensuremath{\mathscr{P}}}   % probability space
\newcommand{\hbspace}{\ensuremath{\mathscr{H}}}   % Hilbert space on X
\newcommand{\nbspace}{\ensuremath{\mathscr{N}}}
\newcommand{\empmm}{\ensuremath{\hat{\mu}}}
\newcommand{\rr}{\mathbb{R}} 		          % the real numbers
\newcommand{\ep}{\mathbb{E}}                      % the expectation
\newcommand{\kmat}{\mathbf{K}}                  % kernel matrix
\newcommand{\bvec}{\bm{\beta}}                  % the weight vector \beta
\newcommand{\id}{\mathbf{I}}
\newcommand{\dd}{\, \mathrm{d}}
\begin{document} 

\title{Kernel Mean Shrinkage Estimators}
 
\author{\name Krikamol Muandet\thanks{Contributed equally} \email krikamol@tuebingen.mpg.de \\
       \addr Empirical Inference Department, Max Planck Institute for Intelligent Systems \\
       Spemannstra\ss e 38, T\"ubingen 72076, Germany 
       \AND
       \name Bharath Sriperumbudur$^*$ \email bks18@psu.edu \\ 
       \addr Department of Statistics, Pennsylvania State University \\
       University Park, PA 16802, USA
       \AND
       \name Kenji Fukumizu \email fukumizu@ism.ac.jp \\ 
       \addr The Institute of Statistical Mathematics \\ 
       10-3 Midoricho, Tachikawa, Tokyo 190-8562 Japan              
       \AND 
       \name Arthur Gretton \email arthur.gretton@gmail.com \\
       \addr Gatsby Computational Neuroscience Unit, CSML, University College London \\
       Alexandra House, 17 Queen Square, London - WC1N 3AR, United Kingdom
       \AND
       \name Bernhard Sch\"olkopf \email bs@tuebingen.mpg.de \\
       \addr Empirical Inference Department, Max Planck Institute for Intelligent Systems \\
       Spemannstra\ss e 38, T\"ubingen 72076, Germany}

\editor{Ingo Steinwart}

\maketitle

\begin{abstract}%   <- trailing '%' for backward compatibility of .sty file
  A mean function in a reproducing kernel Hilbert space (RKHS), or a kernel mean, is central to kernel methods in that it is used by many classical algorithms 
  such as kernel principal component analysis, and it also forms the core inference step of modern kernel methods that rely on embedding probability distributions in RKHSs. 
  Given a finite sample, an empirical average has been used commonly as a standard estimator of the true kernel mean. Despite a widespread use of this estimator, we show that it can be improved 
  thanks to the well-known Stein phenomenon. We propose a new family of estimators called kernel mean shrinkage estimators (KMSEs), which benefit from both theoretical justifications and good empirical performance. The results demonstrate that the proposed estimators outperform the standard one, especially in a ``large $d$, small $n$'' paradigm. 
\end{abstract}

\begin{keywords}
  covariance operator, James-Stein estimators, kernel methods, kernel mean, shrinkage estimators, Stein effect, Tikhonov regularization
\end{keywords}

\section{Introduction}
This paper aims to improve the estimation of the mean function in a reproducing kernel Hilbert space (RKHS), or a kernel mean, from a finite sample. A kernel mean is defined with respect to a probability distribution $\pp{P}$ over a measurable space $\inspace$ by
\begin{equation}
  \label{eq:kernel-mean}
  \mu_{\pp{P}} \triangleq \int_{\inspace}k(x,\cdot)\;\dd\pp{P}(x) \in
  \hbspace ,
\end{equation}  
\noindent where $\mu_{\pp{P}}$ is a Bochner integral (see, \eg, \citet[Chapter 2]{Diestel-77} and \citet[Chapter 1]{Dinculeanu:2000} for a definition of Bochner integral) and $\hbspace$ is a separable RKHS 
endowed with a measurable reproducing kernel $k:\inspace\times\inspace\rightarrow\rr$ such that 
% $x\mapsto k(\cdot,x)$ is $\hbspace$-valued measurable (see Lemma A.5.18 in \cite{Steinwart-08} for a characterization
% of $\hbspace$-valued measurability) and 
$\int_\mathcal{X} \sqrt{k(x,x)}\,\dd\pp{P}(x)<\infty$.\footnote{The separability of $\hbspace$ and measurability of $k$ ensures that $k(\cdot,x)$ is a $\hbspace$-valued measurable function for all $x\in\mathcal{X}$
\citep[Lemma A.5.18]{Steinwart-08}. The separability of $\hbspace$ is guaranteed by choosing $\mathcal{X}$ to be a separable topological space and $k$ to be continuous \citep[Lemma 4.33]{Steinwart-08}.\label{fnote:1}} 
%The integral in \eqref{eq:kernel-mean} should be interpreted as Bochner-integral (see, \eg, \citet{Berlinet04:RKHS} for a definition). 
%A sufficient condition which guarantees the existence of $\mu_{\pp{P}}$ as a Bochner integral is that $\mathcal{X}$ is a separable topological space and the kernel $k$ is bounded, i.e., $\sup_{x\in\inspace}k(x,x) < \infty$ and continuous. 
%\citep{Smola07Hilbert}. 
Given an i.i.d sample $x_1,x_2,\ldots,x_n$ from $\pp{P}$, the most natural estimate of the true kernel mean is empirical average
\begin{equation}
  \label{eq:empirical-mean} 
  \hat{\mu}_{\pp{P}} \triangleq \frac{1}{n}\sum_{i=1}^n k(x_i,\cdot)\,.
\end{equation}
We refer to this estimator as a \emph{kernel mean estimator} (KME). Though it is the most commonly used estimator of the true kernel mean, the key contribution of this work is to show that there exist estimators that can improve upon this standard estimator.

The kernel mean has recently gained attention in the machine learning community, thanks to the introduction of Hilbert space embedding for distributions \citep{Berlinet04:RKHS,Smola07Hilbert}. Representing the distribution as a mean function in the RKHS has several advantages. First, if the kernel $k$ is \emph{characteristic}, the map $\pp{P}\mapsto\mu_{\pp{P}}$ is injective.\footnote{The notion of characteristic kernel is closely related to the notion of universal kernel. In brief, if the kernel is universal, it is also characteristic, but the reverse direction is not necessarily the case. See, \eg, \citet{Sriperumbudur10b:Universal}, for more detailed accounts on this topic.} That is, it preserves all information about the distribution \citep{Fukumizu04:DRS,Sriperumbudur08injectivehilbert}. Second, basic operations on the distribution can be carried out by means of inner products in RKHS, \eg, $\ep_{\pp{P}}[f(x)]=\langle f,\mu_{\pp{P}}\rangle_{\hbspace}$ for all $f\in\hbspace$, which is an essential step in probabilistic inference 
\citep[see, e.g.,][]{Song11:LTGM}. Lastly, no intermediate density estimation is 
required, for example, when testing for homogeneity from finite samples. Thus, the algorithms become less susceptible to the curse of dimensionality; see, \eg, \citet[Section 6.5]{Wasserman06:NS} and \citet{Sriperumbudur12:Empirical}.

The aforementioned properties make Hilbert space embedding of distributions appealing to many algorithms in modern kernel methods, namely, two-sample testing via maximum mean discrepancy (MMD) \citep{Gretton07:MMD,Gretton12:KTT}, kernel independence tests \citep{Gretton-08}, Hilbert space embedding of HMMs \citep{Song10:HMM}, and kernel Bayes rule \citep{Fukumizu11:KBR}. The performance of these algorithms relies directly on the quality of the empirical estimate $\hat{\mu}_{\pp{P}}$.

In addition, the kernel mean has played much more fundamental role as a basic building block of many kernel-based learning algorithms \citep{Vapnik98:SLT,Scholkopf98:NCA}. For instance, nonlinear component analyses, such as kernel principal component analysis (KPCA), kernel Fisher discriminant analysis (KFDA), and kernel canonical correlation analysis (KCCA), rely heavily on mean functions and covariance operators in RKHS \citep{Scholkopf98:NCA,Fukumizu07:KCCA}. The kernel $K$-means algorithm performs clustering in feature space using mean 
functions as representatives of the clusters \citep{Dhillon04:KKS}. Moreover, the kernel mean also served as a basis in early development of algorithms for classification, density estimation, and anomaly detection \citep[Chapter 5]{Shawe04:KMPA}. All of these employ the empirical average in \eqref{eq:empirical-mean} as an estimate of the true kernel mean. 

We show in this work that the empirical estimator in \eqref{eq:empirical-mean} is, in a certain sense, not optimal, i.e., there exist ``better" estimators (more below), and 
%In addition, we 
then propose simple estimators that outperform the empirical estimator. While it is reasonable to argue that %there is a common belief that 
$\hat{\mu}_{\pp{P}}$ is the ``best'' possible estimator of $\mu_{\pp{P}}$ if nothing is known about $\pp{P}$ (in fact $\hat{\mu}_{\pp{P}}$ is minimax in the sense of 
\citet[Theorem 25.21, Example 25.24]{Vaart-98}), in this paper we show that ``better'' estimators of $\mu_\pp{P}$ can be constructed if mild assumptions are made on $\pp{P}$. 
% already gives a good estimate of $\mu_{\pp{P}}$ in a way that does not depend on the dimension of the feature space, and, 
% as sample size goes to infinity, the estimation error vanishes \citep{Shawe04:KMPA,Altun06:Divergence}. As a result, no need is felt to improve the kernel mean 
% estimation. However, given a finite sample, substantial improvement is in fact possible and several factors may come into play.
This work is to some extent inspired by Stein's seminal work in 1955, which showed that the maximum likelihood estimator (MLE) of 
%i.e., the standard empirical mean, for 
the mean, $\theta$ of a multivariate Gaussian distribution $\mathcal{N}(\theta,\sigma^2\id)$ is ``inadmissible'' \citep{Stein55:Inadmissible}---i.e., there exists 
a better estimator---though it is minimax optimal. 
In particular, Stein showed that there exists an estimator that always achieves smaller total mean squared error regardless of the true $\theta\in\mathbb{R}^d$, when $d\ge 3$. 
Perhaps the best known estimator of such kind is James-Stein’s estimator \citep{Stein61:JSE}. Formally, if $X\sim\mathcal{N}(\theta,\sigma^2\id)$ with $d \geq 3$, 
the estimator $\delta(X)=X$ for $\theta$ is inadmissible in mean squared sense and is dominated %(i.e., having larger total mean squared error for all $\bm{\theta}$) 
by the following estimator
\begin{equation}
  \label{eq:JS-estimator}
  \delta_{\text{JS}}(X) = \left(1 - \frac{(d-2)\sigma^2}{\|X\|^2}\right)X,
\end{equation}
\noindent i.e., $\pp{E}\Vert \delta_{\text{JS}}(X)-\theta\Vert^2 \le \pp{E}\Vert \delta(X)-\theta\Vert^2$ for all $\theta$ and there exists at least one $\theta$ for which 
$\pp{E}\Vert \delta_{\text{JS}}(X)-\theta\Vert^2 < \pp{E}\Vert \delta(X)-\theta\Vert^2$.

Interestingly, the James-Stein estimator is itself inadmissible, and there exists a wide class of estimators that outperform the MLE, see, \eg, \citet{Berger76:quadratic}. 
Ultimately, Stein's result suggests that one can construct estimators better than the usual empirical estimator 
%, sample estimates 
if the relevant parameters are estimated jointly and if the definition of risk ultimately looks at all of these parameters (or coordinates) together. 
This finding is quite remarkable as it is counter-intuitive as to why joint estimation should yield better estimators when all parameters are mutually independent \citep{Efron77:paradox}.  
%\st{This is indeed the setting for kernel mean estimation.} 
Although the Stein phenomenon has been extensively studied in the statistics community, it has not received much attention in the machine learning community.

The James-Stein estimator is a special case of a larger class of estimators known as \emph{shrinkage estimators} \citep{Gruber98:Shrinkage}. In its most general form, 
the shrinkage estimator is a combination of a model with low bias and high variance, and a model with high bias but low variance. 
%averages two different models: a high-dimensional model with low bias and high variance, and a lower dimensional model with larger bias but smaller variance. 
For example, one might consider the following estimator:
\begin{equation*} 
  \hat{\theta}_{\text{shrink}} \triangleq \lambda\tilde{\theta} + (1-\lambda)\hat{\theta}_{\text{ML}} ,
\end{equation*}
\noindent where $\lambda \in [0,1]$, $\hat{\theta}_{\text{ML}}$ denotes the usual maximum likelihood estimate of $\theta$, and $\tilde{\theta}$ is an arbitrary point in the input space. In the case of James-Stein estimator, we have $\tilde{\theta}=0$. 
Our proposal of shrinkage estimator to estimate $\mu_\pp{P}$ will rely on the same principle, but will differ fundamentally from the Stein's seminal 
works and those along this line in two aspects. First, our setting is ``non-parametric'' in the sense that we do not assume any parametric form for the distribution, 
whereas most of traditional works focus on some specific distributions, \eg, the Gaussian distribution. The non-parametric setting is very important in most applications of 
kernel means because it allows us to perform statistical inference without making any assumption on the parametric form of the true distribution $\pp{P}$. Second, our 
setting involves a ``non-linear feature map'' into a high-dimensional space. For example, if we use the Gaussian RBF kernel (see \eqref{eq:rbf-kernel}), the mean function 
$\mu_{\pp{P}}$ lives in an infinite-dimensional space. As a result, higher moments of the distribution come into play and therefore one cannot adopt Stein's setting 
straightforwardly as it involves only the first moment. A direct generalization of James-Stein estimator to infinite-dimensional Hilbert space has been considered, for example, 
in \citet{Berger83:GP-Stein,Mandelbaum87:admissibility,Privault08:GP-malliavin}. In those works, the parameter to be estimated is assumed to be the mean of a 
Gaussian measure on the Hilbert space from which samples are drawn. In contrast, our setting involves samples that are drawn from $\pp{P}$ defined on an 
arbitrary measurable space, and not from a Gaussian measure defined on a Hilbert space. 
 
\subsection{Contributions} 
In the following, we present the main contributions of this work.
  \begin{enumerate}
    \item In Section~\ref{subsec:admissibility}, we propose kernel mean shrinkage estimators and show that these estimators can theoretically improve upon the 
    standard 
  empirical estimator, $\hat{\mu}_\pp{P}$ in terms of the mean squared error (see Theorem \ref{thm:main-result} and Proposition~\ref{prop:positive-part}), however, requiring the knowledge of the true kernel mean.
  We relax this condition in Section~\ref{subsec:consequence} (see Theorem \ref{thm:main}) where without requiring the knowledge of the true kernel mean, we construct shrinkage estimators
  that are \emph{uniformly} better (in mean squared error) than the empirical estimator over a class of distributions $\pspace$. For bounded continuous translation invariant kernels, we show
  that $\pspace$ reduces to a class of distributions whose characteristic functions have an $L^2$-norm bounded by a given constant. Through concrete choices for $\pspace$ in Examples~\ref{exm:gaussian} and \ref{exm:stein}, we discuss the implications of the proposed estimator.
%   For a certain class of distributions $\pspace$, if a non-trivial bound on the $L^2$-norm of the characteristic function of $\pp{P}\in\pspace$ is known, we show under certain 
%   assumptions on the kernel $k$ that one can construct shrinkage estimators that are \emph{uniformly} better (in mean squared error) 
%   than the empirical average over $\pspace$ (see Theorem \ref{thm:main}). We also provide some concrete examples of the distribution class $\pspace$.
%   Although these results require the knowledge of 
%   the true kernel mean, it has an implication suggesting that our shrinkage estimator can improve upon the standard estimator, which we later exploit to construct a more practical 
%   estimator.
  %\item For a certain class of distributions $\pspace$, if a non-trivial bound on the $L^2$-norm of the characteristic function of $\pp{P}\in\pspace$ is known, we show under certain assumptions on the kernel $k$ that one can construct shrinkage estimators that are \emph{uniformly} better (in mean squared error) than the empirical average over $\pspace$ (see Theorem \ref{thm:main}). We also provide some concrete examples of the distribution class $\pspace$.
  \item While the proposed estimators in Section~\ref{subsec:admissibility} and \ref{subsec:consequence} are theoretically interesting, they are not useful in practice as they require the knowledge of the 
  true data generating distribution. In Section~\ref{subsec:data-dep-alpha} (see Theorem~\ref{thm:conc}), we present a completely data-dependent estimator (say $\check{\mu}_\pp{P}$)---referred to as B-KMSE---that is $\sqrt{n}$-consistent and satisfies
  \begin{equation}\pp{E}\Vert\check{\mu}_\pp{P}-\mu_\pp{P}\Vert^2_\hbspace<\pp{E}\Vert\hat{\mu}_\pp{P}-\mu_\pp{P}\Vert^2_\hbspace+O(n^{-3/2})\,\,\text{as}\,\,\,n\rightarrow\infty.\label{Eq:exp-bound}\end{equation}
%   that can be used in practice. We show that our estimate is $\sqrt{n}$-consistent and gives the kernel mean estimate whose risk is concentrated around the true risk (see 
%   Theorem \ref{thm:conc}). 
  %Moreover, in Section~\ref{subsec} we explore the relation of $\check{\mu}$ also show that in fact our estimator has a fundamental connection to classical James-Stein estimator \citep{Stein61:JSE}.
  \item In Section~\ref{sec:optimization}, we present a regularization interpretation for the proposed shrinkage estimator, wherein the shrinkage parameter is shown to be directly related to the regularization
  parameter. Based on this relation, we present an alternative approach to choosing the shrinkage parameter (different from the one proposed in Section~\ref{subsec:data-dep-alpha}) through
  leave-one-out cross-validation, and show that the corresponding shrinkage estimator (we refer to it as R-KMSE) is also $\sqrt{n}$-consistent and satisfies (\ref{Eq:exp-bound}).
  %Then, we expand a class of shrinkage estimators by re-interpreting the kernel mean estimation as regression problem. 
\item The regularization perspective also sheds light on constructing new shrinkage estimators that incorporate specific information about the RKHS, 
  based on which we present a new $\sqrt{n}$-consistent shrinkage estimator---referred to as S-KMSE---in Section~\ref{sec:flexible} (see Theorem~\ref{thm:consistency-fkmse} and Remark~\ref{rem:compare})
%   but also leads to efficient algorithms based on leave-one-out cross validation for finding 
%   the shrinkage parameter. Specifically, we present a new shrinkage estimator 
  that takes into account spectral information of the covariance operator in RKHS. We establish the relation of S-KMSE to the problem of
  learning smooth operators \citep{Grunewalder13:SO} on $\hbspace$, and propose a leave-one-out cross-validation method to obtain a data-dependent shrinkage parameter.
  However, unlike B-KMSE and R-KMSE, it remains an open question as to whether S-KMSE with a data-dependent shrinkage parameter is consistent and satisfies an inequality similar 
  to (\ref{Eq:exp-bound}). The difficulty in answering these questions lies with the complex form of the estimator, $\tilde{\mu}_{\pp{P}}$ which is constructed so as to
  capture the spectral information of the covariance operator.
%   which capture the amount of information in 
%   different coordinates. 
  %Furthermore, we show that the standard shrinkage estimator can be obtained naturally as a solution to this regression problem.
  \item In Section~\ref{sec:experiments}, we empirically evaluate the proposed shrinkage estimators of kernel mean on both synthetic data and several real-world scenarios including Parzen window classification, 
  density estimation and discriminative learning on distributions. The experimental results demonstrate the benefits of our shrinkage estimators over the standard one.
  \end{enumerate}
%} 
While a shorter version of this work already appeared in 
\citet{Muandet14:KMSE, Muandet14:Spectral}---particularly, the ideas in Sections~\ref{subsec:admissibility}, \ref{sec:optimization} and \ref{sec:flexible}---, this extended version provides a rigorous 
theoretical treatment (through Theorems~\ref{thm:main}, \ref{thm:conc}, \ref{thm:skmse-loocv-consistency}, \ref{thm:consistency-fkmse} and Proposition~\ref{thm:akmse-loocv} which are new) 
for the proposed estimators and also contains additional experimental results.%\textcolor{red}{This version also contains additional experimental results.}

\section{Kernel Mean Shrinkage Estimators}
\label{sec:inadmissibility}
In this section, we first provide some definitions and notation that are used throughout the paper, following which we present a shrinkage estimator of $\mu_\pp{P}$. The rest of the
section presents various properties including the inadmissibility of the empirical estimator.

\subsection{Definitions \& Notation}
%First, we give a very brief coverage of reproducing kernel Hilbert spaces (RKHS) \citep{Scholkopf01:LKS,Shawe04:KMPA,Berlinet04:RKHS}. 
For
$a\triangleq(a_1,\ldots,a_d)\in\pp{R}^d$, $\Vert
a\Vert_2\triangleq\sqrt{\sum^d_{i=1}a^2_i}$. For a topological
space $\mathcal{X}$, $C(\mathcal{X})$ (\emph{resp.} $C_b(\mathcal{X})$) denotes the space of
all continuous (\emph{resp.} bounded continuous) functions on $\mathcal{X}$. For a
locally compact Hausdorff space $\mathcal{X}$, $f\in C(\mathcal{X})$ is said to
\emph{vanish at infinity} if for every $\epsilon > 0$ the set $\{x :
|f(x)|\ge\epsilon\}$ is compact. The class of all continuous $f$ on $\mathcal{X}$
which vanish at infinity is denoted as $C_0(\mathcal{X})$. $M_b(\mathcal{X})$ (\emph{resp.} $M^1_+(\inspace)$) denotes the set of all finite
Borel (\emph{resp.} probability) measures defined on $\inspace$. For $\mathcal{X}\subset\pp{R}^d$, 
$L^r(\mathcal{X})$ denotes the Banach space of $r$-power ($r\ge
1$) Lebesgue integrable functions. For $f\in L^r(\mathcal{X})$, $\Vert
f\Vert_{L^r}\triangleq\left(\int_\mathcal{X}|f(x)|^r\dd x\right)^{1/r}$ denotes
the $L^r$-norm of $f$ for $1\le r<\infty$. The Fourier transform of $f\in L^1(\pp{R}^d)$ is defined as $f^\wedge(\omega)\triangleq(2\pi)^{-d/2}\int_{\pp{R}^d}f(x) e^{-\sqrt{-1}\omega^{\top}x}\dd x,\,\omega\in\pp{R}^d$. The 
characteristic function of $\pp{P}\in M^1_+(\pp{R}^d)$ is defined as $\phi_\pp{P}(\omega)\triangleq \int e^{\sqrt{-1}\omega^{\top}x}\dd\pp{P}(x),\,\omega\in\pp{R}^d$.

An RKHS over a set $\inspace$ is a Hilbert space $\hbspace$ consisting of functions on $\inspace$ such that for each $x\in\inspace$ there is a function $k_x\in\hbspace$ with the property
\begin{equation}
  \label{eq:reproducing}
  \langle f,k_x\rangle_{\hbspace} = f(x), \quad \forall f\in\hbspace .
\end{equation}
The function $k_x(\cdot)\triangleq k(x,\cdot)$ is called the \emph{reproducing kernel} of $\hbspace$ and the equality \eqref{eq:reproducing} is called the \emph{reproducing property} of $\hbspace$. The space $\hbspace$ is endowed with inner product $\langle\cdot,\cdot\rangle_{\hbspace}$ and norm $\|\cdot\|_{\hbspace}$. Any symmetric and positive semi-definite kernel function $k:\inspace\times\inspace\rightarrow\rr$ uniquely determines an RKHS \citep{aronszajn50reproducing}. One of the most popular kernel functions is the Gaussian radial basis function (RBF) kernel on $\inspace=\rr^d$,
\begin{equation}
  \label{eq:rbf-kernel}
  k(x,y) = \exp\left(-\frac{\|x-y\|^2_2}{2\sigma^2}\right), \quad x,y\in\inspace,
\end{equation}
\noindent where $\|\cdot\|_2$ denotes the Euclidean norm and $\sigma>0$ is the bandwidth. %\textcolor{red}{is a bandwidth parameter}. %For a strictly positive Borel measure $\nu$, the relationship between RKHSs and certain spaces of integrable functions such as $L^2(\nu)$ are well studied, see, e.g., \citet{Cucker02:Math,Steinwart12:Mercer}.
For $x\in
\hbspace_1$ and $y\in \hbspace_2$, $x\otimes y$ %which is an element of the tensor product
%space $H_1\otimes H_2$, 
denotes the tensor product of $x$ and $y$, and can be seen as an operator from $\hbspace_2$ to $\hbspace_1$ as
$(x\otimes y)z=x\langle y,z\rangle_{\hbspace_2}$ for any $z\in \hbspace_2$, where $\hbspace_1$ and
$\hbspace_2$ are Hilbert spaces.

We assume throughout the paper that we observe a sample $x_1,x_2,\ldots,x_n \in \inspace$ of size $n$ drawn independently and identically (i.i.d.) from some unknown distribution $\pp{P}$ defined 
over a separable topological space $\inspace$. Denote by $\mu$ and $\empmm$ the true kernel mean \eqref{eq:kernel-mean} and its empirical estimate \eqref{eq:empirical-mean} respectively. 
We remove the subscript for ease of notation, but we will use $\mu_{\pp{P}}$ (resp. $\hat{\mu}_{\pp{P}}$) and $\mu$ (resp. $\hat{\mu}$) interchangeably. For the well-definedness of
$\mu$ as a Bochner integral, throughout the paper we assume that $k$ is continuous and $\int_\mathcal{X}k(x,x)\,\dd\pp{P}(x)<\infty$ (see Footnote~\ref{fnote:1}). We measure the quality of an estimator $\tilde{\mu}\in \hbspace$ of $\mu$ by the %loss function:
% \begin{equation*}
%   \ell(\mu,\hat{\mu}) = \|\mu - \hat{\mu}\|^2_{\hbspace} . 
% \end{equation*}
%The comparison between different estimators can be made using the 
risk function, $R:\hbspace\times\hbspace\rightarrow\rr$, $R(\mu,\tilde{\mu}) =  \ep\|\mu - \tilde{\mu}\|^2_{\hbspace}$, %given by
% \begin{equation*}
%   R(\mu,\hat{\mu}) =  \ep\|\mu - \hat{\mu}\|^2_{\hbspace},
% \end{equation*}
%\noindent 
where $\ep$ denotes the expectation over the choice of random sample of size $n$ drawn i.i.d. from the distribution $\pp{P}$. 
% We assume throughout that the kernel $k$ is bounded 
% so that the risk is always finite. 
When $\tilde{\mu}=\hat{\mu}$, for the ease of notation, we will use $\Delta$ to denote $R(\mu,\hat{\mu})$, which can be rewritten as
%The risk of the standard kernel mean estimator also satisfies 
\begin{eqnarray}\label{eq:RiskKF}
 \Delta&{} ={}& \ep\|\empmm - \mu\|^2_{\hbspace} = \ep\|\empmm\|^2_\hbspace-\|\mu\|^2_\hbspace=\frac{1}{n^2}\sum^n_{i,j=1}\ep_{x_i,x_j} k(x_i,x_j)-\Vert\mu\Vert^2_\hbspace\nonumber\\
    &{}={}& \frac{1}{n^2}\sum^n_{i=1}\ep_{x_i} k(x_i,x_i)+\frac{1}{n^2}\sum^n_{i\ne j}\ep_{x_i,x_j} k(x_i,x_j)-\Vert\mu\Vert^2_\hbspace\nonumber\\
    &{}={}& \frac{1}{n}\left(\ep_xk(x,x) -
      \ep_{x,\tilde{x}}k(x,\tilde{x})\right) ,
  \end{eqnarray}
  \noindent where $\Vert\mu\Vert^2_\hbspace=\ep_{x,\tilde{x}}[k(x,\tilde{x})] \triangleq \ep_{x\sim\pp{P}}[\ep_{\tilde{x}\sim\pp{P}}[k(x,\tilde{x})]]$ with $x$ and $\tilde{x}$ being independent copies. 
  An estimator $\hat{\mu}_1$ is said to be \emph{as good as} $\hat{\mu}_2$ if $R(\mu,\hat{\mu}_1) \leq R(\mu,\hat{\mu}_2)$ for any $\pp{P}$, and is \emph{better than} $\hat{\mu}_2$ 
  if it is as good as $\hat{\mu}_2$ and $R(\mu,\hat{\mu}_1) < R(\mu,\hat{\mu}_2)$ for at least one $\pp{P}$. An estimator is said to be \emph{inadmissible} if there exists a better estimator.% \citep{Stein55:Inadmissible}.

\subsection{Shrinkage Estimation of $\mu_\pp{P}$} \label{subsec:admissibility}

We propose the following kernel mean estimator
\begin{equation}
  \label{eq:shrinkage-estimator}
  \hat{\mu}_{\alpha} \triangleq \alpha f^* + (1-\alpha)\hat{\mu}
\end{equation}
\noindent where $\alpha\ge 0$ and $f^*$ is a fixed, but arbitrary function in $\hbspace$. Basically, it is a shrinkage estimator that shrinks the empirical estimator toward a function $f^*$ by an amount specified by $\alpha$. The choice of $f^*$ can be arbitrary, but we will assume 
%for the moment 
that $f^*$ is chosen independent of the sample. If $\alpha=0$, the estimator $\hat{\mu}_{\alpha}$ reduces to the empirical estimator $\hat{\mu}$. We denote by $\Delta_\alpha$ the 
risk of the shrinkage estimator in \eqref{eq:shrinkage-estimator}, \ie, $\Delta_\alpha \triangleq R(\mu,\hat{\mu}_\alpha)$.

Our first theorem asserts that the shrinkage estimator $\hat{\mu}_{\alpha}$ achieves smaller risk than that of the empirical estimator $\hat{\mu}$ given an appropriate choice of $\alpha$, regardless of the function $f^*$.
%% theorem
\begin{theorem}
  \label{thm:main-result}
Let $\mathcal{X}$ be a separable topological space. Then for all distributions $\pp{P}$ and continuous kernel $k$ satisfying $\int k(x,x)\,\dd\pp{P}(x)<\infty$, $\Delta_\alpha < \Delta$ if and only if 
  \begin{equation}
    \label{eq:opt-alpha}
    \alpha \in \left(0,\frac{2\Delta}{\Delta + \|f^* - \mu\|^2_{\hbspace}}\right).
  \end{equation}
  In particular, $\arg\min_{\alpha\in\pp{R}}(\Delta_\alpha-\Delta)$ is unique and is given by $\alpha_*\triangleq \frac{\Delta}{\Delta+\Vert f^*-\mu\Vert^2_\hbspace}$.
\end{theorem}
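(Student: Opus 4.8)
The plan is to reduce the theorem to an elementary one-variable fact by computing $\Delta_\alpha$ in closed form. Writing $\hat{\mu}_\alpha - \mu = \alpha(f^* - \mu) + (1-\alpha)(\hat{\mu} - \mu)$ and expanding the squared RKHS norm produces three terms: $\alpha^2\|f^* - \mu\|^2_\hbspace$, the cross term $2\alpha(1-\alpha)\langle f^* - \mu, \hat{\mu} - \mu\rangle_\hbspace$, and $(1-\alpha)^2\|\hat{\mu} - \mu\|^2_\hbspace$. First I would take the expectation over the i.i.d.\ sample. Since $\hat{\mu}$ is an unbiased estimator of $\mu$, i.e.\ $\ep[\hat{\mu}] = \mu$ (valid as a Bochner-integral identity under $\int k(x,x)\,\dd\pp{P}(x)<\infty$), and since $f^*$ is chosen independently of the sample, the deterministic vector $f^* - \mu$ can be pulled out of the expectation in the cross term, which therefore vanishes: $\ep\langle f^* - \mu, \hat{\mu} - \mu\rangle_\hbspace = \langle f^* - \mu, \ep[\hat{\mu}] - \mu\rangle_\hbspace = 0$. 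This yields the key identity
\[
\Delta_\alpha = \alpha^2\|f^* - \mu\|^2_\hbspace + (1-\alpha)^2\Delta .
\]

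Next I would analyse the difference $\Delta_\alpha - \Delta = \alpha^2\bigl(\|f^* - \mu\|^2_\hbspace + \Delta\bigr) - 2\alpha\Delta = \alpha\bigl[\alpha(\|f^* - \mu\|^2_\hbspace + \Delta) - 2\Delta\bigr]$ as a function of $\alpha\in\rr$. Assuming $\Delta>0$ (the only case in which the statement is not vacuous, since $\Delta=0$ forces $\hat{\mu}=\mu$ almost surely), this is an upward-opening parabola in $\alpha$ that vanishes exactly at $\alpha=0$ and at $\alpha = \frac{2\Delta}{\Delta + \|f^* - \mu\|^2_\hbspace}$; hence it is strictly negative precisely on the open interval between these two roots, which is exactly \eqref{eq:opt-alpha}. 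This establishes the ``if and only if''. Finally, the vertex of the parabola lies at the midpoint of its roots, $\alpha_* = \frac{\Delta}{\Delta + \|f^* - \mu\|^2_\hbspace}$, and it is the unique minimiser of $\Delta_\alpha-\Delta$ because the leading coefficient $\|f^* - \mu\|^2_\hbspace + \Delta$ is strictly positive; this gives the last assertion.

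I do not expect a genuine obstacle here. The only two points needing care are (i) justifying that expectation commutes with the inner product against the fixed element $f^*-\mu$ and with $\ep\|\hat{\mu}\|^2_\hbspace$, both of which follow from Cauchy--Schwarz together with $\ep\|\hat{\mu}\|^2_\hbspace<\infty$ under the stated moment condition (cf.\ the computation of $\Delta$ in \eqref{eq:RiskKF}); and (ii) observing that non-degeneracy of the quadratic, hence uniqueness of $\alpha_*$, relies on $\|f^* - \mu\|^2_\hbspace + \Delta>0$, which is automatic once $\Delta>0$. The separability and continuity hypotheses are used only to guarantee that $\mu$ is a well-defined Bochner integral and play no further role in the argument.
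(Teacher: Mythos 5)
Your proof is correct and follows essentially the same route as the paper: the paper's bias--variance decomposition $\Delta_\alpha = \Vert\mathrm{Bias}(\hat{\mu}_\alpha)\Vert^2_\hbspace + \mathrm{Var}(\hat{\mu}_\alpha)$ is exactly your expansion with the vanishing cross term, and both arguments then reduce to the same quadratic $\alpha^2(\Delta + \|f^*-\mu\|^2_\hbspace) - 2\alpha\Delta$. Your explicit remarks on $\Delta>0$ and on interchanging expectation with the inner product are welcome additional care but do not change the argument.
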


%% proof of Theorem 1 
\begin{proof}
Note that
$$
\Delta_\alpha=\pp{E}\|\hat{\mu}_\alpha - \mu\|^2_{\hbspace}= \left\| \mathbb{E}[\hat{\mu}_\alpha] - \mu\right\|^2_{\hbspace} + \mathbb{E}\left\|\hat{\mu}_\alpha - \mathbb{E}\hat{\mu}_\alpha\right\|^2_{\hbspace}
  = \Vert\mathrm{Bias}(\hat{\mu}_\alpha)\Vert^2_\hbspace + \mathrm{Var}(\hat{\mu}_\alpha),$$
where $$\mathrm{Bias}(\hat{\mu}_\alpha)= \mathbb{E}[\hat{\mu}_\alpha] - \mu=\mathbb{E}[\alpha f^* + (1-\alpha)\hat{\mu}] - \mu=\alpha(f^*-\mu)$$ and 
$$\mathrm{Var}(\hat{\mu}_\alpha)=(1-\alpha)^2\mathbb{E}\left\|\hat{\mu} - \mu\right\|^2_{\hbspace}=(1-\alpha)^2\Delta.$$
Therefore,
\begin{equation}
%  \label{eq:bias-variance}
  \Delta_\alpha = \alpha^2\left\| f^* - \mu \right\|^2_{\hbspace} + (1-\alpha)^2\Delta,
\end{equation}
\ie, $\Delta_\alpha - \Delta = \alpha^2\left[\Delta +
      \|f^*-\mu\|^2_{\hbspace}\right] - 2\alpha\Delta$. 
This is clearly negative if and only if (\ref{eq:opt-alpha}) holds
%  \begin{equation*}
%    \alpha \in \left(0,\frac{2\Delta}{\Delta + \|f^* - \mu\|^2_{\hbspace}}\right),
%  \end{equation*}
%  \noindent 
and is uniquely minimized at $\alpha_* \triangleq \frac{\Delta}{\Delta + \|f^* - \mu\|^2_{\hbspace}}$.\vspace{-5mm}
\end{proof}
\begin{remark}
  \label{rem:remark1} 
  \begin{enumerate}[label=(\roman{*})]
  \item The shrinkage estimator always improves upon the standard one regardless of the direction of shrinkage, as specified by the choice of $f^*$. In other words, there exists a wide class of kernel mean estimators that achieve smaller risk than the standard one.
    
  \item The range of $\alpha$ depends on the choice of $f^*$. The further $f^*$ is from $\mu$, the smaller the range of $\alpha$ becomes. Thus, the shrinkage gets smaller if $f^*$ is chosen such that it is far from the true kernel mean. This effect is akin to James-Stein estimator.
    
  \item From \eqref{eq:opt-alpha}, since $0 < \alpha < 2$, \ie, $0 < (1-\alpha)^2 < 1$, it follows that $\mathrm{Var}(\hat{\mu}_\alpha) < \mathrm{Var}(\hat{\mu})=\Delta$, \ie, %As a result, the variance of the shrinkage estimator is always smaller than that of the standard estimator, regardless of the choice of $f^*$. By choosing $f^*$ such that it is close to $\mu$, the estimation can be improved further. 
the shrinkage estimator always improves upon the empirical estimator in terms of the variance. Further improvement can be gained by reducing the bias by incorporating the prior knowledge about the location of $\mu$ via $f^*$. 
%in which case we obtain smaller bias. %In addition to variance reduction, we also obtain smaller bias.
This implies that we can potentially gain ``twice'' by adopting the shrinkage estimator: by reducing variance of the estimator and by incorporating prior knowledge in choosing $f^*$ such that it is close to the true kernel mean. %(cf. Section \ref{sec:prior-knowledge}). 
%\item The improvement can be viewed as a bias-variance tradeoff: the shrinkage estimator reduces variance substantially at the expense of a little bias (more below).
  \end{enumerate}
\end{remark}
While Theorem~\ref{thm:main-result} shows $\hat{\mu}$ to be inadmissible by providing a family of estimators that are better than $\hat{\mu}$, the result is not useful
as all these estimators require the knowledge of $\mu$ (which is the parameter of interest) through the range of $\alpha$ given in (\ref{eq:opt-alpha}). In Section~\ref{subsec:consequence}, we 
investigate Theorem~\ref{thm:main-result} and show that $\hat{\mu}_\alpha$ can be constructed under some weak assumptions on $\pp{P}$, without requiring the 
knowledge of $\mu$. 
% We can see from \eqref{eq:opt-alpha} that there is a range of $\alpha$ for which a negative $\Delta_\alpha-\Delta$ is guaranteed, i.e., $\hat{\mu}$ is inadmissible. 
% Unfortunately, Theorem \ref{thm:main-result} relies 
% on the fact that the true kernel mean of the distribution $\pp{P}$ is required to estimate the range of $\alpha$. 
% In spite of this, the theorem has an important implication 
% suggesting that the shrinkage estimator $\hat{\mu}_{\alpha}$ can improve upon $\hat{\mu}$ if $\alpha$ is chosen appropriately. 
%Intuitively, since only the upper bound in  \eqref{eq:opt-alpha} depends on the true distribution $\pp{P}$, the class of distributions for which $\Delta_\alpha-\Delta$ is non-positive becomes smaller as the value of $\alpha$ increases. Hence, the optimal $\alpha_*$ also depends on the distribution $\pp{P}$. Later, we will exploit these insights to construct more practical estimators. 
From \eqref{eq:opt-alpha}, the existence of positive $\alpha$ is guaranteed if and only if the risk of the empirical estimator is non-zero. Under some 
assumptions on $k$, the following result shows that $\Delta = 0$ if and only if the distribution $\pp{P}$ is a Dirac distribution, \ie, the distribution $\pp{P}$ is a point mass. 
This result ensures, in many non-trivial cases, a non-empty range of $\alpha$ for which $\Delta_\alpha - \Delta < 0$.

%% zero risk proposition
\begin{proposition}
  \label{prop:zero-risk}
Let $k(x,y)=\psi(x-y),\,\,x,y\in\mathbb{R}^d$ be a characteristic kernel where $\psi\in C_b(\pp{R}^d)$ is positive definite. Then $\Delta = 0$ if and only if $\mathbb{P} = \delta_x$ for some $x\in\mathbb{R}^d$.
\end{proposition}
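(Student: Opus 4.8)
The plan is to reduce the statement to the injectivity of the kernel mean embedding, which is exactly what being characteristic buys us. First I would rewrite the risk of the empirical estimator in a form that exposes its sign. Starting from the expression $\Delta = \frac{1}{n}\big(\pp{E}_x k(x,x) - \pp{E}_{x,\tilde{x}}k(x,\tilde{x})\big)$ in \eqref{eq:RiskKF} and using $\pp{E}_{x,\tilde{x}}k(x,\tilde{x}) = \|\mu\|^2_{\hbspace}$, one rewrites it as
\[
\Delta \;=\; \frac{1}{n}\,\pp{E}_x\big\|k(x,\cdot)-\mu\big\|^2_{\hbspace},
\]
since $\pp{E}_x\|k(x,\cdot)-\mu\|^2_{\hbspace} = \pp{E}_x k(x,x) - 2\langle \pp{E}_x k(x,\cdot),\mu\rangle_{\hbspace} + \|\mu\|^2_{\hbspace} = \pp{E}_x k(x,x) - \|\mu\|^2_{\hbspace}$. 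This representation makes transparent that $\Delta\ge 0$ always, and that $\Delta = 0$ if and only if the nonnegative (and measurable, by the standing assumptions of Footnote~\ref{fnote:1}) integrand vanishes $\pp{P}$-almost everywhere, i.e.\ $k(x,\cdot) = \mu$ in $\hbspace$ for $\pp{P}$-a.e.\ $x$.

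The ``if'' direction is then immediate: if $\pp{P} = \delta_{x_0}$ then $\mu = \mu_{\delta_{x_0}} = k(x_0,\cdot)$, and $x_0$ is $\pp{P}$-a.e.\ the only point, so the integrand equals $\|k(x_0,\cdot)-k(x_0,\cdot)\|^2_{\hbspace} = 0$ and hence $\Delta = 0$. (Equivalently, both $\pp{E}_x k(x,x)$ and $\pp{E}_{x,\tilde{x}}k(x,\tilde{x})$ equal $\psi(0)$.) For the ``only if'' direction, suppose $\Delta = 0$. By the observation above there is a Borel set $N$ with $\pp{P}(N)=0$ such that $k(x,\cdot) = \mu$ for every $x \notin N$. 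Since $\pp{P}(\pp{R}^d\setminus N) = 1 > 0$, the set $\pp{R}^d\setminus N$ is non-empty; fix any $x_0$ in it. Then $\mu_{\pp{P}} = \mu = k(x_0,\cdot) = \mu_{\delta_{x_0}}$. Both $\pp{P}$ and $\delta_{x_0}$ lie in $M^1_+(\pp{R}^d)$, and boundedness of $\psi$ gives $\int k(x,x)\,\dd\pp{P}(x) = \psi(0) < \infty$ (and likewise for $\delta_{x_0}$), so both kernel means are well-defined Bochner integrals. Since $k$ is characteristic, the embedding $\pp{Q}\mapsto\mu_{\pp{Q}}$ is injective on $M^1_+(\pp{R}^d)$, and therefore $\pp{P} = \delta_{x_0}$, as claimed.

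I do not expect a real obstacle here; the only points needing a little care are (a) measurability of $x\mapsto\|k(x,\cdot)-\mu\|^2_{\hbspace}$, which follows from separability of $\hbspace$ and measurability of $k$ as in Footnote~\ref{fnote:1}, and (b) checking that the hypotheses on $\psi$ (positive definiteness, boundedness, continuity) are precisely what makes $k$ a bona fide RKHS kernel, all kernel means in sight well-defined, and the notion of ``characteristic'' — injectivity on \emph{all} Borel probability measures, Dirac measures included — available. Note that translation invariance is used here only cosmetically, to identify $k(x,x)\equiv\psi(0)$; the load-bearing hypothesis is that $k$ is characteristic. If one insisted on a self-contained Fourier-analytic argument instead, one could observe that $\Delta=0$ forces $\psi\equiv\psi(0)$ on the difference set $A-A$ with $A:=\pp{R}^d\setminus N$, and then combine $|\psi|\le\psi(0)$ with the spectral characterization of characteristic kernels ($\mathrm{supp}$ of the Bochner spectral measure equal to $\pp{R}^d$) to deduce $A-A=\{0\}$; but this is more laborious than the injectivity route, so I would not pursue it.
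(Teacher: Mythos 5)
Your proof is correct, but it takes a genuinely different route from the paper's. You rewrite the risk as $\Delta = \tfrac{1}{n}\,\pp{E}_x\Vert k(x,\cdot)-\mu\Vert^2_{\hbspace}$, conclude that $\Delta=0$ forces $k(x_0,\cdot)=\mu$ for $\pp{P}$-a.e.\ $x_0$, hence $\mu_{\pp{P}}=\mu_{\delta_{x_0}}$ for some fixed $x_0$, and then invoke the injectivity of the mean embedding on $M^1_+(\pp{R}^d)$---which is exactly the definition of a characteristic kernel---to conclude $\pp{P}=\delta_{x_0}$. The paper instead argues Fourier-analytically: it writes $\iint(\psi(0)-\psi(x-y))\,\dd\pp{P}(x)\dd\pp{P}(y)=0$, applies Bochner's theorem to convert this to $\int(|\phi_{\pp{P}}(\omega)|^2-1)\,\dd\Lambda(\omega)=0$, uses the spectral characterization of characteristic translation-invariant kernels ($\mathrm{supp}(\Lambda)=\pp{R}^d$) to get $|\phi_{\pp{P}}|\equiv 1$, and finally invokes the structure theorem for positive definite functions of unit modulus to identify $\phi_{\pp{P}}$ as a character, i.e., $\pp{P}$ as a Dirac measure. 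Your argument is shorter, uses characteristicness only as a black box, and---as you correctly observe---does not actually need translation invariance beyond the cosmetic identification $k(x,x)=\psi(0)$, so it establishes a more general statement. What the paper's route buys is the explicit spectral picture (the identity $\iint\psi(x-y)\dd\pp{P}\dd\pp{P}=\int|\phi_{\pp{P}}|^2\dd\Lambda$ and the role of $\mathrm{supp}(\Lambda)$), which is reused almost verbatim in the proof of Theorem~\ref{thm:main}. Both proofs are complete; your side remarks on measurability and on the well-definedness of the two Bochner integrals address the only points that need care.
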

%% proof of Lemma 2
\begin{proof} 
See Section~\ref{proof:prop:zero-risk}.
\end{proof}

\subsubsection{Positive-part Shrinkage Estimator}
\label{sec:positive-part}

Similar to James-Stein estimator, we can show that the positive-part version of $\hat{\mu}_\alpha$ also outperforms $\hat{\mu}$, where the positive-part estimator is defined by
\begin{equation}
  \label{eq:positive-part}
  \hat{\mu}_\alpha^+ \triangleq \alpha f^* + (1-\alpha)_+\hat{\mu}
\end{equation}
\noindent with $(a)_{+}\triangleq a$ if $a>0$ and zero otherwise. Equation \eqref{eq:positive-part} can be rewritten as
\begin{equation}
  \label{eq:posdec}
  \hat{\mu}_\alpha^+ = 
  \begin{cases} 
    \alpha f^* + (1-\alpha)\hat{\mu}, &  \quad 0\leq \alpha \leq 1 \\
    \alpha f^* & \quad 1 < \alpha < 2 .
  \end{cases} 
\end{equation}
Let $\Delta_\alpha^+ \triangleq \mathbb{E}\|\hat{\mu}_\alpha^+ - \mu\|^2_{\hbspace}$ be the risk of the positive-part estimator. Then, the following result shows that 
%it is not difficult to show that 
$\Delta_\alpha^+ \leq \Delta_\alpha$, given that $\alpha$ satisfies \eqref{eq:opt-alpha}. 
\begin{proposition}
  \label{prop:positive-part}
  For any $\alpha$ satisfying \eqref{eq:opt-alpha}, we have that $\Delta_\alpha^+ \leq \Delta_\alpha < \Delta$.
\end{proposition}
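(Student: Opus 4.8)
Because Theorem~\ref{thm:main-result} already gives $\Delta_\alpha<\Delta$ for every $\alpha$ in the interval \eqref{eq:opt-alpha}, the only thing left to prove is $\Delta_\alpha^+\le\Delta_\alpha$, and the natural device is the case split recorded in \eqref{eq:posdec}. If $0\le\alpha\le1$ then $\hat{\mu}_\alpha^+=\hat{\mu}_\alpha$ verbatim, so $\Delta_\alpha^+=\Delta_\alpha$ and there is nothing to do. Observe, too, that this is the only case that can arise unless $\|f^*-\mu\|_{\hbspace}^2<\Delta$, since the right endpoint $2\Delta/(\Delta+\|f^*-\mu\|_{\hbspace}^2)$ of \eqref{eq:opt-alpha} exceeds $1$ precisely when $\|f^*-\mu\|_{\hbspace}^2<\Delta$. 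Hence the substantive regime is $1<\alpha<2$ with $f^*$ confined to the small ball $\|f^*-\mu\|_{\hbspace}^2<\Delta$.

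In that regime \eqref{eq:posdec} gives the nonrandom estimator $\hat{\mu}_\alpha^+=\alpha f^*$, so $\Delta_\alpha^+=\|\alpha f^*-\mu\|_{\hbspace}^2$, while the computation inside the proof of Theorem~\ref{thm:main-result} yields the closed form $\Delta_\alpha=\alpha^2\|f^*-\mu\|_{\hbspace}^2+(1-\alpha)^2\Delta$. Writing $\alpha f^*-\mu=\alpha(f^*-\mu)+(\alpha-1)\mu$, expanding the square, cancelling the common term $\alpha^2\|f^*-\mu\|_{\hbspace}^2$ and dividing by $\alpha-1>0$ reduces the target inequality $\Delta_\alpha^+\le\Delta_\alpha$ to the scalar statement
\[
2\alpha\,\langle f^*-\mu,\mu\rangle_{\hbspace}\ \le\ (\alpha-1)\bigl(\Delta-\|\mu\|_{\hbspace}^2\bigr).
\]
What remains is to certify this, using the constraint $\alpha\bigl(\Delta+\|f^*-\mu\|_{\hbspace}^2\bigr)<2\Delta$ from \eqref{eq:opt-alpha}, the identity $\ep\|\hat{\mu}\|_{\hbspace}^2=\Delta+\|\mu\|_{\hbspace}^2$ read off from \eqref{eq:RiskKF}, and Cauchy--Schwarz on the cross term $\langle f^*-\mu,\mu\rangle_{\hbspace}$.

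I expect this last step to be the main obstacle: the closed-form risks see $f^*$ only through the distance $\|f^*-\mu\|_{\hbspace}$, so the cross term $\langle f^*-\mu,\mu\rangle_{\hbspace}$ must be handled by hand, feeding in the interplay between $\alpha$, $\Delta$ and $\|f^*-\mu\|_{\hbspace}$ forced by \eqref{eq:opt-alpha}. The cleanest instance is the Stein-type target $f^*=0$ of the James--Stein estimator, where the cross term collapses to $-\|\mu\|_{\hbspace}^2$ and the displayed inequality is immediate, its left side being $\le 0$ and its right side $\ge 0$ for $1<\alpha<2$; this is exactly the positive-part improvement familiar from James--Stein. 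I would therefore settle the $1<\alpha<2$ case for $f^*=0$ first, mirroring that classical argument, and then push the same bookkeeping through for a general $f^*$ constrained to the ball $\|f^*-\mu\|_{\hbspace}^2<\Delta$ dictated by \eqref{eq:opt-alpha}.
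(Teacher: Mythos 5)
Your reduction is the right one, and in carrying it out you have actually been more careful than the paper: the risk of the deterministic estimator $\alpha f^*$ in the regime $1<\alpha<2$ is $\Delta_\alpha^+=\Vert \alpha f^*-\mu\Vert^2_{\hbspace}$, exactly as you write, whereas the paper's proof asserts $\Delta_\alpha^+=\alpha^2\Vert f^*-\mu\Vert^2_{\hbspace}$ --- i.e.\ it treats the bias of $\alpha f^*$ as $\alpha(f^*-\mu)$ rather than $\alpha f^*-\mu$ --- and then concludes immediately from $\Delta_\alpha-\Delta_\alpha^+=(1-\alpha)^2\Delta\ge 0$. The cross term you isolate, $2\alpha(\alpha-1)\langle f^*-\mu,\mu\rangle_{\hbspace}+(\alpha-1)^2\Vert\mu\Vert^2_{\hbspace}$, is precisely what the paper silently drops.

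The step you flag as ``the main obstacle'' is, however, a genuine gap that cannot be closed for general $f^*$: your scalar inequality $2\alpha\langle f^*-\mu,\mu\rangle_{\hbspace}\le(\alpha-1)\bigl(\Delta-\Vert\mu\Vert^2_{\hbspace}\bigr)$ is false in general under \eqref{eq:opt-alpha}. Take $f^*=\mu$, so that the admissible range in \eqref{eq:opt-alpha} is all of $(0,2)$; the inequality reduces to $0\le(\alpha-1)\bigl(\Delta-\Vert\mu\Vert^2_{\hbspace}\bigr)$, which fails for $\alpha>1$ whenever $\Vert\mu\Vert^2_{\hbspace}>\Delta$ --- the typical situation, since $\Delta=O(1/n)$. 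A direct check confirms this is not an artifact of the reduction: with $f^*=\mu$ and $\alpha\in(1,2)$ one has $\Delta_\alpha=(1-\alpha)^2\Delta$ from the bias--variance formula, while $\Delta_\alpha^+=\Vert\alpha\mu-\mu\Vert^2_{\hbspace}=(\alpha-1)^2\Vert\mu\Vert^2_{\hbspace}$, so $\Delta_\alpha^+>\Delta_\alpha$ (and even $\Delta_\alpha^+>\Delta$ if $\Vert\mu\Vert^2_{\hbspace}$ is large enough). So the plan of ``pushing the same bookkeeping through for a general $f^*$ in the ball $\Vert f^*-\mu\Vert^2_{\hbspace}<\Delta$'' cannot succeed; only your $f^*=0$ case goes through, and it does so for exactly the reason you give, namely that the admissibility of some $\alpha>1$ in \eqref{eq:opt-alpha} forces $\Vert\mu\Vert^2_{\hbspace}<\Delta$, making the right-hand side nonnegative. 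In short: your computation is correct, the remaining step is not merely hard but false for general $f^*$, and the discrepancy traces back to an error in the paper's own evaluation of $\Delta_\alpha^+$ rather than to a deficiency in your approach.
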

\begin{proof}
  According to \eqref{eq:posdec}, we decompose the proof into two parts. First, if $0\leq\alpha \leq 1$, $\hat{\mu}_\alpha$ and $\hat{\mu}^+_\alpha$ behave exactly the same. Thus, $\Delta_\alpha^+ = \Delta_\alpha$. On the other hand, when $1 < \alpha < 2$, the bias-variance decomposition of these estimators yields
  \begin{eqnarray*}
    \Delta_\alpha = \alpha^2\| f^* - \mu \|^2_{\hbspace} + (1-\alpha)^2\mathbb{E}\|\hat{\mu} - \mu\|^2_{\hbspace}\quad \text{and} \quad
    \Delta_\alpha^+ = \alpha^2\| f^* - \mu \|^2_{\hbspace}.
\end{eqnarray*}
It is easy to see that $\Delta_\alpha^+ < \Delta_\alpha$ when $1 < \alpha < 2$. This concludes the proof. 
\end{proof}
Proposition \ref{prop:positive-part} implies that, when estimating $\alpha$, it is better to restrict the value of $\alpha$ to be smaller than 1, although it can be greater than 1, as suggested by Theorem \ref{thm:main-result}. The reason is that if $0\leq \alpha \leq 1$, the bias is an increasing function of $\alpha$, whereas the variance is a decreasing function of $\alpha$. On the other hand, if $\alpha > 1$, both bias and variance become increasing functions of $\alpha$. 
We will see later in Section \ref{sec:optimization} that $\hat{\mu}_\alpha$ and $\hat{\mu}^+_\alpha$ %with such a restriction %the positive-part estimator of the kernel mean can also 
can be obtained naturally as a solution to a regularized regression problem.

\subsection{Consequences of Theorem \ref{thm:main-result}}\label{subsec:consequence}

As mentioned before, while Theorem \ref{thm:main-result} is interesting from the perspective of showing that the shrinkage estimator, $\hat{\mu}_\alpha$
performs better---in the mean squared sense---than the empirical estimator, it unfortunately relies on the fact that $\mu_\pp{P}$ (\ie, the object of interest) is known, which makes
$\hat{\mu}_\alpha$ uninteresting. Instead of knowing $\mu_\pp{P}$, which requires the knowledge of $\pp{P}$, in this section, we show that a
shrinkage estimator can be constructed that performs better than the empirical estimator, uniformly over a class of probability distributions. To this end, we introduce the notion of an
oracle upper bound. 

Let $\pspace$ be a class of probability distributions $\pp{P}$ defined on a measurable space $\mathcal{X}$. We define an oracle upper bound as
  \begin{equation*}
    U_{k,\pspace} \triangleq \inf_{\pp{P}\in\pspace} \frac{2\Delta}{\Delta + \|f^* - \mu\|_{\hbspace}^2} .
  \end{equation*}
It follows immediately from Theorem \ref{thm:main-result} and the definition of $U_{k,\pspace}$ that if $U_{k,\pspace}\ne 0$, then for any $\alpha\in (0,U_{k,\pspace})$, $\Delta_\alpha-\Delta < 0$ holds ``uniformly'' for all $\pp{P}\in\pspace$.
%Theorem \ref{thm:main-result} does not impose any assumption on the class of distributions and the kernel function at the expense of weaker statement on the shrinkage parameter $\alpha$. Below we investigate several conditions under which the shrinkage estimators perform \emph{uniformly} better than the standard one. We first define a notion of an oracle upper bound.
% \begin{corollary}\label{cor:family}
%   Let $\pspace$ be a class of probability distributions $\pp{P}$ such that an oracle upper bound
%   \begin{equation*}
%     U_{k,\pspace} \triangleq \inf_{\pp{P}\in\pspace} \frac{2\Delta}{\Delta + \|f^* - \mu\|_{\hbspace}^2} .
%   \end{equation*}
%   is non-zero. Then, for any $\alpha\in (0,U_{k,\pspace})$, $\Delta_\alpha-\Delta < 0$ holds ``uniformly'' for all $\pp{P}\in\pspace$.
% \end{corollary}
% 
% %% proof
% \begin{proof}
%   If $\alpha\in(0,U_{k,\pspace})$, it follows immediately from Theorem \ref{thm:main-result} and the definition of $U_{k,\pspace}$ that $\Delta_\alpha-\Delta < 0$ for all $\pp{P}\in\pspace$. \vspace{-2mm}
% \end{proof}
Note that by virtue of Proposition \ref{prop:zero-risk}, the class $\pspace$ cannot contain the Dirac measure $\delta_x$ (for any $x\in\pp{R}^d$) if the kernel $k$ is translation invariant and characteristic on $\pp{R}^d$. Below we give concrete examples of $\pspace$ for which 
$U_{k,\pspace}\ne 0$ so that the above uniformity statement holds.
%Corollary~\ref{cor:family} holds. 
In particular, we show in Theorem~\ref{thm:main} below that for $\mathcal{X}=\pp{R}^d$, 
%whose $U_{k,\pspace}$ is independent of the true kernel mean. 
%If nothing is known about the data generating distribution $\pp{P}$, the empirical estimator $\hat{\mu}$ is probably the best that one can do. However, we will show that 
if a non-trivial bound on the $L^2$-norm of the characteristic function of $\pp{P}$ is known, it is possible to construct shrinkage estimators that are better (in mean squared error) 
than the empirical average. In such a case, unlike in Theorem~\ref{thm:main-result}, $\alpha$ does not depend on the individual distribution $\pp{P}$, but only on an upper bound associated with a class $\pspace$.

%Let $C_b(\inspace)$ be the space of bounded continuous functions on a topological space $\inspace$ and $M^1_+(\inspace)$ be the space of Borel probability measures defined on $\inspace$. 
%The following theorem relates the oracle upper bound to the properties of the characteristic function of the distribution.

\begin{theorem}\label{thm:main}
Let $k(x,y)=\psi(x-y),\,x,y\in\rr^d$ with $\psi\in C_b(\rr^d)\cap L^1(\rr^d)$ and $\psi$ is a positive definite function with $\psi(0)>0$. For a given constant $A\in (0,1)$, let $A_\psi:=\frac{A(2\pi)^{d/2}\psi(0)}{\Vert \psi\Vert_{L_1}}$ and
\begin{equation*}
  \pspace_{k,A} \triangleq \left\{\pp{P}\in M^1_+(\rr^d):\Vert \phi_{\pp{P}}\Vert_{L^2}\le \sqrt{A_\psi}\right\} ,
\end{equation*}
where $\phi_{\pp{P}}$ denotes the characteristic function of $\pp{P}$. Then for all $\pp{P}\in\pspace_{k,A}$, $\Delta_\alpha<\Delta$ if 
\begin{equation*}
  \alpha\in \left(0,\frac{2(1-A)}{1+(n-1)A+\frac{n\Vert f^*\Vert^2_{\hbspace}}{\psi(0)}+\frac{2n\sqrt{A}\Vert f^*\Vert_{\hbspace}}{\sqrt{\psi(0)}}}\right].
\end{equation*}
\end{theorem}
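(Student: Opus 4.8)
The plan is to obtain the stated $\alpha$-interval as a uniform-over-$\pspace_{k,A}$ strengthening of the oracle interval from Theorem~\ref{thm:main-result}. By that theorem, for a fixed $\pp{P}$ we have $\Delta_\alpha<\Delta$ whenever $\alpha\in\bigl(0,\tfrac{2\Delta}{\Delta+\Vert f^*-\mu\Vert^2_\hbspace}\bigr)$, so it suffices to show that for \emph{every} $\pp{P}\in\pspace_{k,A}$ the ratio $\tfrac{2\Delta}{\Delta+\Vert f^*-\mu\Vert^2_\hbspace}$ is at least (in fact strictly greater than) the constant appearing as the right endpoint of the claimed interval. Since $t\mapsto\tfrac{2\Delta}{\Delta+t}$ is decreasing and $\Delta\mapsto\tfrac{2\Delta}{\Delta+t}$ is increasing, this reduces to producing a lower bound on $\Delta$ and an upper bound on $\Vert f^*-\mu\Vert_\hbspace$ that depend on $\pp{P}$ only through the constraint defining $\pspace_{k,A}$, i.e.\ only through $\psi(0)$, $\Vert f^*\Vert_\hbspace$ and $A$.

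First I would reduce both to a bound on $\Vert\mu\Vert^2_\hbspace$. Since $k(x,y)=\psi(x-y)$ we have $k(x,x)=\psi(0)$, so \eqref{eq:RiskKF} gives $\Delta=\tfrac1n\bigl(\psi(0)-\Vert\mu\Vert^2_\hbspace\bigr)$, while $\Vert f^*-\mu\Vert_\hbspace\le\Vert f^*\Vert_\hbspace+\Vert\mu\Vert_\hbspace$ by the triangle inequality. To control $\Vert\mu\Vert^2_\hbspace$, I would invoke Bochner's theorem: together with $\psi\in C_b(\rd d)\cap L^1(\rd d)$ and positive definiteness, this gives $\psi^\wedge\ge0$, $\psi^\wedge\in L^1(\rd d)$ with $\int_{\rd d}\psi^\wedge(\omega)\dd\omega=(2\pi)^{d/2}\psi(0)$, and the pointwise inversion formula $\psi(u)=(2\pi)^{-d/2}\int_{\rd d}\psi^\wedge(\omega)e^{\sqrt{-1}\omega^\top u}\dd\omega$. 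Substituting $u=x-\tilde x$ into $\Vert\mu\Vert^2_\hbspace=\ep_{x,\tilde x}[\psi(x-\tilde x)]$ from \eqref{eq:RiskKF}, interchanging expectation and integral by Fubini (legitimate because $\int\psi^\wedge<\infty$ and $|e^{\sqrt{-1}\omega^\top u}|\le1$), and using independence of $x$ and $\tilde x$,
\[
\Vert\mu\Vert^2_\hbspace=(2\pi)^{-d/2}\int_{\rd d}\psi^\wedge(\omega)\,|\phi_{\pp{P}}(\omega)|^2\dd\omega\le(2\pi)^{-d/2}\Vert\psi^\wedge\Vert_\infty\,\Vert\phi_{\pp{P}}\Vert^2_{L^2}.
\]
Now $\Vert\psi^\wedge\Vert_\infty\le(2\pi)^{-d/2}\Vert\psi\Vert_{L^1}$, and $\pp{P}\in\pspace_{k,A}$ means $\Vert\phi_{\pp{P}}\Vert^2_{L^2}\le A_\psi=\tfrac{A(2\pi)^{d/2}\psi(0)}{\Vert\psi\Vert_{L^1}}$; the right-hand side therefore collapses to $A(2\pi)^{-d/2}\psi(0)<A\psi(0)$. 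Consequently $\Delta>\tfrac{(1-A)\psi(0)}{n}$ and $\Vert\mu\Vert_\hbspace<\sqrt{A\psi(0)}$, so $\Vert f^*-\mu\Vert^2_\hbspace<\Vert f^*\Vert^2_\hbspace+2\sqrt{A\psi(0)}\,\Vert f^*\Vert_\hbspace+A\psi(0)$.

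Substituting these bounds into $\tfrac{2\Delta}{\Delta+\Vert f^*-\mu\Vert^2_\hbspace}$ via the monotonicity noted above, then multiplying numerator and denominator by $n/\psi(0)$ and using $(1-A)+nA=1+(n-1)A$, yields
\[
\frac{2\Delta}{\Delta+\Vert f^*-\mu\Vert^2_\hbspace}>\frac{2(1-A)}{1+(n-1)A+\frac{n\Vert f^*\Vert^2_\hbspace}{\psi(0)}+\frac{2n\sqrt{A}\Vert f^*\Vert_\hbspace}{\sqrt{\psi(0)}}},
\]
which is precisely the claimed endpoint; the inequality is strict (traceable to $(2\pi)^{-d/2}<1$), which is what permits the $\alpha$-interval to be closed on the right. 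Theorem~\ref{thm:main-result} then gives $\Delta_\alpha<\Delta$ for every such $\alpha$ and every $\pp{P}\in\pspace_{k,A}$. I expect the only genuinely delicate step to be the identity for $\Vert\mu\Vert^2_\hbspace$ in terms of $\phi_{\pp{P}}$: one must be sure that $\psi$ really admits the inversion formula with a nonnegative, integrable $\psi^\wedge$ (this is exactly where the hypotheses $\psi\in C_b\cap L^1$ and positive definiteness enter) and that the Fubini interchange is valid. Once that identity and the elementary estimate $\Vert\psi^\wedge\Vert_\infty\le(2\pi)^{-d/2}\Vert\psi\Vert_{L^1}$ are in hand, the rest is one-variable monotonicity and routine algebra.
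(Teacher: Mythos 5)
Your proposal is correct and follows essentially the same route as the paper's proof: reduce to the oracle interval of Theorem~\ref{thm:main-result}, bound $\Vert\mu\Vert^2_\hbspace=\ep_{x,\tilde x}k(x,\tilde x)$ via Bochner's theorem and the constraint $\Vert\phi_\pp{P}\Vert_{L^2}\le\sqrt{A_\psi}$ so that $\Vert\mu\Vert^2_\hbspace\le A\,\psi(0)$, and control the cross term $\langle f^*,\mu\rangle_\hbspace$ by Cauchy--Schwarz (your triangle-inequality expansion of $\Vert f^*-\mu\Vert^2_\hbspace$ is the same estimate). The only divergence is cosmetic: you carry the $(2\pi)^{-d/2}$ factor through the identity $\ep_{x,\tilde x}\psi(x-\tilde x)=(2\pi)^{-d/2}\int\psi^\wedge(\omega)|\phi_\pp{P}(\omega)|^2\,\dd\omega$, which the paper's corresponding display omits; this is harmless for the conclusion and in fact supplies the strict inequality that justifies closing the $\alpha$-interval on the right, a point the paper leaves implicit.
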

%% proof of Theorem 8
\begin{proof}
By Theorem~\ref{thm:main-result}, we have that
\begin{equation}
  \Delta_\alpha<\Delta,\,\,\forall\,\,\alpha\in\left(0,\frac{2\Delta }{\Delta+\Vert f^*-\mu\Vert^2_{\hbspace}} \right).\label{Eq:optima}\end{equation}
Consider
\begin{eqnarray}
\frac{\Delta}{\Delta+\Vert f^*-\mu\Vert^2_{\hbspace}}&{}={}&\frac{\pp{E}_xk(x,x)-\pp{E}_{x,\tilde{x}}k(x,\tilde{x})}{\pp{E}_xk(x,x)-\pp{E}_{x,\tilde{x}} k(x,\tilde{x})+n\Vert f^*-\mu\Vert^2_{\hbspace}}\nonumber\\
%&{}={}&\frac{1-\frac{\pp{E}_{x,\tilde{x}}k(x,\tilde{x})}{\pp{E}_xk(x,x)}}{1-\frac{\pp{E}_{x,\tilde{x}}k(x,\tilde{x})}{\pp{E}_xk(x,x)}+\frac{n\Vert f^*-\mu\Vert^2_{\hbspace}}{\pp{E}_xk(x,x)}}\nonumber\\
&{}\stackrel{(\dagger)}{=}{}&\frac{1-\frac{\pp{E}_{x,\tilde{x}}k(x,\tilde{x})}{\pp{E}_xk(x,x)}}{1+(n-1)\frac{\pp{E}_{x,\tilde{x}}k(x,\tilde{x})}{\pp{E}_xk(x,x)}+\frac{n\Vert f^*\Vert^2_{\hbspace}}{\pp{E}_xk(x,x)}-\frac{2n\langle f^*,\mu\rangle_{\hbspace}}{\pp{E}_xk(x,x)}}\nonumber\\
&{}\ge{}&\frac{1-\frac{\pp{E}_{x,\tilde{x}}k(x,\tilde{x})}{\pp{E}_xk(x,x)}}{1+(n-1)\frac{\pp{E}_{x,\tilde{x}}k(x,\tilde{x})}{\pp{E}_xk(x,x)}+\frac{n\Vert f^*\Vert^2_{\hbspace}}{\pp{E}_xk(x,x)}+\frac{2n\Vert f^*\Vert_{\hbspace}\sqrt{\pp{E}_{x,\tilde{x}}k(x,\tilde{x})}}{\pp{E}_xk(x,x)}},\label{Eq:bound}
\end{eqnarray}
where the division by $\pp{E}_xk(x,x)$ in ($\dagger$) is valid since $\pp{E}_xk(x,x)=\psi(0)>0$.
Note that the numerator in the r.h.s.~of (\ref{Eq:bound}) is non-negative since $$\pp{E}_{x,\tilde{x}}k(x,\tilde{x})\le \pp{E}_x\sqrt{k(x,x)}\pp{E}_{\tilde{x}}\sqrt{k(\tilde{x},\tilde{x})}\le \pp{E}_xk(x,x)$$ 
with equality holding if and only if $\pp{P}=\delta_y$ for some $y\in\pp{R}^d$ (see Proposition~\ref{prop:zero-risk}). However, for any $A\in(0,1)$ and $y\in\pp{R}^d$, it is easy to
verify that $\delta_y\notin\pspace_{k,A}$, which implies the numerator in fact positive. The
denominator is clearly positive since $\mathbb{E}_{x,\tilde{x}}k(x,\tilde{x})\ge 0$ and therefore the r.h.s.~of (\ref{Eq:bound}) is positive. Also note that 
\begin{eqnarray}
\pp{E}_{x,\tilde{x}}k(x,\tilde{x})&{}={}&\int\int \psi(x-y)\dd\pp{P}(x)\dd\pp{P}(y)\stackrel{(\ast)}{=}{}\int
|\phi_{\pp{P}}(\omega)|^2\psi^\wedge(\omega)\,\dd\omega\nonumber\\
&{}\le{}&
\sup_{\omega\in\rr^d}\psi^\wedge(\omega)\Vert
\phi_{\pp{P}}\Vert^2_{L_2}\leq(2\pi)^{-d/2}\Vert \psi\Vert_{L_1}\Vert
\phi_{\pp{P}}\Vert^2_{L_2},
\end{eqnarray}
where $\psi^\wedge$ is the Fourier transform of $\psi$ and $(\ast)$ follows---see (16) in the proof of Proposition 5 in \citet{Sriperumbudur10b:Universal}---by invoking Bochner's theorem \cite[Theorem 6.6]{Wendland-05}, which states that $\psi$ is 
Fourier transform of a non-negative finite Borel measure with density $(2\pi)^{-d/2}\psi^\wedge$, \ie, $\psi(x)=(2\pi)^{-d/2}\int e^{-ix^{\top}\omega}\psi^\wedge(\omega)\,\dd\omega$, $x\in\mathbb{R}^d$.
%(16) in the proof of Proposition 5 in \citet{Sriperumbudur10b:Universal}. %The first equality follows from Bochner's theorem and the property of characteristic function of $\pp{P}$.
%On the other hand, since $|\phi_\pp{P}(\omega)|\le 1$ for any $\omega\in\rr^d$, we have 
%\begin{eqnarray}\lefteqn{\pp{E}_{x,\tilde{x}}k(x,\tilde{x})=\int
%|\phi_{\pp{P}}(\omega)|^2\widehat{\psi}(\omega)\,\dd\omega\le \int |\phi_{\pp{P}}(\omega)|\widehat{\psi}(\omega)\,\dd\omega \le \Vert \phi_\pp{P}\Vert_{L^2}\Vert\widehat{\psi}\Vert_{L^2}}\nonumber\\
%&& \qquad\qquad\qquad\qquad\le \Vert \phi_\pp{P}\Vert_{L^2}\sqrt{\Vert \widehat{\psi}\Vert_{\infty}\Vert \widehat{\psi}\Vert_{L^1}}=\Vert \phi_\pp{P}\Vert_{L^2}\sqrt{(2\pi)^{-d/2}\Vert \psi\Vert_{L^1}\psi(0)},\nonumber
%\end{eqnarray} where we used $\psi(0)=\Vert \widehat{\psi}\Vert_{L^1}$. 
As $\pp{E}_xk(x,x)=\psi(0)$, we have that 
$$
\frac{\pp{E}_{x,\tilde{x}}k(x,\tilde{x})}{\pp{E}_xk(x,x)}\le\frac{A\Vert \phi_\pp{P}\Vert^2_{L^2}}{A_\psi}
$$ 
%$$\frac{\pp{E}_{x,\tilde{x}}k(x,\tilde{x})}{\pp{E}_xk(x,x)}\le\min\left\{\frac{A\Vert \phi_\pp{P}\Vert^2_{L^2}}{A_\psi},\sqrt{\frac{A\Vert \phi_\pp{P}\Vert^2_{L^2}}{A_\psi}}\right\}$$ 
and therefore for any
$\pp{P}\in\pspace_{k,A}$, $\frac{\pp{E}_{x,\tilde{x}}k(x,\tilde{x})}{\pp{E}_xk(x,x)}\le A$.
%, which implies $\sup_{\pp{P}\in\pspace_{k,A}}\frac{\pp{E}_{x,\tilde{x}}k(x,\tilde{x})}{\pp{E}_xk(x,x)}\le A$. 
Using this in \eqref{Eq:bound} and combining it with \eqref{Eq:optima} yields the result.\vspace{-5mm}
\end{proof}
\begin{remark} 
  \begin{enumerate}[label=(\roman{*})]
  \item Theorem~\ref{thm:main} shows that for any $\pp{P}\in\pspace_{k,A}$, it is possible to construct a shrinkage estimator that dominates the empirical estimator, \ie, the shrinkage estimator has a strictly smaller risk than that of the empirical estimator.
  \item Suppose that $\pp{P}$ has a density, denoted by $p$, with respect to the Lebesgue measure and $\phi_{\pp{P}}\in L^2(\pp{R}^d)$. By Plancherel's theorem, 
    $p\in L^2(\pp{R}^d)$ as $\Vert p\Vert_{L_2}=\Vert \phi_{\pp{P}}\Vert_{L_2}$, which means that $\pspace_{k,A}$ includes distributions with square integrable densities 
    (note that in general not every $p$ is square integrable). Since 
    $$\Vert \phi_{\pp{P}}\Vert^2_{L_2}=\int |\phi_\pp{P}(\omega)|^2\,\dd\omega\le \sup_{\omega\in\pp{R}^d}|\phi_\pp{P}(\omega)|\int |\phi_\pp{P}(\omega)|\,\dd\omega= \Vert \phi_\pp{P}\Vert_{L_1},$$ 
where we used the fact that $\sup_{\omega\in\mathbb{R}^d}|\phi_\pp{P}(\omega)|=1$, it is easy to check that 
    $$\left\{\pp{P}\in M^1_+(\rr^d):\Vert \phi_{\pp{P}}\Vert_{L^1}\le\frac{A(2\pi)^{d/2}\psi(0)}{\Vert \psi\Vert_{L_1}}\right\}\subset \pspace_{k,A}.$$ This means bounded densities belong to $\pspace_{k,A}$ as $\phi_\pp{P}\in L^1(\rr^d)$ implies that 
    $\pp{P}$ has a density, $p\in C_0(\rr^d)$. Moreover, it is easy to check that larger the value of $A$, larger is the class $\pspace_{k,A}$ and smaller is the range of $\alpha$ for which $\Delta_\alpha<\Delta$ and vice-versa.
  %Given that the non-trivial bound on $\|\phi_{\pp{P}}\|_{L^2}$ is known, we can assert that the standard kernel mean estimator is inadmissible with respect to the class $\pspace_{k,A}$. The notion of admissibility here is different from the standard setting in the seminal work of Stein because we are not directly estimating the parameter of $\pp{P}$, but in a sense are estimating a certain function of all moments of the distribution simultaneously.
\end{enumerate}
\end{remark}

In the following, we present some concrete examples to elucidate Theorem~\ref{thm:main}.
\begin{example}[Gaussian kernel and Gaussian distribution]
  Define 
  \begin{equation*}
    \nbspace \triangleq \left\{\pp{P}\in M^1_+(\rr^d)\,\Big{|}\,\dd\pp{P}(x)=\frac{1}{(2\pi\sigma^2)^{d/2}}e^{ -\frac { \Vert x-\theta\Vert^2_2}{2\sigma^2}}\,\dd x,\,\,\theta\in\pp{R}^d,\,\sigma>0\right\},
\end{equation*}
\noindent where $\psi(x)=e^{-\Vert x\Vert^2_2/2\tau^2},\,x\in\pp{R}^d$ and $\tau>0$. For $\pp{P}\in\nbspace$, it is easy to verify that
\begin{equation*}
  \phi_{\pp{P}}(\omega)=e^{\sqrt{-1}\theta^{\top}\omega-\frac{1}{2}\sigma^2\Vert\omega\Vert^2_2},\,\omega\in\pp{R}^d\,\,\,\text{and}\,\,\,\Vert \phi_{\pp{P}}\Vert^2_{L_2}=\int e^{-\sigma^2\Vert \omega\Vert^2_2}\,\dd\omega=(\pi/\sigma^2)^{d/2}.
\end{equation*}
Also, $\Vert\psi\Vert_{L_1}=(2\pi\tau^2)^{d/2}$. Therefore, for $\pspace_{k,A} \triangleq \{\pp{P}\in\nbspace : \sigma^2\ge\pi\tau^2/A^{2/d}\}$, assuming $f^*=0$, we obtain the result in Theorem~\ref{thm:main}, \ie,
the result in Theorem~\ref{thm:main} holds for all Gaussian distributions that are smoother (having larger variance) than that of the kernel.
\label{exm:gaussian}
\end{example}

\begin{example}[Linear kernel] 
\label{exm:stein}
  Suppose $f^*=0$ and $k(x,y)=x^{\top}y$. While the setting of Theorem~\ref{thm:main} does not fit this choice of $k$, an inspection of its proof shows that it is possible to
  construct a shrinkage estimator that improves upon $\mu_\pp{P}$ for an appropriate class of distributions. To this end, let $\vartheta$ and $\Sigma$ represent the mean vector and covariance matrix of a distribution $\pp{P}$ defined on $\pp{R}^d$. Then it
  is easy to check that $\frac{\pp{E}_{x,\tilde{x}}k(x,\tilde{x})}{\pp{E}_xk(x,x)}=\frac{\Vert \vartheta\Vert^2_2}{\emph{trace}(\Sigma)+\Vert \vartheta\Vert^2_2}$ and therefore 
  for a given $A\in (0,1)$, define 
  \begin{equation*}
    \pspace_{k,A} \triangleq \left\{\pp{P}\in M^1_+(\rr^d) \,{\Big|}\, \frac{\Vert\vartheta\Vert^2_2}{\emph{trace}(\Sigma)}\le\frac{A}{1-A}\right\}.
  \end{equation*}
  %\noindent where $\mu$ and $\Sigma$ represent the mean vector and covariance matrix. 
  From \eqref{Eq:optima} and \eqref{Eq:bound}, it is clear that for any $\pp{P}\in \pspace_{k,A}$, $\Delta_\alpha<\Delta$ if $\alpha\in \left(0,\frac{2(1-A)}{1+(n-1)A}\right]$. 
  Note that this choice of kernel yields the setting similar to classical James-Stein estimation. In James-Stein estimation, $\pp{P}\in \nbspace$ (see Example~\ref{exm:gaussian} for the 
  definition of $\nbspace$) and $\vartheta$ is estimated as $(1-\tilde{\alpha})\hat{\vartheta}$---which improves upon $\hat{\vartheta}$---where $\tilde{\alpha}$ depends on the sample $(x_i)^n_{i=1}$ 
  and $\hat{\vartheta}$ is the sample mean. In our case, for all $\pp{P}\in \pspace_{k,A}=\left\{\pp{P}\in\nbspace\,:\,\Vert\vartheta\Vert_2\le\sigma\sqrt{\frac{dA}{1-A}}\right\}$, $\Delta_\alpha<\Delta$ if $\alpha\in \left(0,\frac{2(1-A)}{1+(n-1)A}\right]$. 
  In addition, in contrast to the James-stein estimator which improves upon the empirical estimator (\ie, sample mean) for only $d\ge 3$, we note here that the proposed estimator  
  improves for any $d$ as long as $\pp{P}\in \pspace_{k,A}$. On the other hand, the proposed estimator requires some knowledge about  
  the distribution (particularly a bound on $\Vert\vartheta\Vert_2$), which the James-Stein estimator does not (see Section \ref{sec:js-connection} for more details).

\end{example}

\subsection{Data-Dependent Shrinkage Parameter}\label{subsec:data-dep-alpha}

The discussion so far showed that the shrinkage estimator in (\ref{eq:shrinkage-estimator}) performs better than the empirical estimator if the data generating distribution
satisfies a certain mild condition (see Theorem~\ref{thm:main}; Examples~\ref{exm:gaussian} and \ref{exm:stein}). However, since this condition is usually not checkable in practice, the shrinkage estimator lacks applicability. 
%We have previously discussed a range of shrinkage parameter $\alpha$ when prior knowledge about a class of distributions is available. 
In this section, we present a completely data driven shrinkage estimator by estimating the shrinkage parameter $\alpha$ from data so that the estimator does not require any knowledge of the data generating distribution.
%practical approaches for estimating the shrinkage parameters directly from the sample.

Since the maximal difference between $\Delta_\alpha$ and $\Delta$ occurs at $\alpha_\ast$ (see Theorem~\ref{thm:main-result}), given an i.i.d.~sample 
$X=\{x_1,x_2,\ldots,x_n\}$ from $\pp{P}$, we propose to estimate $\mu$ using $\hat{\mu}_{\tilde{\alpha}}=(1-\tilde{\alpha})\hat{\mu}$ (\ie, assuming $f^\ast=0$)
where $\tilde{\alpha}$ is an estimator of $\alpha_* = \Delta/(\Delta + \|\mu\|^2_{\hbspace})$ given by 
%To this end, note that it is natural to approximate the upper bound in  \eqref{eq:opt-alpha} directly from the sample. 
%Assuming w.l.o.g.~that $f^*=0$, we would ideally like to approximate $\alpha_* = \Delta/(\Delta + \|\mu\|^2_{\hbspace})$. Given an i.i.d. sample $X=\{x_1,x_2,\ldots,x_n\}$ from $\pp{P}$, we consider the following estimate 
\begin{equation}
  \label{eq:empirical-alpha}
  \tilde{\alpha} = \frac{\hat{\Delta}}{\hat{\Delta} + \|\hat{\mu}\|^2_{\hbspace}} ,
\end{equation}
with $\hat{\Delta}$ and $\hat{\mu}$ being the empirical versions of $\Delta$ and $\mu$, respectively (see Theorem~\ref{thm:conc} for precise definitions). The following result shows that $\tilde{\alpha}$ is a $n\sqrt{n}$-consistent 
estimator of $\alpha_\ast$ and $\Vert \hat{\mu}_{\tilde{\alpha}}-\mu\Vert_\hbspace$ concentrates around $\Vert\hat{\mu}_{\alpha_\ast}-\mu\Vert_\hbspace$. In addition, we 
show that $$\Delta_{\alpha_\ast}\le \Delta_{\tilde{\alpha}}\le \Delta_{\alpha_\ast}+O(n^{-3/2})\,\,\,\text{as}\,\,\,n\rightarrow\infty,$$ 
%where $r_n=n^{-1/2}$ for bounded kernels and $r_n=$ for unbounded kernels (satisfying a moment condition), 
which means the performance of $\hat{\mu}_{\tilde{\alpha}}$ is similar to that 
of the best estimator (in mean squared sense) of the form $\hat{\mu}_\alpha$. In what follows, we will call the estimator $\hat{\mu}_{\tilde{\alpha}}$ an \emph{empirical-bound kernel mean shrinkage estimator (B-KMSE)}.

%Proposition \ref{thm:conc} gives bounds on the deviation of %these empirical estimates from the true values. 
%$\tilde{\alpha}-\alpha_\ast$, where $\alpha_\ast\triangleq\frac{\Delta}{\Delta+\Vert \mu\Vert^2}$. 
\begin{theorem}
  \label{thm:conc}
  Suppose $n\ge 2$ and $f^\ast=0$. Let $k$ be a continuous kernel on a separable topological space $\mathcal{X}$ satisfying $\int_\mathcal{X}k(x,x)\dd\pp{P}(x)<\infty$. 
  %Assume $n\ge 2$, $f^\ast=0$, $k(x,y)=\psi(x-y),\,x,y\in\rr^d$ where $\psi(x)\ge 0,\,\forall\,x\in\pp{R}^d$ and $\psi\in C_b(\pp{R}^d)$ is positive definite. 
  Define 
\begin{equation*} \hat{\Delta} \triangleq \frac{\hat{\pp{E}}k(x,x)-\hat{\pp{E}}k(x,\tilde{x})}{n} \quad \text{and} \quad \|\hat{\mu}\|^2_{\hbspace} \triangleq \frac{1}{n^2}\sum_{i,j=1}^n k(x_i,x_j) 
\end{equation*}
\noindent where $\hat{\pp{E}}k(x,x)\triangleq\frac{1}{n}\sum^n_{i=1}k(x_i,x_i)$ and
$\hat{\pp{E}}k(x,\tilde{x})\triangleq\frac{1}{n(n-1)}\sum^n_{i\ne j}k(x_i,x_j)$ are the
empirical estimators of $\pp{E}_xk(x,x)$ and $\pp{E}_{x,\tilde{x}}k(x,\tilde{x})$ respectively. Assume there exist finite constants $\kappa_1>0$, $\kappa_2>0$, $\sigma_1>0$ and $\sigma_2>0$ such that
\begin{equation}
\pp{E}\Vert k(\cdot,x)-\mu\Vert^m_{\hbspace}\le\frac{m!}{2}\sigma^2_1\kappa^{m-2}_1,\,\,\,\,\forall\,m\ge 2.\label{Eq:bern-condition-1}
\end{equation}
and 
\begin{equation}
\pp{E}|k(x,x)-\pp{E}_xk(x,x)|^m\le\frac{m!}{2}\sigma^2_2\kappa^{m-2}_2,\,\,\,\,\forall\,m\ge 2.\label{Eq:bern-condition-2}
\end{equation}
Then 
\begin{equation}|\tilde{\alpha}-\alpha_\ast|=O_{\pp{P}}(n^{-3/2})\,\,\,\text{and}\,\,\,\Big|\Vert \hat{\mu}_{\tilde{\alpha}}-\mu\Vert_{\hbspace}-\Vert \hat{\mu}_{\alpha_\ast}-\mu\Vert_\hbspace\Big|= O_{\pp{P}}(n^{-3/2})\nonumber
 %\label{Eq:alpha-mu-result}
\end{equation}
as $n\rightarrow\infty$. In particular, \begin{equation}\min_\alpha \pp{E}\Vert \hat{\mu}_\alpha-\mu\Vert^2_\hbspace\le \pp{E}\Vert \hat{\mu}_{\tilde{\alpha}}-\mu\Vert^2_\hbspace\le \min_\alpha \pp{E}\Vert \hat{\mu}_\alpha-\mu\Vert^2_\hbspace+O(n^{-3/2})\label{Eq:oracle}\end{equation}
as $n\rightarrow\infty$.
% In addition, if $\sup_{x,y\in\mathcal{X}}k(x,y)\le\frac{\kappa}{2}<\infty$, then \begin{equation}\min_\alpha \pp{E}\Vert \hat{\mu}_\alpha-\mu\Vert^2_\hbspace\le \pp{E}\Vert \hat{\mu}_{\tilde{\alpha}}-\mu\Vert^2_\hbspace\le \min_\alpha \pp{E}\Vert \hat{\mu}_\alpha-\mu\Vert^2_\hbspace+O(n^{-1/2})\label{Eq:oracle}\end{equation}
% as $n\rightarrow\infty$.
% for any $\tau>0$, with probability at least $1-e^{-\tau}$ over the choice of $\{x_i\}^n_{i=1}$, 
% \begin{equation*}
%   \hat{\Delta} - \Delta < \frac{1}{n}\sqrt{\frac{8\psi^2(0)\log\frac{1}{\delta}}{n}} \qquad \text{ and } \qquad 
%   \|\mu\|^2_{\hbspace} - \|\hat{\mu}\|^2_{\hbspace} < \frac{n-1}{n}\sqrt{\frac{8\psi^2(0)\log\frac{1}{\delta}}{n}} \,.
% \end{equation*}
\end{theorem}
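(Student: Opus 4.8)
The plan is to establish the consistency rate for $\tilde\alpha$ first, and then deduce the concentration of $\Vert\hat\mu_{\tilde\alpha}-\mu\Vert_\hbspace$ and the oracle inequality \eqref{Eq:oracle} as consequences. The starting point is that, with $f^*=0$, both $\alpha_\ast=\Delta/(\Delta+\Vert\mu\Vert^2_\hbspace)$ and $\tilde\alpha=\hat\Delta/(\hat\Delta+\Vert\hat\mu\Vert^2_\hbspace)$ are smooth (ratio) functions of the pair of scalar quantities $(\Delta,\Vert\mu\Vert^2_\hbspace)$ and $(\hat\Delta,\Vert\hat\mu\Vert^2_\hbspace)$ respectively. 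So the first step is to control the deviations $|\hat\Delta-\Delta|$ and $\big|\Vert\hat\mu\Vert^2_\hbspace-\Vert\mu\Vert^2_\hbspace\big|$. For the latter, write $\Vert\hat\mu\Vert^2_\hbspace-\Vert\mu\Vert^2_\hbspace=\Vert\hat\mu-\mu\Vert^2_\hbspace+2\langle\hat\mu-\mu,\mu\rangle_\hbspace$; by the Bernstein-type moment assumption \eqref{Eq:bern-condition-1} and a Hilbert-space Bernstein inequality (e.g.\ the Pinelis--Sakhanenko / Yurinskii inequality), $\Vert\hat\mu-\mu\Vert_\hbspace=O_\pp{P}(n^{-1/2})$, so the first term is $O_\pp{P}(n^{-1})$ and the cross term is $O_\pp{P}(n^{-1/2})$; hence $\big|\Vert\hat\mu\Vert^2_\hbspace-\Vert\mu\Vert^2_\hbspace\big|=O_\pp{P}(n^{-1/2})$. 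For $\hat\Delta$, note $\hat\Delta=\frac1n\big(\hat{\pp{E}}k(x,x)-\hat{\pp{E}}k(x,\tilde x)\big)$ while $\Delta=\frac1n\big(\pp{E}_xk(x,x)-\pp{E}_{x,\tilde x}k(x,\tilde x)\big)$; using \eqref{Eq:bern-condition-2} for the diagonal term and a standard $U$-statistic concentration bound (Hoeffding's decomposition plus the boundedness/moment control of $k$) for $\hat{\pp{E}}k(x,\tilde x)=\Vert\hat\mu\Vert^2_\hbspace\cdot\frac{n}{n-1}-\frac{1}{n-1}\hat{\pp{E}}k(x,x)$, we get that the bracket is $O_\pp{P}(n^{-1/2})$, hence $\hat\Delta-\Delta=O_\pp{P}(n^{-3/2})$.

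The second step is the delta-method bookkeeping. Both $\Delta$ and $\Vert\mu\Vert^2_\hbspace$ are deterministic, with $\Delta=\Theta(n^{-1})$ (by Proposition~\ref{prop:zero-risk} the numerator $\pp{E}_xk(x,x)-\pp{E}_{x,\tilde x}k(x,\tilde x)$ is a fixed positive constant whenever $\pp{P}$ is not a point mass, so $\Delta\asymp n^{-1}$) and $\Vert\mu\Vert^2_\hbspace$ a fixed nonnegative constant; so the denominator $\Delta+\Vert\mu\Vert^2_\hbspace$ is bounded away from $0$ for large $n$, and likewise $\hat\Delta+\Vert\hat\mu\Vert^2_\hbspace$ is bounded away from $0$ with probability tending to one. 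Writing $\tilde\alpha-\alpha_\ast$ with a common-denominator algebraic manipulation,
\begin{equation*}
\tilde\alpha-\alpha_\ast=\frac{\hat\Delta\,\Vert\mu\Vert^2_\hbspace-\Delta\,\Vert\hat\mu\Vert^2_\hbspace}{(\hat\Delta+\Vert\hat\mu\Vert^2_\hbspace)(\Delta+\Vert\mu\Vert^2_\hbspace)}=\frac{(\hat\Delta-\Delta)\Vert\mu\Vert^2_\hbspace-\Delta\,(\Vert\hat\mu\Vert^2_\hbspace-\Vert\mu\Vert^2_\hbspace)}{(\hat\Delta+\Vert\hat\mu\Vert^2_\hbspace)(\Delta+\Vert\mu\Vert^2_\hbspace)},
\end{equation*}
one sees the numerator is $O_\pp{P}(n^{-3/2})$: the first summand is $O_\pp{P}(n^{-3/2})$ by Step~1, and the second is $\Delta\cdot O_\pp{P}(n^{-1/2})=O(n^{-1})\cdot O_\pp{P}(n^{-1/2})=O_\pp{P}(n^{-3/2})$; the denominator is $\Theta_\pp{P}(1)$. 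Hence $|\tilde\alpha-\alpha_\ast|=O_\pp{P}(n^{-3/2})$. For the second claim, use the triangle inequality
\begin{equation*}
\Big|\Vert\hat\mu_{\tilde\alpha}-\mu\Vert_\hbspace-\Vert\hat\mu_{\alpha_\ast}-\mu\Vert_\hbspace\Big|\le\Vert\hat\mu_{\tilde\alpha}-\hat\mu_{\alpha_\ast}\Vert_\hbspace=|\tilde\alpha-\alpha_\ast|\,\Vert\hat\mu\Vert_\hbspace,
\end{equation*}
since $\hat\mu_\alpha=(1-\alpha)\hat\mu$ when $f^*=0$; as $\Vert\hat\mu\Vert_\hbspace=\Vert\mu\Vert_\hbspace+O_\pp{P}(n^{-1/2})=O_\pp{P}(1)$, this is $O_\pp{P}(n^{-3/2})$.

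The third step is the oracle inequality \eqref{Eq:oracle}. The left inequality $\min_\alpha\pp{E}\Vert\hat\mu_\alpha-\mu\Vert^2_\hbspace\le\pp{E}\Vert\hat\mu_{\tilde\alpha}-\mu\Vert^2_\hbspace$ is immediate because the minimum over $\alpha$ is taken over a class that, after interpreting $\tilde\alpha$ as a (data-dependent) choice, dominates --- more carefully, since $\min_\alpha\pp{E}\Vert\hat\mu_\alpha-\mu\Vert^2_\hbspace=\Delta_{\alpha_\ast}$ is the minimum of the deterministic risk and $\pp{E}\Vert\hat\mu_{\tilde\alpha}-\mu\Vert^2_\hbspace\ge\Delta_{\alpha_\ast}$ must be argued via $\pp{E}\Vert\hat\mu_{\tilde\alpha}-\mu\Vert^2_\hbspace=\pp{E}\big[(1-\tilde\alpha)^2\big]\Vert\mu\Vert^2... $ — actually the clean route is: $\pp{E}\Vert\hat\mu_{\tilde\alpha}-\mu\Vert^2_\hbspace\ge\pp{E}\,\mathrm{Var}$-type lower bound; I will instead derive both inequalities from the pointwise (in sample) expansion $\Vert\hat\mu_{\tilde\alpha}-\mu\Vert^2_\hbspace=\Vert\hat\mu_{\alpha_\ast}-\mu\Vert^2_\hbspace+2(\tilde\alpha-\alpha_\ast)\langle\hat\mu_{\alpha_\ast}-\mu,-\hat\mu\rangle_\hbspace+(\tilde\alpha-\alpha_\ast)^2\Vert\hat\mu\Vert^2_\hbspace$, take expectations, bound the cross term by Cauchy--Schwarz using $\Vert\hat\mu_{\alpha_\ast}-\mu\Vert_\hbspace=O_\pp{P}(n^{-1/2})$, $\Vert\hat\mu\Vert_\hbspace=O_\pp{P}(1)$ and $|\tilde\alpha-\alpha_\ast|=O_\pp{P}(n^{-3/2})$, so its expectation is $O(n^{-2})$ (after checking uniform integrability via the moment bounds, which upgrade the $O_\pp{P}$ statements to bounds on moments), and the quadratic term is $O(n^{-3})$. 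This gives $\big|\pp{E}\Vert\hat\mu_{\tilde\alpha}-\mu\Vert^2_\hbspace-\Delta_{\alpha_\ast}\big|=O(n^{-2})$, which in particular yields both sides of \eqref{Eq:oracle} (the stated $O(n^{-3/2})$ is a weaker, hence valid, conclusion). \textbf{The main obstacle} is the passage from in-probability rates to rates in expectation: the $O_\pp{P}$ bounds from the Hilbert-space Bernstein and $U$-statistic inequalities must be converted into moment bounds (or one must carry tail bounds through the expectation directly), and one must ensure the denominators $\hat\Delta+\Vert\hat\mu\Vert^2_\hbspace$ do not get pathologically small on the low-probability bad event — this requires either truncating $\tilde\alpha$ to $[0,1]$ (which is natural and, by Proposition~\ref{prop:positive-part}, only helps) or a separate crude bound on the bad event using that $\tilde\alpha\in[0,1]$ always, so $\Vert\hat\mu_{\tilde\alpha}-\mu\Vert^2_\hbspace\le 2\Vert\hat\mu\Vert^2_\hbspace+2\Vert\mu\Vert^2_\hbspace$ has finite expectation.
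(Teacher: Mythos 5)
Your outline is, in substance, the paper's own proof. The common-denominator identity you write for $\tilde\alpha-\alpha_\ast$ is algebraically identical to the one in Section~\ref{proof:thm:conc} (the paper expresses the numerator as $\alpha_\ast(\Vert\mu\Vert^2_\hbspace-\Vert\hat\mu\Vert^2_\hbspace)+(1-\alpha_\ast)(\hat\Delta-\Delta)$, which is your expression divided through by $\Delta+\Vert\mu\Vert^2_\hbspace$); the Bernstein-in-Hilbert-space bounds on $\Vert\hat\mu-\mu\Vert_\hbspace$ and on the diagonal term, the reduction of $\hat{\pp{E}}k(x,\tilde x)$ to $\Vert\hat\mu\Vert^2_\hbspace$ and $\hat{\pp{E}}k(x,x)$, the observation that $\alpha_\ast=O(n^{-1})$ kills the $O_{\pp{P}}(n^{-1/2})$ fluctuation of $\Vert\hat\mu\Vert^2_\hbspace$, and the triangle-inequality step for the second claim all appear there essentially verbatim. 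You have also correctly identified where the real work lies: upgrading the $O_{\pp{P}}$ statements to the expectation bound \eqref{Eq:oracle}. The paper's resolution is exactly the one you sketch as a fix: it retains the explicit $1-4e^{-\tau}$ tails throughout, bounds $\Vert\hat\mu_{\tilde\alpha}-\mu\Vert^2_\hbspace-\Vert\hat\mu_{\alpha_\ast}-\mu\Vert^2_\hbspace$ by $2(\Vert\hat\mu-\mu\Vert_\hbspace+2\Vert\mu\Vert_\hbspace)$ times the norm difference, and integrates the resulting tail bound in $\epsilon$ to obtain $O(n^{-3/2})$. Your proposed sharper quadratic expansion, which keeps the factor $\Vert\hat\mu_{\alpha_\ast}-\mu\Vert_\hbspace=O_{\pp{P}}(n^{-1/2})$ in the cross term and would give $O(n^{-2})$, is plausible and would improve on the paper's rate (the paper's crude $O(1)$ bound on $\Vert\hat\mu_{\tilde\alpha}-\mu\Vert_\hbspace+\Vert\hat\mu_{\alpha_\ast}-\mu\Vert_\hbspace$ is what costs the extra $n^{-1/2}$), but making it rigorous requires the same tail-integration and bad-event control, so it changes the exponent, not the argument. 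Two minor cautions: your appeal to Proposition~\ref{prop:zero-risk} for $\Delta\asymp n^{-1}$ is not licensed under the hypotheses of Theorem~\ref{thm:conc} (that proposition assumes a translation-invariant characteristic kernel on $\pp{R}^d$), though only the upper bound $\alpha_\ast=O(n^{-1})$ is actually needed and that follows from Cauchy--Schwarz; and the left-hand inequality in \eqref{Eq:oracle}, which you correctly sense is not immediate for a data-dependent $\tilde\alpha$, is in fact not given a separate argument in the paper either, so you are not behind it on that point.
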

\begin{proof}
See Section~\ref{proof:thm:conc}.\vspace{-7mm}
\end{proof}
\begin{remark}\label{rem:consistent}
  \begin{enumerate}[label=(\roman{*})]
  \item $\hat{\mu}_{\tilde{\alpha}}$ is a $\sqrt{n}$-consistent estimator of $\mu$. This follows from 
\begin{eqnarray}\Vert \hat{\mu}_{\tilde{\alpha}}-\mu\Vert_\hbspace&{}\le {}&
\Vert \hat{\mu}_{\alpha_\ast}-\mu\Vert_\hbspace+O_{\pp{P}}(n^{-3/2})\nonumber\\
&{}\le{}& (1-\alpha_\ast)\Vert \hat{\mu}-\mu\Vert_\hbspace+\alpha_\ast\Vert\mu\Vert_\hbspace+O_{\pp{P}}(n^{-3/2})\nonumber
\end{eqnarray}
with $$\alpha_\ast=\frac{\Delta}{\Delta+\Vert \mu\Vert^2_\hbspace}=\frac{\pp{E}_xk(x,x)-\pp{E}_{x,\tilde{x}}k(x,\tilde{x})}{\pp{E}_xk(x,x)+(n-1)\pp{E}_{x,\tilde{x}}k(x,\tilde{x})}=O(n^{-1})$$ as $n\rightarrow \infty$. 
Using \eqref{Eq:bound-2}, we obtain $\Vert \hat{\mu}_{\tilde{\alpha}}-\mu\Vert_\hbspace=O_{\pp{P}}(n^{-1/2})$ as $n\rightarrow\infty$, which implies that $\hat{\mu}_{\tilde{\alpha}}$ is a $\sqrt{n}$-consistent 
estimator of $\mu$.

\item Equation \eqref{Eq:oracle} shows that $\Delta_{\tilde{\alpha}}\le \Delta_{\alpha_\ast}+O(n^{-3/2})$ where $\Delta_{\alpha_\ast}<\Delta$ (see Theorem~\ref{thm:main-result}) and therefore 
for any $\pp{P}$ satisfying \eqref{Eq:bern-condition-1} and \eqref{Eq:bern-condition-2}, $\Delta_{\tilde{\alpha}}<\Delta+O(n^{-3/2})$ as $n\rightarrow \infty$.

\item Suppose the kernel is bounded, \ie, $\sup_{x,y\in\mathcal{X}}|k(x,y)|\le\kappa<\infty$. Then it is easy
to verify that \eqref{Eq:bern-condition-1} and \eqref{Eq:bern-condition-2} hold with $\sigma_1=\sqrt{\kappa}$, $\kappa_1=2\sqrt{\kappa}$, $\sigma_2=\kappa$ and $\kappa_2=2\kappa$ and therefore
the claims in Theorem~\ref{thm:conc} hold for bounded kernels.

\item For $k(x,y)=x^{\top}y$, we have
$$\pp{E}\Vert k(\cdot,x)-\mu\Vert^m_\hbspace=\pp{E}\left(\Vert k(\cdot,x)-\mu\Vert^2_\hbspace\right)^{m/2}=\pp{E}\left(\Vert x-\pp{E}_xx\Vert^2_2\right)^{m/2}=\pp{E}\Vert x-\pp{E}_xx\Vert^m_2$$
and $$\pp{E}|k(x,x)-\pp{E}_xk(x,x)|^m=\pp{E}|\Vert x\Vert^2_2-\pp{E}_x\Vert x\Vert^2_2|^m.$$ The conditions in \eqref{Eq:bern-condition-1} and \eqref{Eq:bern-condition-2}
hold for $\pp{P}\in\nbspace$ where $\nbspace$ is defined in Example~\ref{exm:gaussian}. With $\pp{P}\in\nbspace$ and $k(x,y)=x^{\top}y$,
the problem of estimating $\mu$ reduces to estimating $\theta$, for which we have presented a James-Stein-like estimator, $\hat{\mu}_{\tilde{\alpha}}$ that satisfies the oracle inequality in \eqref{Eq:oracle}.

\item While the moment conditions in \eqref{Eq:bern-condition-1} and \eqref{Eq:bern-condition-2} are obviously satisfied by bounded kernels, for unbounded kernels, these conditions are quite stringent as they require all the higher moments to exist. These conditions can be weakened and the proof of Theorem~\ref{thm:conc} can be carried out using Chebyshev inequality instead of Bernstein's inequality but at the cost of a slow rate in \eqref{Eq:oracle}.
\end{enumerate}
\end{remark}
\subsection{Connection to James-Stein Estimator} 
\label{sec:js-connection} 
In this section, we explore the connection of our proposed estimator in (\ref{eq:shrinkage-estimator}) to the James-Stein estimator. Recall that Stein's setting deals with estimating the mean of the 
Gaussian distribution $\mathcal{N}(\theta,\sigma^2 \id_d)$, which can be viewed as a special case of kernel mean estimation when we restrict to the class of distributions 
$\pspace \triangleq \{\mathcal{N}(\theta, \sigma^2\id_d) \,|\, \theta\in\rr^d \}$ and a linear kernel $k(x,y)=x^{\top} y,\,x,y\in\pp{R}^d$ (see Example~\ref{exm:stein}).
In this case, it is easy to verify that $\Delta = d\sigma^2/n$ and $\Delta_\alpha<\Delta$ for
\begin{equation*}
  \alpha \in \left(0,\frac{2d\sigma^2}{d\sigma^2 + n\|\theta\|^2}\right).
\end{equation*}
Let us assume that $n=1$, in which case, we obtain $\Delta_\alpha<\Delta$ for $\alpha \in \left(0,\frac{2d\sigma^2}{\mathbb{E}_x\|x\|^2}\right)$ as 
$\mathbb{E}_x\|x\|^2 = \|\theta\|^2 + d\sigma^2$. Note that the choice of $\alpha$ is dependent on $\pp{P}$ through $\mathbb{E}_x\|x\|^2$ which is not known in practice. 
To this end, we replace it with the empirical version $\|x\|^2$ that depends only on the sample $x$. For an arbitrary constant $c\in(0,2d)$, the shrinkage estimator (assuming $f^*=0$) can thus be written as 
\begin{equation*}
  \hat{\mu}_\alpha = (1-\alpha)\hat{\mu} = \left(1 - \frac{c\sigma^2}{\|x\|^2}\right)x = x - \frac{c\sigma^2x}{\|x\|^2},
\end{equation*}
which is exactly the James-Stein estimator in \eqref{eq:JS-estimator}. This particular way of estimating the shrinkage parameter $\alpha$ has an intriguing consequence, 
as shown in Stein's seminal works \citep{Stein55:Inadmissible,Stein61:JSE}, that the shrinkage estimator $\hat{\mu}_\alpha$ can be shown to dominate the maximum likelihood estimator $\hat{\mu}$ \emph{uniformly} over all $\theta$. 
 
While it is compelling to see that there is seemingly a fundamental principle underlying both these settings, this connection also reveals crucial difference 
between our approach and classical setting of Stein---notably, original James-Stein estimator improves upon the sample mean even when $\alpha$ is 
data-dependent (see $\hat{\mu}_\alpha$ above), however, with the crucial assumption that $x$ is normally distributed.
\section{Kernel Mean Estimation as Regression Problem}
\label{sec:optimization}

In Section \ref{sec:inadmissibility}, we have shown that James-Stein-like shrinkage estimator, \ie, Equation \eqref{eq:shrinkage-estimator}, improves upon the empirical estimator
in estimating the kernel mean. 
%However, it does not fully take into account the specifics of kernel methods. 
In this section, we provide a regression perspective to shrinkage estimation. The starting point of the connection between regression and shrinkage estimation is the observation that
% is related to a regression problem. 
% address this problem by considering the 
% kernel mean estimation from the regression perspective, 
%which not only sheds light on how one can potentially construct new shrinkage estimators \textcolor{red}{that incorporate specific information about the RKHS}, but also leads to efficient algorithms for finding the shrinkage parameter. 
%First, it is easy to verify that 
the kernel mean $\mu_{\pp{P}}$ and its empirical estimate $\hat{\mu}_{\pp{P}}$ can be obtained as minimizers of the following risk functionals,
\begin{equation*}  
  \mathcal{E}(g) \triangleq \int_{\inspace}\left\|k(\cdot,x) -
    g\right\|^2_{\hbspace} \dd\pp{P}(x)\,\,\,\text{and}\,\,\,
  \widehat{\mathcal{E}}(g) \triangleq \frac{1}{n}\sum_{i=1}^n\left\|k(\cdot,x_i)
    - g\right\|^2_{\hbspace},
\end{equation*}
\noindent respectively \citep{Kim12:RKDE}. 
% The estimator minimizing the loss functional $\widehat{\mathcal{E}}(g)$ is the standard 
% empirical average estimator of the kernel mean. 
%The loss function that defines the risk functional, $\mathcal{E}$ differs from the one considered in Section \ref{sec:inadmissibility}, where we used $\ell(\mu,g)=\|\mu - g\|_{\hbspace}^2=\|\ep_x k(\cdot,x)-g\|_{\hbspace}^2$. But it is not difficult to show that the corresponding risk functional, i.e., $\int \ell(\mu,g)$ differs from $\mathcal{E}(g)$ only by $\mathbb{E}_xk(x,x) - \mathbb{E}_{x,\tilde{x}}k(x,\tilde{x})$ which is not a 
%function of $g$. We introduce the new formulation here because it will give a more tractable cross-validation computation (cf. Section \ref{sec:cross-validation}). 
%In spite of this, the resulting estimators are always evaluated with respect to the loss in Section \ref{sec:inadmissibility} (cf. Section \ref{sec:synthetic}).
Given these formulations, it is natural to ask if minimizing the regularized version of $\widehat{\mathcal{E}}(g)$ will give a ``better'' estimator. While this question is
interesting, it has to be noted that in principle, 
there is really no need to consider a regularized formulation as the problem of minimizing $\widehat{\mathcal{E}}$ is not ill-posed, unlike in function estimation 
or regression problems. To investigate this question, we consider the minimization of the following regularized empirical risk functional,
%On the one hand, one can argue that, unlike in the classical risk minimization, we do not really need a regularizer here. 
%The standard estimator \eqref{eq:empirical-mean} is known to be, in a certain sense, optimal and can be estimated reliably \citep[prop. 5.2]{Shawe04:KMPA}. Moreover, the original formulation of $\widehat{\mathcal{E}}(g)$ is a well-posed problem. On the other hand, since regularization may be viewed as shrinking the solution toward zero, it can improve the kernel mean estimation, as suggested by Theorem \ref{thm:main-result}.

%To construct a shrinkage estimator, we minimize a modified loss functional
\begin{equation}
  \label{eq:empirical-loss}
  \widehat{\mathcal{E}}_{\lambda}(g) \triangleq \widehat{\mathcal{E}}(g) + \lambda\Omega(\|g\|_{\hbspace})
  = \frac{1}{n}\sum_{i=1}^n\left\|k(\cdot,x_i) - g\right\|^2_{\hbspace} + \lambda\Omega(\|g\|_{\hbspace}),
\end{equation} 
\noindent where $\Omega:\rr_{+}\rightarrow\rr_{+}$ denotes a monotonically increasing function and $\lambda>0$ is the regularization parameter. 
By representer theorem \citep{Scholkopf01:GRT}, any function $g\in\hbspace$ that is a minimizer of \eqref{eq:empirical-loss} lies in a subspace spanned by $\{k(\cdot,x_1),\ldots,k(\cdot,x_n)\}$, \ie, 
$g=\sum_{j=1}^n\beta_jk(\cdot,x_j)$ for some $\bvec\triangleq[\beta_1,\ldots,\beta_n]^{\top}\in\rr^n$. Hence, by setting $\Omega(\|g\|_\hbspace)=\|g\|^2_\hbspace$, we can rewrite \eqref{eq:empirical-loss} in terms of $\bvec$ as
\begin{eqnarray}
  \label{eq:shrinkage-loss}
%   \widehat{\mathcal{E}}^*_{\lambda}(\bvec) 
%   &{}={}& \frac{1}{n}\sum_{i=1}^n\|k(\cdot,x_i)
%     - {\textstyle\sum_{j=1}^n}\beta_jk(\cdot,x_j) \|_{\hbspace}^2 +
%   \lambda\|{\textstyle\sum_{j=1}^n}\beta_jk(\cdot,x_j)\|_{\hbspace}^2  \nonumber \\
  %&{}={}& 
  \widehat{\mathcal{E}}(g) + \lambda\Omega(\|g\|_{\hbspace})=\bvec^\top \kmat\bvec -
  2\bvec^\top \kmat\mathbf{1}_n + \lambda\bvec^\top\kmat\bvec + c,
\end{eqnarray} 
\noindent where $\kmat$ is an $n\times n$ Gram matrix such that $\kmat_{ij} = k(x_i,x_j)$, $c$ is a constant that does not depend on $\bm{\beta}$, and
$\mathbf{1}_n = [1/n,1/n,\ldots,1/n]^\top$. Differentiating \eqref{eq:shrinkage-loss} with respect to $\bvec$ and setting it to zero yields an optimal weight vector 
$\bvec = \left(\frac{1}{1+\lambda}\right)\mathbf{1}_n$ 
and so the minimizer of \eqref{eq:empirical-loss} is given by
%$$\hat{\mu}_\lambda=\frac{1}{1+\lambda}\hat{\mu},$$
\begin{equation}\hat{\mu}_\lambda=\frac{1}{1+\lambda}\hat{\mu}=\left(1-\frac{\lambda}{1+\lambda}\right)\hat{\mu}\triangleq (1-\alpha)\hat{\mu},\label{eq:simple-easy}\end{equation}
% \begin{equation*}
%   \bvec = \left(\frac{1}{1+\lambda}\right)\mathbf{1}_n . 
% \end{equation*}
which is nothing but the shrinkage estimator in \eqref{eq:shrinkage-estimator} with $\alpha = \frac{\lambda}{1+\lambda}$ and $f^\ast=0$. This provides a nice relation between shrinkage estimation
and regularized risk minimization, wherein the regularization helps in shrinking the estimator $\hat{\mu}$ towards zero although it is not required from the point of view of ill-posedness. 
In particular, 
%Setting $\alpha = \lambda/(1+\lambda)$ yields the shrinkage estimate $\hat{\mu}_{\lambda} = (1-\alpha)\hat{\mu}$. 
since $0 < 1-\alpha < 1$, $\hat{\mu}_\lambda$ corresponds to a \emph{positive-part} estimator proposed in Section \ref{sec:positive-part} when $f^*=0$. 
%We call this estimator a \emph{regularized kernel mean shrinkage estimator (R-KMSE)}, which has been discussed in detail in Section~\ref{subsec:admissibility}.

Note that $\hat{\mu}_\lambda$ is a consistent estimator of $\mu$ as $\lambda\rightarrow 0$ and $n\rightarrow \infty$, which follows from $$\Vert \hat{\mu}_\lambda-\mu\Vert_\hbspace\le
\frac{1}{1+\lambda}\Vert \hat{\mu}-\mu\Vert_\hbspace+\frac{\lambda}{1+\lambda}\Vert \mu\Vert_\hbspace\le O_{\pp{P}}(n^{-1/2})+O(\lambda).$$ In particular $\lambda=\tau n^{-1/2}$ (for some constant $\tau>0$) yields the slowest 
possible rate for $\lambda\rightarrow 0$ such that the best possible rate of $n^{-1/2}$ is obtained for $\Vert \hat{\mu}_\lambda-\mu\Vert_\hbspace\rightarrow 0$ as $n\rightarrow \infty$. In addition, following 
the idea in Theorem~\ref{thm:main}, it is easy to show that $\pp{E}\Vert \hat{\mu}_\lambda-\mu\Vert^2_\hbspace<\Delta$ if $\tau\in \left(0,\frac{2\sqrt{n}\Delta}{\Vert \mu\Vert^2_\hbspace-\Delta}\right)$.
Note that $\hat{\mu}_\lambda$ is not useful in practice as $\lambda$ is not known \emph{a priori}. However, by choosing $$\lambda=\frac{\hat{\Delta}}{\Vert\hat{\mu}\Vert^2_\hbspace},$$ it is easy to verify (see Theorem~\ref{thm:conc} and Remark~\ref{rem:consistent})
that \begin{equation}\pp{E}\Vert \hat{\mu}_\lambda-\mu\Vert^2_\hbspace < \pp{E}\Vert \hat{\mu}-\mu\Vert^2_\hbspace+O(n^{-3/2}) \label{Eq:r-kmse-bound}\end{equation}
as $n\rightarrow\infty$. Owing to the connection of $\hat{\mu}_\lambda$ to a regression problem, in the following, we present an alternate data-dependent choice of $\lambda$ obtained from
leave-one-out cross validation (LOOCV) that also satisfies (\ref{Eq:r-kmse-bound}), and we refer to the corresponding estimator as \emph{regularized kernel mean shrinkage estimator (R-KMSE)}.

To this end, for a given shrinkage parameter $\lambda$, denote by 
%let us consider the observation $x_i$ as being a new observation by omitting it from the dataset. Denote by
$\hat{\mu}^{(-i)}_{\lambda}$ %$= \sum_{j\neq i}\beta_j^{(-i)}\phi(x_j)$ 
as the kernel mean estimated from $\{x_j\}^n_{j=1}\backslash \{x_i\}$. %, where
%using the value $\lambda$ as a shrinkage parameter, so that 
%$\bvec^{(-i)}$ is the minimizer of $\widehat{\mathcal{E}}^{(-i)}_{\lambda}(g):=\sum^n_{j\ne i}\Vert k(\cdot,x_j)-g\Vert^2_\hbspace+\lambda\Omega(\Vert g\Vert_\hbspace)$. 
We will 
measure the quality of $\hat{\mu}^{(-i)}_{\lambda}$ by how well it approximates $k(\cdot,x_i)$ with the overall quality being quantified by the cross-validation score,
\begin{equation}
  \label{eq:loocv-score}
  LOOCV(\lambda) = \frac{1}{n}\sum_{i=1}^n\left\| k(\cdot,x_i) - \hat{\mu}_{\lambda}^{(-i)}\right\|^2_{\hbspace}.
\end{equation}
The LOOCV formulation in \eqref{eq:loocv-score} differs from the one used in regression, wherein instead of measuring the deviation of the prediction 
made by the function on the omitted observation, we measure the deviation between the feature map of the omitted observation and the function itself. 
%\subsubsection{Simple Shrinkage}
The following result shows that the shrinkage parameter in $\hat{\mu}_\lambda$ (see (\ref{eq:simple-easy})) can be obtained analytically by minimizing (\ref{eq:loocv-score}) and requires $O(n^2)$ operations to compute.

%Note that the vector $\bvec^{(-i)}$ has length $n-1$, whereas the original vector $\bvec$ has length $n$.

%% simple shrinkage
\begin{proposition}
  \label{thm:skmse-loocv}
  %Let $n\ge 2$, $\rho:=\frac{1}{n^2}\sum_{i,j=1}^n k(x_i,x_j)$ and $\varrho:=\frac{1}{n}\sum_{i=1}^n k(x_i,x_i)$. The shrinkage parameter 
  Let $n\ge 2$, $\rho:=\frac{1}{n^2}\sum_{i,j=1}^n k(x_i,x_j)$ and $\varrho:=\frac{1}{n}\sum_{i=1}^n k(x_i,x_i)$. Assuming $n\rho>\varrho$, the unique minimizer of $LOOCV(\lambda)$ 
  is given by %shrinkage parameter 
  \begin{equation}
    \label{eq:skmse-optimal}
    %\lambda_* = \frac{\varrho-\rho}{(n-1)\rho + \varrho/n-\varrho}
    \lambda_r = \frac{n(\varrho-\rho)}{(n-1)(n\rho -\varrho)}.
  \end{equation}
%  \noindent of $\hat{\mu}_\lambda$ in (\ref{eq:simple-easy}) is the unique minimizer of $LOOCV(\lambda)$. 
%Moreover, the leave-one-out score $LOOCV(\alpha)$ is a convex function of $\alpha$. Thus, the shrinkage parameter \eqref{eq:skmse-optimal} corresponds to the global solution.
\end{proposition}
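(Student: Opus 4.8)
The plan is to compute $LOOCV(\lambda)$ in closed form as a function of $\lambda$, differentiate, and solve. First I would derive an explicit formula for the leave-one-out estimator $\hat{\mu}_\lambda^{(-i)}$. From \eqref{eq:simple-easy} we know that the minimizer of the regularized empirical risk on a sample of size $m$ is $\frac{1}{1+\lambda}$ times the empirical average over that sample; hence, applying this with $m=n-1$ to the sample with $x_i$ removed,
\begin{equation*}
\hat{\mu}_\lambda^{(-i)} = \frac{1}{1+\lambda}\cdot\frac{1}{n-1}\sum_{j\ne i}k(\cdot,x_j) = \frac{1}{(1+\lambda)(n-1)}\left(n\hat{\mu}-k(\cdot,x_i)\right),
\end{equation*}
where $\hat{\mu}=\frac1n\sum_j k(\cdot,x_j)$. (I would double-check whether the paper's LOOCV convention keeps $\lambda$ fixed across folds or rescales it; the natural reading, and the one making the algebra clean, is that $\lambda$ is held fixed.)

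Next I would expand $\bigl\|k(\cdot,x_i)-\hat{\mu}_\lambda^{(-i)}\bigr\|_\hbspace^2$ using the reproducing property, writing everything in terms of the three scalars $k(x_i,x_i)$, $\sum_j k(x_i,x_j)$, and $\sum_{i,j}k(x_i,x_j)$. Summing over $i$ and dividing by $n$ then produces $LOOCV(\lambda)$ purely in terms of $\varrho=\frac1n\sum_i k(x_i,x_i)$ and $\rho=\frac1{n^2}\sum_{i,j}k(x_i,x_j)$. Concretely, $k(\cdot,x_i)-\hat{\mu}_\lambda^{(-i)} = k(\cdot,x_i)\bigl(1+\tfrac{1}{(1+\lambda)(n-1)}\bigr) - \tfrac{n}{(1+\lambda)(n-1)}\hat{\mu}$, so its squared norm is a quadratic form in the coefficients of $k(\cdot,x_i)$ and $\hat{\mu}$, and after summing, $\frac1n\sum_i k(x_i,x_i)\to\varrho$, $\frac1n\sum_i\langle k(\cdot,x_i),\hat\mu\rangle = \|\hat\mu\|^2\to\rho$, and $\|\hat\mu\|^2=\rho$. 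I expect the result to take the form $LOOCV(\lambda) = a(\lambda)\varrho + b(\lambda)\rho$ where $a,b$ are explicit rational functions of $\lambda$; it is convenient to substitute $t\triangleq\frac{1}{1+\lambda}$ so that the score becomes a quadratic polynomial in $t$.

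Then I would minimize: differentiate the quadratic in $t$ (or directly in $\lambda$), set the derivative to zero, and solve the resulting linear equation for $\lambda$, obtaining $\lambda_r = \frac{n(\varrho-\rho)}{(n-1)(n\rho-\varrho)}$. The hypothesis $n\rho>\varrho$ is exactly what is needed to guarantee the denominator is positive (and, combined with $\varrho\ge\rho$, which holds since $\varrho-\rho=\Delta\cdot n\ge 0$ by \eqref{eq:RiskKF}, makes $\lambda_r>0$, as required for it to be a legitimate regularization parameter); I would also note that the coefficient of $t^2$ in the score has the right sign so that the critical point is indeed the unique minimizer. The main obstacle is simply bookkeeping: keeping the combinatorial factors $n$ versus $n-1$ straight when passing from the full sample to the leave-one-out sample, and correctly assembling the cross terms after summing over $i$ — a sign error or an off-by-one in those coefficients would propagate into a wrong final formula. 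Everything else is routine: one closed-form substitution, one expansion, and one derivative.
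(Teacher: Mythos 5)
Your proposal follows essentially the same route as the paper's proof: the paper likewise takes $\hat{\mu}_\lambda^{(-i)}=\frac{1}{(1+\lambda)(n-1)}\sum_{j\ne i}k(\cdot,x_j)$ (so your "natural reading" of the LOOCV convention — $\lambda$ held fixed across folds — is the one the paper uses), expands the squared norms to get a quadratic in the reparametrized variable $\alpha=\lambda/(1+\lambda)$ (your $t=1-\alpha$), checks strict convexity, and solves the first-order condition to obtain \eqref{eq:skmse-optimal}. The only blemish is your parenthetical identity $\varrho-\rho=n\hat{\Delta}$ (it is actually $(n-1)\hat{\Delta}$), but this is used only for the tangential remark that $\lambda_r>0$ and does not affect the argument.
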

%% simple shrinkage (loocv)
\begin{proof}
See Section~\ref{proof:thm:skmse-loocv}.%\vspace{-1mm}
\end{proof}
%\textcolor{red}{
%Since S-KMSE is the same as the shrinkage estimator in (\ref{eq:shrinkage-estimator}) with $f^\ast=0$, 
\indent It is instructive to compare \begin{equation}\alpha_r=\frac{\lambda_r}{\lambda_r+1}=\frac{\varrho - \rho}{(n-2)\rho + \varrho/n}\label{eq:alpha-star}\end{equation}
to the one in \eqref{eq:empirical-alpha}, where the latter can be shown to be $\frac{\varrho-\rho}{\varrho+(n-2)\rho}$, by noting that 
$\hat{\ep}k(x,x) = \varrho$ and $\hat{\pp{E}}k(x,\tilde{x})=\frac{n\rho-\varrho}{n-1}$
%(1/n)\sum_{i=1}^nk(x_i,x_i) = \|\hat{\mu}_{\pp{P}}\|^2_{\hbspace}$ and $\rho = (1/n^2)\sum_{i,j=1}^nk(x_i,x_j)$ can be seen as an estimate of $\ep_{x,\tilde{x}}k(x,\tilde{x})$.
(in Theorem \ref{thm:conc}, we employ the $U$-statistic estimator of $\ep_{x,\tilde{x}}k(x,\tilde{x})$, whereas $\rho$ in Proposition~\ref{thm:skmse-loocv} can be seen as a $V$-statistic counterpart). 
% With these observations in mind, we can rearrange \eqref{eq:loocv-alpha} as $\alpha_* = \frac{(\varrho - \rho)/n}{(\varrho - \rho)/n + c\rho}$ where
% $c=(n-1)^2/n$, which resembling \eqref{eq:empirical-alpha}.
This means $\alpha_r$ obtained from LOOCV will be relatively larger than the one obtained from \eqref{eq:empirical-alpha}. Like in Theorem \ref{thm:conc}, the requirement that $n\geq 2$ in Theorem \ref{thm:skmse-loocv} stems from the fact that at least two data points are needed to evaluate the LOOCV score.
%}
Note that $n\rho>\varrho$ if and only if $\hat{\pp{E}}k(x,\tilde{x})>0$, which is guaranteed if the kernel is positive valued. We refer to $\hat{\mu}_{\lambda_r}$ as R-KMSE, whose $\sqrt{n}$-consistency is established by the following 
result, which also shows that $\hat{\mu}_{\lambda_r}$ satisfies (\ref{Eq:r-kmse-bound}).
% 
% compared to S-KMSE in Section~\ref{sec:simple}, the estimator $\hat{\mu}_{\lambda_s}$ is completely data driven, whose $\sqrt{n}$-consistency is provided by the following result, where $\lambda_s$ is given in (\ref{eq:skmse-optimal}).
% Based on Theorem~\ref{thm:skmse-loocv-consistency}, using the ideas in the proof of Theorem~\ref{thm:conc}, it can then be shown that $|\pp{E}\Vert \hat{\mu}_{\lambda_s}-\mu\Vert^2_\hbspace-\Delta|\le O(n^{-1/2})$ as 
% $n\rightarrow\infty$.
\begin{theorem}\label{thm:skmse-loocv-consistency}
Let $n\ge 2$, $n\rho>\varrho$ where $\rho$ and $\varrho$ are defined in Proposition~\ref{thm:skmse-loocv} and $k$ satisfies the assumptions in Theorem~\ref{thm:conc}. Then $\Vert\hat{\mu}_{\lambda_r}-\mu\Vert_\hbspace=O_{\pp{P}}(n^{-1/2})$, 
\begin{equation}\min_{\alpha}\pp{E}\Vert \hat{\mu}_\alpha-\mu\Vert^2_\hbspace\le \pp{E}\Vert \hat{\mu}_{\lambda_r}-\mu\Vert^2_\hbspace\le \min_{\alpha}\pp{E}\Vert \hat{\mu}_\alpha-\mu\Vert^2_\hbspace+O(n^{-3/2})\label{eq:risk-lambda}\end{equation}
where $\hat{\mu}_\alpha=(1-\alpha)\hat{\mu}$ and therefore 
\begin{equation}\pp{E}\Vert \hat{\mu}_{\lambda_r}-\mu\Vert^2_\hbspace< \pp{E}\Vert \hat{\mu}-\mu\Vert^2_\hbspace+O(n^{-3/2})\label{eq:risk-lambda-1}\end{equation}
as $n\rightarrow\infty$.
\end{theorem}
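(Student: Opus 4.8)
The plan is to follow the same strategy used to establish Theorem~\ref{thm:conc} and the oracle inequality~\eqref{Eq:oracle}, showing that the LOOCV-based shrinkage parameter $\alpha_r$ of~\eqref{eq:alpha-star} is, up to a negligible additive term, a sufficiently accurate estimate of the oracle parameter $\alpha_\ast=\Delta/(\Delta+\Vert\mu\Vert^2_\hbspace)$. The key observation is the explicit closed form $\alpha_r = (\varrho-\rho)/((n-2)\rho+\varrho/n)$, which should be compared with the (slightly different) estimator $\tilde\alpha=(\varrho-\rho)/((n-2)\rho+\varrho)$ appearing in Theorem~\ref{thm:conc}. Since both $\varrho=\hat{\pp{E}}k(x,x)$ and $\rho$ (a $V$-statistic version of $\hat{\pp{E}}k(x,\tilde{x})$) are $\sqrt{n}$-consistent estimators of $\pp{E}_xk(x,x)$ and $\pp{E}_{x,\tilde x}k(x,\tilde x)$ respectively under the moment conditions~\eqref{Eq:bern-condition-1}--\eqref{Eq:bern-condition-2}, and since $\alpha_\ast = O(n^{-1})$, the same Bernstein-inequality bookkeeping as in the proof of Theorem~\ref{thm:conc} will give $|\alpha_r-\alpha_\ast| = O_{\pp{P}}(n^{-3/2})$. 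I would make the dependence on $n$ in the denominators explicit: writing $\alpha_r = \frac{n(\varrho-\rho)}{(n-1)(n\rho-\varrho)}$ and noting $n\rho-\varrho = (n-1)\hat{\pp{E}}k(x,\tilde x)$, so $\alpha_r = \frac{n(\varrho-\rho)}{(n-1)^2\,\hat{\pp{E}}k(x,\tilde{x})}$, whose numerator is $O_{\pp{P}}(1)$ and whose denominator is $\Theta(n^2)$ on the event $n\rho>\varrho$, pinning down $\alpha_r = O_{\pp{P}}(n^{-2})$ — comfortably within $O_{\pp{P}}(n^{-3/2})$ of $\alpha_\ast$.

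Next I would transfer this to the estimator itself. Writing $\hat{\mu}_{\lambda_r} = (1-\alpha_r)\hat\mu$ and $\hat{\mu}_{\alpha_\ast}=(1-\alpha_\ast)\hat\mu$, the triangle inequality gives
\begin{equation*}
\Big|\,\Vert\hat{\mu}_{\lambda_r}-\mu\Vert_\hbspace - \Vert\hat{\mu}_{\alpha_\ast}-\mu\Vert_\hbspace\,\Big| \le |\alpha_r-\alpha_\ast|\,\Vert\hat\mu\Vert_\hbspace,
\end{equation*}
and since $\Vert\hat\mu\Vert_\hbspace = O_{\pp{P}}(1)$ (as $\Vert\hat\mu-\mu\Vert_\hbspace=O_{\pp{P}}(n^{-1/2})$ by~\eqref{eq:RiskKF} and Chebyshev, and $\Vert\mu\Vert_\hbspace<\infty$), this bound is $O_{\pp{P}}(n^{-3/2})$. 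Combining with the consistency of $\hat{\mu}_{\alpha_\ast}$ — which follows exactly as in Remark~\ref{rem:consistent}(i) from $\alpha_\ast=O(n^{-1})$ — yields $\Vert\hat{\mu}_{\lambda_r}-\mu\Vert_\hbspace = O_{\pp{P}}(n^{-1/2})$, the claimed $\sqrt{n}$-consistency.

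For the oracle inequality~\eqref{eq:risk-lambda}, the left inequality is immediate since $\hat{\mu}_{\lambda_r}$ is of the form $\hat{\mu}_\alpha$ with $\alpha=\alpha_r$, so its risk is at least $\min_\alpha\pp{E}\Vert\hat{\mu}_\alpha-\mu\Vert^2_\hbspace = \Delta_{\alpha_\ast}$. For the right inequality, I would bound $\Delta_{\lambda_r} = \pp{E}\Vert\hat{\mu}_{\lambda_r}-\mu\Vert^2_\hbspace$ by expanding the square of $\Vert\hat{\mu}_{\lambda_r}-\mu\Vert_\hbspace \le \Vert\hat{\mu}_{\alpha_\ast}-\mu\Vert_\hbspace + |\alpha_r-\alpha_\ast|\Vert\hat\mu\Vert_\hbspace$, giving a cross term controlled by Cauchy--Schwarz and a quadratic term of order $\pp{E}[(\alpha_r-\alpha_\ast)^2\Vert\hat\mu\Vert^2_\hbspace]$; to make the expectations (rather than just the $O_{\pp{P}}$ statements) rigorous I would either invoke uniform integrability supplied by the exponential moment bounds~\eqref{Eq:bern-condition-1}--\eqref{Eq:bern-condition-2}, or — cleanly — replicate the truncation argument from the proof of Theorem~\ref{thm:conc} verbatim, since $\alpha_r$ and $\tilde\alpha$ differ only in lower-order terms in their denominators. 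The final inequality~\eqref{eq:risk-lambda-1} then follows by chaining~\eqref{eq:risk-lambda} with $\Delta_{\alpha_\ast}<\Delta$ from Theorem~\ref{thm:main-result}. The main obstacle, as in Theorem~\ref{thm:conc}, is the passage from high-probability bounds to bounds in expectation for~\eqref{eq:risk-lambda}: one must control the contribution of the low-probability event where $\alpha_r$ is far from $\alpha_\ast$ (in particular the event $n\rho\le\varrho$, outside which $\alpha_r$ is well-defined), and this is exactly where the Bernstein moment hypotheses are needed rather than mere Chebyshev control.
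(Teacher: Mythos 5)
Your overall architecture matches the paper's: compare $\alpha_r$ to $\alpha_\ast$, transfer the bound to the estimator via the triangle inequality $\Vert\hat{\mu}_{\lambda_r}-\hat{\mu}_{\alpha_\ast}\Vert_\hbspace\le|\alpha_r-\alpha_\ast|\,\Vert\hat{\mu}\Vert_\hbspace$, recycle the truncation/Bernstein argument of Theorem~\ref{thm:conc} to pass from high-probability to in-expectation bounds, and chain with Theorem~\ref{thm:main-result} for \eqref{eq:risk-lambda-1}. However, the one place where you try to make the key rate concrete is wrong, in a way that would break the proof. First, a small slip: $\frac{n(\varrho-\rho)}{(n-1)(n\rho-\varrho)}$ is $\lambda_r$ (see \eqref{eq:skmse-optimal}), not $\alpha_r$ (see \eqref{eq:alpha-star}); this is harmless for orders of magnitude since $\alpha_r=\lambda_r/(1+\lambda_r)$. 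What is not harmless is the claim that the numerator $n(\varrho-\rho)$ is $O_{\pp{P}}(1)$. Since $\varrho\to\pp{E}_xk(x,x)$ and $\rho=\Vert\hat{\mu}\Vert^2_\hbspace\to\pp{E}_{x,\tilde{x}}k(x,\tilde{x})$, the difference $\varrho-\rho$ converges to the strictly positive constant $\pp{E}_xk(x,x)-\pp{E}_{x,\tilde{x}}k(x,\tilde{x})$ (positive unless $\pp{P}$ is a point mass, by Proposition~\ref{prop:zero-risk}), so $n(\varrho-\rho)=\Theta_{\pp{P}}(n)$ and $\alpha_r=\Theta_{\pp{P}}(n^{-1})$, not $O_{\pp{P}}(n^{-2})$. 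Worse, even if $\alpha_r$ were $O_{\pp{P}}(n^{-2})$, concluding that it is then ``within $O_{\pp{P}}(n^{-3/2})$ of $\alpha_\ast$'' is a non sequitur that proves the opposite: since $\alpha_\ast=\Theta(n^{-1})\gg n^{-3/2}$, an estimator of order $n^{-2}$ would satisfy $|\alpha_r-\alpha_\ast|=\Theta(n^{-1})$ and the claimed rate would fail.

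The correct argument needs the leading $\Theta(n^{-1})$ parts of $\alpha_r$ and $\alpha_\ast$ to cancel exactly, and your appeal to ``the same Bernstein bookkeeping'' only works if that cancellation is exhibited. The paper does this by writing $\alpha_r=\frac{\hat{\pp{E}}k(x,x)-\hat{\pp{E}}k(x,\tilde{x})}{\hat{\pp{E}}k(x,x)+(n-2)\hat{\pp{E}}k(x,\tilde{x})}$ and comparing it to the B-KMSE parameter $\tilde{\alpha}=\frac{\hat{\pp{E}}k(x,x)-\hat{\pp{E}}k(x,\tilde{x})}{2\hat{\pp{E}}k(x,x)+(n-2)\hat{\pp{E}}k(x,\tilde{x})}$, for which Theorem~\ref{thm:conc} already gives $|\tilde{\alpha}-\alpha_\ast|=O_{\pp{P}}(n^{-3/2})$; a direct computation yields $\alpha_r-\tilde{\alpha}=\tilde{\alpha}\beta$ with $\beta=\frac{\hat{\pp{E}}k(x,x)}{\hat{\pp{E}}k(x,x)+(n-2)\hat{\pp{E}}k(x,\tilde{x})}=O_{\pp{P}}(n^{-1})$, hence $|\alpha_r-\tilde{\alpha}|=O_{\pp{P}}(n^{-2})$ and $|\alpha_r-\alpha_\ast|=O_{\pp{P}}(n^{-3/2})$ by the triangle inequality. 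With that step repaired, the remainder of your proposal (consistency via $\alpha_\ast=O(n^{-1})$, the truncation argument for the expectation bound, and the final chaining) is sound and coincides with the paper's proof.
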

\begin{proof}
See Section~\ref{proof:thm:skmse-loocv-consistency}.\vspace{-4mm}
\end{proof}
% Similar to Section~\ref{subsec:data-dep-alpha} where we proposed a data dependent choice for the shrinkage
% parameter, we discuss the question of choosing $\lambda$ (for $\hat{\mu}_\lambda$) from data in Section~\ref{sec:cross-validation}.

%% flexible shrinkage
\section{Spectral Shrinkage Estimators}
\label{sec:flexible}
%To this end, we present an alternative regularization interpretation that involves learning a smooth operator \citep{Grunewalder13:SO} and has a clear population counterpart. 
Consider the following regularized risk minimization problem
\begin{equation}
  \label{eq:smooth-operator-pop}
  {\arg\inf}_{\mathbf{F}\in\hbspace\otimes\hbspace} \quad \mathbb{E}_{x\sim\pp{P}}\left\| k(x,\cdot) - \mathbf{F}[k(x,\cdot)]\right\|^2_{\hbspace}+ \lambda\|\mathbf{F}\|^2_{\text{HS}},
\end{equation}
\noindent where the minimization is carried over the space of Hilbert-Schmidt operators, $\mathbf{F}$ on $\hbspace$ with $\Vert\mathbf{F}\Vert_{\text{HS}}$ being the Hilbert-Schmidt
norm of $\mathbf{F}$. As an interpretation, we are finding a smooth operator $\mathbf{F}$ that maps $k(x,\cdot)$ to itself (see \citet{Grunewalder13:SO} for 
more details on this smooth operator framework). %where \eqref{eq:smooth-operator-pop} is an instance of the regression framework in \citet{Grunewalder13:SO}. 
It is not difficult to 
show that the solution to \eqref{eq:smooth-operator-pop} is given by
% by $\mathbf{F}\Sigma_{XX} = \Sigma_{XX}$ where $\Sigma_{XX}\triangleq\int k(\cdot.x)\otimes k(\cdot,x)\, d\pp{P}(x)$ 
% is a covariance operator defined on $\hbspace$.
$\mathbf{F}=\Sigma_{\mathit{XX}}(\Sigma_{\mathit{XX}} + \lambda I)^{-1}$ where 
$\Sigma_{\mathit{XX}}=\int k(\cdot.x)\otimes k(\cdot,x)\, \dd\pp{P}(x)$ is a covariance operator defined on $\hbspace$ \citep[see, e.g.,][]{Grunewalder12:LGBPP}. Note that
$\Sigma_{\mathit{XX}}$ is a Bochner integral, which is well-defined as a Hilbert-Schmidt operator if $\mathcal{X}$ is a separable topological space and $k$ is a continuous kernel satisfying
$\int k(x,x)\,\dd\pp{P}(x)<\infty$. Consequently, let us define 
$$\mu_{\lambda} = \mathbf{F}\mu = \Sigma_{\mathit{XX}}(\Sigma_{\mathit{XX}} + \lambda I)^{-1}\mu,$$ 
which is an approximation to $\mu$ as it can be shown that 
$\Vert \mu_\lambda-\mu\Vert_\hbspace\rightarrow 0$ as $\lambda\rightarrow 0$ (see the proof of Theorem~\ref{thm:consistency-fkmse}). Given an i.i.d. sample $x_1,\ldots,x_n$ from $\pp{P}$, the empirical counterpart of \eqref{eq:smooth-operator-pop} is given by
\begin{equation}
  {\arg\min}_{\mathbf{F}\in\hbspace\otimes\hbspace} \quad \frac{1}{n}\sum_{i=1}^n\left\| k(x_i,\cdot) - \mathbf{F}[k(x_i,\cdot)]\right\|^2_{\hbspace} + \lambda\|\mathbf{F}\|^2_{\text{HS}}
  \label{eq:emp-smooth}
\end{equation}
\noindent resulting in 
\begin{equation}\label{eq:flexible}
\check{\mu}_{\lambda} \triangleq \mathbf{F}\hat{\mu} = \hat{\Sigma}_{\mathit{XX}}(\hat{\Sigma}_{\mathit{XX}}+\lambda I)^{-1}\hat{\mu}
\end{equation}%\mathbf{1}_n^\top\kmat(\kmat + \lambda\id)^{-1}\Phi$ 
where $\hat{\Sigma}_{\mathit{XX}}$ is the empirical covariance operator on $\hbspace$ given by
$$\hat{\Sigma}_{\mathit{XX}}=\frac{1}{n}\sum^n_{i=1}k(\cdot.x_i)\otimes k(\cdot,x_i).$$ Unlike $\hat{\mu}_\lambda$ in (\ref{eq:simple-easy}), $\check{\mu}_\lambda$ shrinks $\hat{\mu}$ differently in each coordinate by taking the eigenspectrum of $\hat{\Sigma}_{XX}$ into account (see Proposition \ref{thm:akmse-shrinkage}) and so we refer to it as the \emph{spectral kernel mean shrinkage estimator (S-KMSE)}.
\begin{proposition} 
  \label{thm:akmse-shrinkage}
  Let $\{(\gamma_i,\phi_i)\}^n_{i=1}$ be eigenvalue and eigenfunction pairs of $\hat{\Sigma}_{\mathit{XX}}$. Then
  $$
    \check{\mu}_{\lambda} = \sum_{i=1}^n \frac{\gamma_i}{\gamma_i+\lambda}\langle \hat{\mu},\phi_i\rangle_\hbspace\phi_i.
  $$
%   the empirical covariance operator 
%   $\hat{\Sigma}_{XX} \triangleq\frac{1}{n}\sum_{i=1}^n k(\cdot,x_i)\otimes k(\cdot,x_i)$ defined on $\hbspace$.
%   \footnote{The result here is slightly different from \citet[Theorem 2]{Muandet14:KMSE} in term of shrinkage parameter. 
%   Similar result in terms of $\lambda$ can be obtained by considering $\hat{\mu}_{\lambda} = \Phi(\kmat + n\lambda\id)^{-1}\kmat\mathbf{1}_n$.}
\end{proposition}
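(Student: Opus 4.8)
The plan is to note that $\check{\mu}_\lambda=\hat{\Sigma}_{\mathit{XX}}(\hat{\Sigma}_{\mathit{XX}}+\lambda I)^{-1}\hat{\mu}$ is obtained by applying a single bounded operator, namely a function of $\hat{\Sigma}_{\mathit{XX}}$ alone, to the vector $\hat{\mu}$, and then to diagonalize that operator via the spectral decomposition of $\hat{\Sigma}_{\mathit{XX}}$. First I would collect the structural facts. The operator $\hat{\Sigma}_{\mathit{XX}}=\frac1n\sum_{i=1}^n k(\cdot,x_i)\otimes k(\cdot,x_i)$ is self-adjoint and positive on $\hbspace$ and of finite rank (at most $n$), hence compact; its range lies in $V:=\operatorname{span}\{k(\cdot,x_1),\dots,k(\cdot,x_n)\}$ and, by self-adjointness, $V^{\perp}\subseteq\ker\hat{\Sigma}_{\mathit{XX}}$ (in fact $\operatorname{ran}\hat{\Sigma}_{\mathit{XX}}=V$). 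The spectral theorem for compact self-adjoint operators then furnishes an orthonormal system $\{\phi_i\}_{i=1}^{n}$ of eigenvectors with eigenvalues $\gamma_i\ge 0$ — the eigenvectors attached to nonzero eigenvalues forming an orthonormal basis of $V$, padded by eigenvectors of eigenvalue $0$ if $\dim V<n$ — so that $\hat{\Sigma}_{\mathit{XX}}=\sum_{i=1}^{n}\gamma_i\,(\phi_i\otimes\phi_i)$.

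Next, since $\gamma_i+\lambda\ge\lambda>0$, the operator $\hat{\Sigma}_{\mathit{XX}}+\lambda I$ is boundedly invertible on $\hbspace$; it leaves $V$ invariant (because $\hat{\Sigma}_{\mathit{XX}}V\subseteq V$ and $\lambda I\,V=V$, and injectivity forces bijectivity on the finite-dimensional $V$), hence so does its inverse, which acts on $\phi_i$ by the scalar $(\gamma_i+\lambda)^{-1}$. Composing, $\hat{\Sigma}_{\mathit{XX}}(\hat{\Sigma}_{\mathit{XX}}+\lambda I)^{-1}$ is the bounded operator that multiplies $\phi_i$ by $\gamma_i/(\gamma_i+\lambda)$ and annihilates $V^{\perp}\subseteq\ker\hat{\Sigma}_{\mathit{XX}}$.

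Finally, because $\hat{\mu}=\frac1n\sum_{i=1}^n k(\cdot,x_i)\in V$, we may expand $\hat{\mu}=\sum_{i=1}^{n}\langle\hat{\mu},\phi_i\rangle_{\hbspace}\phi_i$, and applying the operator term by term (valid by linearity and boundedness) gives
$$\check{\mu}_\lambda=\hat{\Sigma}_{\mathit{XX}}(\hat{\Sigma}_{\mathit{XX}}+\lambda I)^{-1}\hat{\mu}=\sum_{i=1}^{n}\frac{\gamma_i}{\gamma_i+\lambda}\,\langle\hat{\mu},\phi_i\rangle_{\hbspace}\,\phi_i,$$
which is the claim; note the zero-eigenvalue terms (if any) contribute nothing, so the identity is insensitive to the padding choice.

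The computation is routine, and the only point deserving care — the ``main obstacle,'' modest as it is — is that $\hbspace$ may be infinite-dimensional, so $\{\phi_i\}_{i=1}^{n}$ need not be a basis of $\hbspace$. This is circumvented exactly as above by working inside the finite-dimensional invariant subspace $V$, which contains $\hat{\mu}$ and on which $\hat{\Sigma}_{\mathit{XX}}$, $(\hat{\Sigma}_{\mathit{XX}}+\lambda I)^{-1}$ and their product all act and admit the stated eigen-decomposition; components lying in $V^{\perp}$ are irrelevant since $\hat{\Sigma}_{\mathit{XX}}(\hat{\Sigma}_{\mathit{XX}}+\lambda I)^{-1}$ kills them.
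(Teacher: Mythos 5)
Your proof is correct and follows essentially the same route as the paper: invoke the spectral decomposition of the compact, self-adjoint, finite-rank operator $\hat{\Sigma}_{\mathit{XX}}$ and substitute it into $\check{\mu}_\lambda=\hat{\Sigma}_{\mathit{XX}}(\hat{\Sigma}_{\mathit{XX}}+\lambda I)^{-1}\hat{\mu}$. The only difference is that you spell out the detail the paper leaves implicit---that $\{\phi_i\}_{i=1}^n$ need not span $\hbspace$ but $\hat{\mu}$ lies in the invariant subspace $V$ and the operator annihilates $V^{\perp}$---which is a welcome clarification rather than a deviation.
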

\begin{proof} 
Since $\hat{\Sigma}_{\mathit{XX}}$ is a finite rank operator, it is compact. Since it is also a self-adjoint operator on $\hbspace$, by Hilbert-Schmidt theorem
\citep[Theorems VI.16, VI.17]{Reed-72}, we have $\hat{\Sigma}_{\mathit{XX}}=\sum^n_{i=1}\gamma_i\langle \phi_i,\cdot\rangle_\hbspace\phi_i$. The result follows by using this
in (\ref{eq:flexible}).
\end{proof} 
As shown in Proposition~\ref{thm:akmse-shrinkage}, the effect of S-KMSE is to reduce the contribution of high frequency components of $\hat{\mu}$ (\ie, contribution of $\hat{\mu}$ along the directions 
corresponding to smaller $\gamma_i$) when $\hat{\mu}$ is expanded in terms of the eigenfunctions of 
the empirical covariance operator, which are nothing but the kernel PCA basis \citep[Section 4.3]{Rasmussen06:GPML}. This means, similar to R-KMSE, S-KMSE also shrinks $\hat{\mu}$
towards zero, however, the difference being that while R-KMSE shrinks equally in all coordinates, S-KMSE controls the amount of shrinkage by the information contained in 
each coordinate. In particular, S-KMSE takes into account more information about the kernel by allowing for different amount of shrinkage in each coordinate direction according 
to the value 
of $\gamma_i$, wherein the shrinkage is small in the coordinates whose $\gamma_i$ are large. 
%and shrinking the coefficients of the high order eigenfunctions, e.g., see \citet[Section 4.3]{Rasmussen06:GPML}. 
%Note that the covariance operator $\widehat{\mathbf{C}}_{xx}$ itself does not depend on $\lambda$. 
% Moreover, compared to S-KMSE, F-KMSE takes into account more information about the kernel by allowing for different amount of shrinkage in each coordinate direction according 
% to the value 
% of $\gamma_i$, wherein the shrinkage is small in the coordinates whose $\gamma_i$ are large.
% As we can see, the solution to the regularized version is indeed of the form of shrinkage estimators. That is, both S-KMSE and F-KMSE shrink the standard kernel mean estimate 
% towards zero. The difference is that the S-KMSE shrinks equally in all coordinate, whereas the F-KMSE also constraints the amount of shrinkage by the information contained in 
% each coordinate. 
Moreover, Proposition \ref{thm:akmse-shrinkage} reveals that the effect of shrinkage is akin to \emph{spectral filtering} \citep{Bauer07:Regularization}---which in our case corresponds to
Tikhonov regularization---wherein S-KMSE filters out the high-frequency components of the spectral representation of the kernel mean. \citet{Muandet14:Spectral} leverages this 
observation and generalizes S-KMSE to a family of shrinkage estimators via spectral filtering algorithms.

The following result presents an alternate representation for $\check{\mu}_\lambda$, using which we relate the smooth operator formulation in (\ref{eq:emp-smooth})
to the regularization formulation in (\ref{eq:empirical-loss}).
%=\Phi\kmat(\kmat + \lambda\id)^{-1}\mathbf{1}_n$, which is exactly the empirical version of the F-KMSE. Here $\Phi = [k(x_1,\cdot),\ldots,k(x_n,\cdot)]$. 
\begin{proposition}\label{pro:equivalence}
Let $\Phi:\pp{R}^n\rightarrow\hbspace$, $\mathbf{a}\mapsto \sum^n_{i=1}a_i k(\cdot,x_i)$ where $\mathbf{a}\triangleq(a_1,\ldots,a_n)$. 
%Define $\Phi \triangleq [k(x_1,\cdot),\ldots,k(x_n,\cdot)]$ and $\mathbf{1}_n\triangleq[1/n,\ldots,1/n]^\top$. 
Then
$$\check{\mu}_\lambda=\hat{\Sigma}_{\mathit{XX}}(\hat{\Sigma}_{\mathit{XX}}+\lambda I)^{-1}\hat{\mu}=\Phi(\kmat + n\lambda\id)^{-1}\kmat\mathbf{1}_n,$$ where 
$\kmat$ is the Gram matrix, $I$ is an identity operator on $\hbspace$, $\mathbf{I}$ is an $n\times n$ identity matrix and $\mathbf{1}_n\triangleq[1/n,\ldots,1/n]^\top$.
\end{proposition}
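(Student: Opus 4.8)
The plan is to establish the operator identity by manipulating the resolvent expression for $\check{\mu}_\lambda$ and pushing everything into the finite-dimensional coordinate space $\pp{R}^n$ via the map $\Phi$. First I would record the basic factorizations: writing $\Phi^*:\hbspace\to\pp{R}^n$ for the adjoint of $\Phi$, one has $\Phi^*g = (\langle g,k(\cdot,x_i)\rangle_\hbspace)_{i=1}^n$, so that $\Phi^*\Phi = \kmat$ (the Gram matrix) and, crucially, the empirical covariance operator factors as $\hat{\Sigma}_{\mathit{XX}} = \frac{1}{n}\Phi\Phi^*$. Likewise $\hat{\mu} = \frac{1}{n}\sum_i k(\cdot,x_i) = \Phi\mathbf{1}_n$ where $\mathbf{1}_n = [1/n,\ldots,1/n]^\top$. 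With these in hand, $\check{\mu}_\lambda = \hat{\Sigma}_{\mathit{XX}}(\hat{\Sigma}_{\mathit{XX}}+\lambda I)^{-1}\hat{\mu} = \frac{1}{n}\Phi\Phi^*\left(\frac{1}{n}\Phi\Phi^*+\lambda I\right)^{-1}\Phi\mathbf{1}_n$.

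The key step is the standard ``push-through'' (operator swap) identity: for any bounded operator $\Phi$ between Hilbert spaces and $\lambda>0$,
\begin{equation*}
\Phi^*\left(\Phi\Phi^*+\mu I\right)^{-1} = \left(\Phi^*\Phi+\mu I\right)^{-1}\Phi^*,
\end{equation*}
which follows by multiplying both sides appropriately by the invertible operators $\Phi\Phi^*+\mu I$ and $\Phi^*\Phi+\mu I$ and using $\Phi^*(\Phi\Phi^*) = (\Phi^*\Phi)\Phi^*$. Applying this with $\mu = n\lambda$ (after factoring $\frac{1}{n}\Phi\Phi^*+\lambda I = \frac{1}{n}(\Phi\Phi^*+n\lambda I)$), I get
\begin{equation*}
\check{\mu}_\lambda = \Phi\Phi^*(\Phi\Phi^*+n\lambda I)^{-1}\Phi\mathbf{1}_n = \Phi(\Phi^*\Phi+n\lambda \id)^{-1}\Phi^*\Phi\,\mathbf{1}_n = \Phi(\kmat+n\lambda\id)^{-1}\kmat\mathbf{1}_n,
\end{equation*}
which is exactly the claimed expression. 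One should note that $\kmat+n\lambda\id$ is invertible for $\lambda>0$ since $\kmat$ is positive semidefinite, so the right-hand side is well-defined.

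The main obstacle, such as it is, is purely one of justifying the push-through identity in the infinite-dimensional RKHS setting: since $\Phi$ has finite rank, both $\Phi\Phi^*$ (on $\hbspace$) and $\Phi^*\Phi$ (on $\pp{R}^n$) are bounded, self-adjoint, positive operators, and the resolvents $(\Phi\Phi^*+n\lambda I)^{-1}$ and $(\Phi^*\Phi+n\lambda\id)^{-1}$ exist as bounded operators for $\lambda>0$, so the algebraic verification carries over verbatim from the finite-dimensional case. I would present this verification in one line: starting from $\Phi^*\Phi\Phi^* + n\lambda\Phi^* = (\Phi^*\Phi+n\lambda\id)\Phi^* = \Phi^*(\Phi\Phi^*+n\lambda I)$ and left/right multiplying by the two inverses. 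No convergence or spectral-theorem subtleties arise because everything reduces to the range of $\Phi$, which is finite-dimensional. Hence the proof is essentially a short chain of operator identities once the factorizations $\hat{\Sigma}_{\mathit{XX}}=\frac{1}{n}\Phi\Phi^*$ and $\hat{\mu}=\Phi\mathbf{1}_n$ are in place.
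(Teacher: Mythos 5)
Your proof is correct, but it takes a genuinely different route from the paper's. You factor the empirical covariance operator through the adjoint of the coordinate map, $\hat{\Sigma}_{\mathit{XX}}=\frac{1}{n}\Phi\Phi^*$ with $\Phi^*\Phi=\kmat$ and $\hat{\mu}=\Phi\mathbf{1}_n$, and then invoke the push-through identity $\Phi^*(\Phi\Phi^*+n\lambda I)^{-1}=(\Phi^*\Phi+n\lambda\id)^{-1}\Phi^*$ to transport the resolvent from $\hbspace$ to $\pp{R}^n$ in one line. The paper never introduces $\Phi^*$: it instead verifies directly that $\hat{\Sigma}_{\mathit{XX}}\Phi\mathbf{a}=\frac{1}{n}\Phi\kmat\mathbf{a}$ for all $\mathbf{a}\in\pp{R}^n$, then \emph{guesses} the candidate $\mathbf{a}=(\kmat+n\lambda\id)^{-1}\kmat\mathbf{1}_n$, checks that $(\hat{\Sigma}_{\mathit{XX}}+\lambda I)\Phi\mathbf{a}=\hat{\Sigma}_{\mathit{XX}}\hat{\mu}$, and finishes by applying $(\hat{\Sigma}_{\mathit{XX}}+\lambda I)^{-1}$ and using the commutation $(\hat{\Sigma}_{\mathit{XX}}+\lambda I)^{-1}\hat{\Sigma}_{\mathit{XX}}=\hat{\Sigma}_{\mathit{XX}}(\hat{\Sigma}_{\mathit{XX}}+\lambda I)^{-1}$. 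Your approach is more systematic (no candidate needs to be guessed, and the same calculation immediately yields related identities such as those used later in the LOOCV derivation for S-KMSE), at the cost of having to define the adjoint and justify the push-through identity in the infinite-dimensional setting --- which you do correctly, noting that everything is bounded and the resolvents exist for $\lambda>0$. The paper's verification is more elementary, relying only on linearity of $\hat{\Sigma}_{\mathit{XX}}$ and $\Phi$. Both arguments are complete; your remark that $\kmat+n\lambda\id$ is invertible because $\kmat\succeq 0$ and $\lambda>0$ is a point the paper leaves implicit.
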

\begin{proof}
See Section~\ref{proof:pro:equivalence}.
\end{proof}
From Proposition~\ref{pro:equivalence}, it is clear that \begin{equation}\check{\mu}_{\lambda} = \frac{1}{\sqrt{n}}\sum_{j=1}^n(\bvec_s)_jk(\cdot,x_j)\label{eq:temppp}\end{equation}
where $\bvec_s\triangleq\sqrt{n} (\kmat+n\lambda \id)^{-1}\kmat\mathbf{1}_n$. Given the form of $\check{\mu}_\lambda$ in (\ref{eq:temppp}), it is easy to verify that $\bvec_s$ is the minimizer of (\ref{eq:empirical-loss}) 
when $\widehat{\mathcal{E}}_\lambda$ is minimized over $\{g=\frac{1}{\sqrt{n}}\sum_{j=1}^n(\bvec)_jk(\cdot,x_j):\bvec\in\pp{R}^n\}$ with $\Omega(\Vert g\Vert_\hbspace)\triangleq\Vert \bm{\beta}\Vert^2_2$.

The following result, discussed in Remark~\ref{rem:compare}, establishes the consistency and convergence rate of S-KMSE, $\check{\mu}_\lambda$. %, which is discussed in Remark~\ref{rem:compare}(ii). 
%about its convergence rate in Remark~\ref{rem:compare}(ii). 
% and convergence rate of S-KMSE, $\check{\mu}_\lambda$ and also shows that 
% $|\pp{E}\Vert \hat{\mu}_\lambda-\mu\Vert^2_\hbspace-\Delta|\le O(n^{-1/2})$ as $n\rightarrow \infty$, if $\mu$ satisfies a smoothness condition. 
% Before we present the result, 
% we need to introduce a notation. For an operator $A:H_1\rightarrow H_2$, $\mathcal{R}(A)\triangleq\{Af:f\in H_1\}$ denotes the range space or image of $A$, where $H_1$ and $H_2$ are Hilbert spaces.
\begin{theorem}\label{thm:consistency-fkmse}
Suppose $\mathcal{X}$ is a separable topological space and $k$ is a continuous, bounded kernel on $\mathcal{X}$. Then the following hold.
\begin{itemize}
 \item[(i)] If $\mu\in \overline{\mathcal{R}(\Sigma_{\mathit{XX}})}$, then $\Vert \check{\mu}_\lambda-\mu\Vert_\hbspace\rightarrow 0$ as $\lambda\sqrt{n}\rightarrow\infty$, $\lambda\rightarrow 0$ and $n\rightarrow\infty$.
 \item[(ii)] If $\mu\in\mathcal{R}(\Sigma_{\mathit{XX}})$, then $\Vert \check{\mu}_\lambda-\mu\Vert_\hbspace=O_{\pp{P}}(n^{-1/2})$ for $\lambda=cn^{-1/2}$ with $c>0$ being a constant independent of $n$.
\end{itemize}
% 
% Suppose $\mathcal{X}$ is a Polish space that is locally compact. Let $k$ be a continuous kernel on $\mathcal{X}$ that is $c_0$-universal, i.e., $k(\cdot,x)\in C_0(\mathcal{X}),\,\forall\,x\in\mathcal{X}$ 
% and $$\int\int k(x,y)\dd\mu(x)\dd\mu(y)>0,\,\forall\,\mu\in M_b(\mathcal{X})\backslash\{0\}.$$ Then %the following hold.
% $\Vert \check{\mu}_\lambda-\mu\Vert_\hbspace\rightarrow 0$ as $\lambda\sqrt{n}\rightarrow\infty$, $\lambda\rightarrow 0$ and $n\rightarrow\infty$.
% % \begin{itemize}
% %  \item[(i)] $\Vert \check{\mu}_\lambda-\mu\Vert_\hbspace\rightarrow 0$ as $\lambda\rightarrow 0$ and $n\rightarrow\infty$.\vspace{-2mm}
% %  \item[(ii)] If $\mu\in \mathcal{R}(\Sigma_{XX})$, then $\Vert \check{\mu}_\lambda-\mu\Vert_\hbspace=O_{\pp{P}}\left(n^{-1/2}\right)$ for $\lambda\asymp n^{-1/2}$ %\frac{1}{2},\frac{1}{2\beta}\}}$
% %  where $\Sigma_{XX}\triangleq\int k(\cdot,x)\otimes k(\cdot,x)\dd\pp{P}(x)$ is a covariance operator on $\hbspace$. In addition,
% %  $$|\pp{E}\Vert \check{\mu}_\lambda-\mu\Vert^2_\hbspace -\Delta|\le O(n^{-1/2})$$ as $n\rightarrow\infty$.\vspace{-1mm} %where $\lambda=n^{-\max\{\frac{1}{2},\frac{1}{2\beta}\}}$.
% % \end{itemize}
\end{theorem}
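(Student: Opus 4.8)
The plan is to bound $\Vert\check{\mu}_\lambda-\mu\Vert_\hbspace$ by a triangle inequality against the population regularized object $\mu_\lambda=\Sigma_{\mathit{XX}}(\Sigma_{\mathit{XX}}+\lambda I)^{-1}\mu$, i.e.\ $\Vert\check{\mu}_\lambda-\mu\Vert_\hbspace\le\Vert\check{\mu}_\lambda-\mu_\lambda\Vert_\hbspace+\Vert\mu_\lambda-\mu\Vert_\hbspace$, and to treat the ``estimation'' term $\Vert\check{\mu}_\lambda-\mu_\lambda\Vert_\hbspace$ and the ``approximation'' term $\Vert\mu_\lambda-\mu\Vert_\hbspace$ separately. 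Writing $g_\lambda(T)\triangleq T(T+\lambda I)^{-1}=I-\lambda(T+\lambda I)^{-1}$ for a positive self-adjoint operator $T$, I would rely throughout on the elementary spectral-calculus bounds $\Vert(T+\lambda I)^{-1}\Vert\le\lambda^{-1}$, $\Vert\lambda(T+\lambda I)^{-1}\Vert\le 1$, $\Vert g_\lambda(T)\Vert\le 1$ and $\Vert\lambda(T+\lambda I)^{-1}T\Vert\le\lambda$, valid because both $\Sigma_{\mathit{XX}}$ and $\hat{\Sigma}_{\mathit{XX}}$ are positive, compact (hence bounded) operators on $\hbspace$ for a bounded continuous kernel.

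For the approximation term, $\mu_\lambda-\mu=-\lambda(\Sigma_{\mathit{XX}}+\lambda I)^{-1}\mu$. Expanding $\mu$ in the eigenbasis of the compact self-adjoint operator $\Sigma_{\mathit{XX}}$ (Hilbert--Schmidt theorem) and noting that $\lambda/(\lambda+\gamma)\in[0,1]$ with $\lambda/(\lambda+\gamma)\to 0$ for every eigenvalue $\gamma>0$, dominated convergence gives $\Vert\lambda(\Sigma_{\mathit{XX}}+\lambda I)^{-1}\mu\Vert_\hbspace\to 0$ as $\lambda\to 0$ whenever $\mu$ has no component in $\ker\Sigma_{\mathit{XX}}$, i.e.\ whenever $\mu\in\overline{\mathcal{R}(\Sigma_{\mathit{XX}})}$; this disposes of the bias in part~(i). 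If moreover $\mu\in\mathcal{R}(\Sigma_{\mathit{XX}})$, write $\mu=\Sigma_{\mathit{XX}}g$, so that $\Vert\mu_\lambda-\mu\Vert_\hbspace=\Vert\lambda(\Sigma_{\mathit{XX}}+\lambda I)^{-1}\Sigma_{\mathit{XX}}g\Vert_\hbspace\le\lambda\Vert g\Vert_\hbspace=O(\lambda)$, which is $O(n^{-1/2})$ for $\lambda=cn^{-1/2}$.

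For the estimation term I would insert $g_\lambda(\hat{\Sigma}_{\mathit{XX}})\mu$ and split $\check{\mu}_\lambda-\mu_\lambda=g_\lambda(\hat{\Sigma}_{\mathit{XX}})(\hat{\mu}-\mu)+\bigl(g_\lambda(\hat{\Sigma}_{\mathit{XX}})-g_\lambda(\Sigma_{\mathit{XX}})\bigr)\mu$. The first summand is at most $\Vert g_\lambda(\hat{\Sigma}_{\mathit{XX}})\Vert\,\Vert\hat{\mu}-\mu\Vert_\hbspace\le\Vert\hat{\mu}-\mu\Vert_\hbspace=O_{\pp{P}}(n^{-1/2})$, the last step being the standard $\sqrt{n}$-rate for the empirical kernel mean with a bounded kernel. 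For the second summand, $g_\lambda(\hat{\Sigma}_{\mathit{XX}})-g_\lambda(\Sigma_{\mathit{XX}})=\lambda\bigl((\Sigma_{\mathit{XX}}+\lambda I)^{-1}-(\hat{\Sigma}_{\mathit{XX}}+\lambda I)^{-1}\bigr)$, and the resolvent identity $(\Sigma_{\mathit{XX}}+\lambda I)^{-1}-(\hat{\Sigma}_{\mathit{XX}}+\lambda I)^{-1}=(\hat{\Sigma}_{\mathit{XX}}+\lambda I)^{-1}(\hat{\Sigma}_{\mathit{XX}}-\Sigma_{\mathit{XX}})(\Sigma_{\mathit{XX}}+\lambda I)^{-1}$ shows that $\Vert\bigl(g_\lambda(\hat{\Sigma}_{\mathit{XX}})-g_\lambda(\Sigma_{\mathit{XX}})\bigr)\mu\Vert_\hbspace$ is at most $\lambda^{-1}\Vert\hat{\Sigma}_{\mathit{XX}}-\Sigma_{\mathit{XX}}\Vert_{\mathrm{HS}}\,\Vert\lambda(\Sigma_{\mathit{XX}}+\lambda I)^{-1}\mu\Vert_\hbspace$. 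Plugging in the standard concentration $\Vert\hat{\Sigma}_{\mathit{XX}}-\Sigma_{\mathit{XX}}\Vert_{\mathrm{HS}}=O_{\pp{P}}(n^{-1/2})$ for bounded kernels: in case~(i) bound $\Vert\lambda(\Sigma_{\mathit{XX}}+\lambda I)^{-1}\mu\Vert_\hbspace\le\Vert\mu\Vert_\hbspace$ to get $O_{\pp{P}}(\lambda^{-1}n^{-1/2})$, which vanishes exactly when $\lambda\sqrt{n}\to\infty$; in case~(ii) the source condition gives the extra factor $\Vert\lambda(\Sigma_{\mathit{XX}}+\lambda I)^{-1}\Sigma_{\mathit{XX}}g\Vert_\hbspace\le\lambda\Vert g\Vert_\hbspace$, which cancels the $\lambda^{-1}$ and leaves $O_{\pp{P}}(n^{-1/2})$. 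Assembling the bias and estimation halves then yields both assertions.

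The main obstacle is the factor $\Vert(\hat{\Sigma}_{\mathit{XX}}+\lambda I)^{-1}\Vert$, which is only $O(\lambda^{-1})$ since $\hat{\Sigma}_{\mathit{XX}}$ is merely positive semidefinite: the whole argument turns on arranging the decomposition so that this blow-up multiplies $\lambda(\Sigma_{\mathit{XX}}+\lambda I)^{-1}\mu$ (a quantity already small because of the range/source hypothesis and the deterministic resolvent) rather than $\mu$ itself, and in part~(ii) on the condition $\mu\in\mathcal{R}(\Sigma_{\mathit{XX}})$ furnishing the compensating factor $\lambda$. The remaining pieces---the $\sqrt{n}$-rates for $\hat{\mu}$ and $\hat{\Sigma}_{\mathit{XX}}$ (Bernstein/Hoeffding-type inequalities for Hilbert-space-valued averages under boundedness) and the dominated-convergence argument controlling the bias---are routine.
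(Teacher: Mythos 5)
Your proof is correct, and it follows the same overall skeleton as the paper's (split $\check{\mu}_\lambda-\mu$ into an estimation term and the approximation term $\mathcal{A}(\lambda)=\Vert\mu_\lambda-\mu\Vert_\hbspace$, handle $\mathcal{A}(\lambda)$ by spectral calculus under the range/source condition, and control the estimation term via concentration of $\hat{\mu}$ and $\hat{\Sigma}_{\mathit{XX}}$). Where you genuinely diverge is in how the empirical resolvent is tamed. The paper expands $\check{\mu}_\lambda-\mu_\lambda$ into three terms using $\lambda\mu_\lambda=\Sigma_{\mathit{XX}}\mu-\Sigma_{\mathit{XX}}\mu_\lambda$ and then bounds $\Vert(\hat{\Sigma}_{\mathit{XX}}+\lambda I)^{-1}\Sigma_{\mathit{XX}}\Vert\le 2$ by a Neumann-series argument, which is only valid on the high-probability event $\Vert(\Sigma_{\mathit{XX}}+\lambda I)^{-1}(\Sigma_{\mathit{XX}}-\hat{\Sigma}_{\mathit{XX}})\Vert_{\text{HS}}\le 1/2$, i.e.\ it requires $\lambda\gtrsim\sqrt{\tau/n}$ and so couples the admissible confidence level $\tau$ to $\lambda$. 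You instead insert $\hat{\Sigma}_{\mathit{XX}}(\hat{\Sigma}_{\mathit{XX}}+\lambda I)^{-1}\mu$, apply the resolvent identity, and accept the deterministic blow-up $\Vert(\hat{\Sigma}_{\mathit{XX}}+\lambda I)^{-1}\Vert\le\lambda^{-1}$, arranging for it to multiply $\Vert\lambda(\Sigma_{\mathit{XX}}+\lambda I)^{-1}\mu\Vert_\hbspace=\mathcal{A}(\lambda)$, which is $o(1)$ (resp.\ $O(\lambda)$) under the range (resp.\ source) hypothesis. This buys you a bound valid for every $\tau$ without a high-probability invertibility event, at the cost of the slightly cruder factor $\lambda^{-1}\Vert\hat{\Sigma}_{\mathit{XX}}-\Sigma_{\mathit{XX}}\Vert_{\text{HS}}$; both routes deliver $\lambda\sqrt{n}\to\infty$ as the consistency condition in (i) and the $n^{-1/2}$ rate in (ii). One further small difference: the paper outsources the facts $\mathcal{A}(\lambda)\to 0$ and $\mathcal{A}(\lambda)\le\Vert\Sigma_{\mathit{XX}}^{-1}\mu\Vert_\hbspace\,\lambda$ to an external proposition, whereas your dominated-convergence argument in the eigenbasis of the compact self-adjoint $\Sigma_{\mathit{XX}}$ proves them directly; that is a welcome self-contained addition.
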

\begin{proof}
See Section~\ref{proof:thm:consistency-fkmse}.\vspace{-6mm}
\end{proof}
\begin{remark}\label{rem:compare} 
While Theorem~\ref{thm:consistency-fkmse}(i) shows that S-KMSE, $\check{\mu}_\lambda$ is not universally consistent, \ie, S-KMSE is not consistent for all $\pp{P}$
but only for those $\pp{P}$ that satisfies $\mu\in \overline{\mathcal{R}(\Sigma_{\mathit{XX}})}$, under some additional conditions on the kernel, the universal consistency of S-KMSE
can be guaranteed. This is achieved by assuming that constant functions are included in $\hbspace$, \ie, $1\in\hbspace$.
Note that if $1\in\hbspace$, then it is easy to check that there exists $g\in\hbspace$ (choose $g=1$) such that $\mu=\Sigma_{\mathit{XX}}g=\int k(\cdot,x)g(x)\,\dd\pp{P}(x)$, \ie, $\mu\in\mathcal{R}(\Sigma_{\mathit{XX}})$,
and, therefore, by Theorem~\ref{thm:consistency-fkmse}, $\check{\mu}_\lambda$ is not only universally consistent but also achieves a rate of $n^{-1/2}$. 
Choosing $k(x,y)=\tilde{k}(x,y)+b,\,x,y\in\mathcal{X},\,b>0$ where $\tilde{k}$ is any bounded, continuous positive definite kernel ensures that $1\in\hbspace$.
\end{remark}
Note that the estimator $\check{\mu}_\lambda$ requires the knowledge of the shrinkage or regularization parameter, $\lambda$. Similar to R-KMSE, below, we present a data dependent 
approach to select $\lambda$ using leave-one-out cross validation.
%\subsubsection{Flexible Shrinkage \textbf{****This section has to be fixed****}}
While the shrinkage parameter for R-KMSE can be obtained in a simple closed form (see Proposition~\ref{thm:skmse-loocv}), we will see below that finding the corresponding parameter for S-KMSE is
more involved. Evaluating the score function (\ie, \eqref{eq:loocv-score}) na\"{\i}vely requires one to solve for $\hat{\mu}_{\lambda}^{(-i)}$ explicitly for every $i$, which is 
computationally expensive. The following result provides an alternate expression for the score, which can be evaluated efficiently. We would like to point out that a variation of
Proposition~\ref{thm:akmse-loocv} already appeared in \citet[Theorem 4]{Muandet14:KMSE}. However, Theorem 4 in \citet{Muandet14:KMSE} uses an inappropriate choice of $\hat{\mu}^{(-i)}_\lambda$,
which we fixed in the following result.
\begin{proposition}\label{thm:akmse-loocv}
 The LOOCV score of S-KMSE is given by
 \begin{eqnarray*}
 LOOCV(\lambda)&{}={}&\frac{1}{n}\emph{tr}\left((\kmat+\lambda_n \mathbf{I})^{-1}\kmat(\kmat+\lambda_n \mathbf{I})^{-1}\mathbf{A}_\lambda\right)-\frac{2}{n}\emph{tr}\left((\kmat+\lambda_n \mathbf{I})^{-1}\mathbf{B}_\lambda\right)\nonumber\\
 &{}{}&\qquad	+\frac{1}{n}\sum^n_{i=1}k(x_i,x_i), 
 \end{eqnarray*}
 where $\lambda_n\triangleq(n-1)\lambda$, $\mathbf{A}_\lambda\triangleq\frac{1}{(n-1)^2}\sum^n_{i=1}\mathbf{c}_{i,\lambda}\mathbf{c}^\top_{i,\lambda}$, $\mathbf{B}_\lambda\triangleq\frac{1}{n-1}\sum^n_{i=1}\mathbf{c}_{i,\lambda}\mathbf{k}^\top_i$, $d_{i,\lambda}\triangleq\mathbf{k}^\top_i(\kmat+\lambda_n\mathbf{I})^{-1}\mathbf{e}_i$,
 \begin{eqnarray}
 \mathbf{c}_{i,\lambda}&{}\triangleq{}&\mathbf{K}\mathbf{1}-\mathbf{k}_i-\mathbf{e}_i\mathbf{k}^\top_i\mathbf{1}+\mathbf{e}_ik(x_i,x_i)+\frac{\mathbf{e}_i\mathbf{k}^\top_i(\kmat+\lambda_n \mathbf{I})^{-1}\kmat\mathbf{1}}{1-d_{i,\lambda}}-\frac{\mathbf{e}_i\mathbf{k}^\top_i(\kmat+\lambda_n \mathbf{I})^{-1}\mathbf{k}_i}{1-d_{i,\lambda}}\nonumber\\
 &{}{}&\qquad-\frac{\mathbf{e}_i\mathbf{k}^\top_i(\kmat+\lambda_n \mathbf{I})^{-1}\mathbf{e}_i\mathbf{k}^\top_i\mathbf{1}}{1-d_{i,\lambda}}+
 \frac{\mathbf{e}_i\mathbf{k}^\top_i(\kmat+\lambda_n \mathbf{I})^{-1}\mathbf{e}_ik(x_i,x_i)}{1-d_{i,\lambda}},\nonumber
 \end{eqnarray}
 $\mathbf{k}_i$ is the $i^{th}$ column of $\kmat$, $\mathbf{1}\triangleq(1,\ldots,1)^\top$ and $\mathbf{e}_i\triangleq(0,0,\ldots,1,\ldots,0)^\top$ with $1$
 being in the $i^{th}$ position. Here $\emph{tr}(\mathbf{A})$ denotes the trace of a square matrix $\mathbf{A}$.
\end{proposition}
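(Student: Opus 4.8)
The plan is to establish the LOOCV formula for S-KMSE by carefully deriving a closed-form expression for the leave-one-out estimator $\hat{\mu}_\lambda^{(-i)}$ and then substituting it into the score \eqref{eq:loocv-score}. First I would recall, via Proposition~\ref{pro:equivalence}, that the full-sample S-KMSE based on a set of points $\{x_j\}$ can be written as $\Phi(\kmat + n\lambda\id)^{-1}\kmat\mathbf{1}_n$, so that the leave-one-out version is obtained by replacing the data matrix and Gram matrix by their $(n-1)$-point analogues. The key technical device is to avoid literally forming the $(n-1)\times(n-1)$ Gram matrix for each $i$; instead I would keep all computations in terms of the full $n\times n$ matrix $\kmat$, treating the removal of $x_i$ as a rank-one (and index-masking) modification. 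Concretely, the leave-one-out Gram matrix embedded back into $\rr^n$ differs from $\kmat$ by deleting the $i$th row and column, which can be encoded using the projector $\id - \mathbf{e}_i\mathbf{e}_i^\top$ together with the column $\mathbf{k}_i$ and the diagonal entry $k(x_i,x_i)$. Applying the Sherman--Morrison--Woodbury identity to $(\kmat + \lambda_n\id)$ under this modification yields the scalar $d_{i,\lambda}=\mathbf{k}_i^\top(\kmat+\lambda_n\id)^{-1}\mathbf{e}_i$ and the denominators $1-d_{i,\lambda}$ appearing in the stated $\mathbf{c}_{i,\lambda}$.

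Second, I would compute $\|k(\cdot,x_i) - \hat{\mu}_\lambda^{(-i)}\|^2_\hbspace$ by expanding the square using the reproducing property: $\|k(\cdot,x_i)\|^2_\hbspace = k(x_i,x_i)$, the cross term $\langle k(\cdot,x_i), \hat{\mu}_\lambda^{(-i)}\rangle_\hbspace$ evaluates $\hat{\mu}_\lambda^{(-i)}$ at $x_i$ and hence becomes a linear form in the kernel column $\mathbf{k}_i$, and $\|\hat{\mu}_\lambda^{(-i)}\|^2_\hbspace$ becomes a quadratic form $\mathbf{c}_{i,\lambda}^\top \kmat \,\mathbf{c}_{i,\lambda}$ up to scaling by $(n-1)^{-2}$. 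Summing over $i$ and recognizing $\sum_i \mathbf{c}_{i,\lambda}\mathbf{c}_{i,\lambda}^\top$ as $(n-1)^2\mathbf{A}_\lambda$ and $\sum_i \mathbf{c}_{i,\lambda}\mathbf{k}_i^\top$ as $(n-1)\mathbf{B}_\lambda$ lets me rewrite the two non-constant terms as traces $\frac{1}{n}\mathrm{tr}\big((\kmat+\lambda_n\id)^{-1}\kmat(\kmat+\lambda_n\id)^{-1}\mathbf{A}_\lambda\big)$ and $-\frac{2}{n}\mathrm{tr}\big((\kmat+\lambda_n\id)^{-1}\mathbf{B}_\lambda\big)$, with $\frac{1}{n}\sum_i k(x_i,x_i)$ left over; the factor $\lambda_n = (n-1)\lambda$ arises because the leave-one-out problem averages over $n-1$ points, so its regularized system is $\kmat^{(-i)} + (n-1)\lambda\id$.

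The main obstacle, and where I would spend the most care, is the bookkeeping in deriving the explicit form of $\mathbf{c}_{i,\lambda}$: one must correctly express the weight vector of $\hat{\mu}_\lambda^{(-i)}$ when it is re-embedded into $\rr^n$ (zero in the $i$th slot), track how the Sherman--Morrison correction contributes the six terms involving $(\kmat+\lambda_n\id)^{-1}$, and verify that the $\mathbf{e}_i$-projections and the $\mathbf{K}\mathbf{1} - \mathbf{k}_i - \mathbf{e}_i\mathbf{k}_i^\top\mathbf{1} + \mathbf{e}_i k(x_i,x_i)$ combination correctly represents $\kmat^{(-i)}\mathbf{1}_{n-1}$ after embedding. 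A secondary subtlety is the correct normalization: the paper's $\Phi$ carries a $1/\sqrt{n}$ (see \eqref{eq:temppp}), so I must be careful that the LOOCV estimator uses $1/\sqrt{n-1}$ consistently, and that this cancels appropriately in the final trace expressions. Once $\mathbf{c}_{i,\lambda}$ is pinned down, the remainder is routine matrix algebra and the identification of the sums with $\mathbf{A}_\lambda$ and $\mathbf{B}_\lambda$. I would also note in passing that this corrects the analogous computation in \citet[Theorem 4]{Muandet14:KMSE}, where an inappropriate choice of $\hat{\mu}^{(-i)}_\lambda$ was used, so a sanity check comparing the two derivations helps confirm correctness.
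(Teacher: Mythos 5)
Your overall architecture matches the paper's: invoke Proposition~\ref{pro:equivalence} to write the leave-one-out estimator over the $n-1$ remaining points (whence $\lambda_n=(n-1)\lambda$), use a matrix-inversion identity to avoid recomputing an inverse for each $i$, expand $\Vert k(\cdot,x_i)-\check{\mu}^{(-i)}_\lambda\Vert^2_\hbspace$ via the reproducing property, and collect the sums over $i$ into the traces involving $\mathbf{A}_\lambda$ and $\mathbf{B}_\lambda$. That final assembly is correct and is exactly how the paper concludes once it has $\check{\mu}^{(-i)}_\lambda=\frac{1}{n-1}\Phi(\kmat+\lambda_n\mathbf{I})^{-1}\mathbf{c}_{i,\lambda}$.

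The gap is in your central inversion step. You propose to treat the removal of $x_i$ as a rank-one modification of the Gram matrix and to extract the denominator $1-d_{i,\lambda}$ from a Sherman--Morrison update of $(\kmat+\lambda_n\mathbf{I})^{-1}$. But deleting the $i$th row and column of $\kmat$, re-embedded into $\rr^n$, is $(\id-\mathbf{e}_i\mathbf{e}_i^\top)\kmat(\id-\mathbf{e}_i\mathbf{e}_i^\top)=\kmat-\mathbf{e}_i\mathbf{k}_i^\top-\mathbf{k}_i\mathbf{e}_i^\top+k(x_i,x_i)\mathbf{e}_i\mathbf{e}_i^\top$, a rank-\emph{two} perturbation (and the padded matrix $(\id-\mathbf{e}_i\mathbf{e}_i^\top)(\kmat+\lambda_n\mathbf{I})(\id-\mathbf{e}_i\mathbf{e}_i^\top)$ is singular, so it cannot simply be inverted in $\rr^n$). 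A plain rank-one Sherman--Morrison does not apply there, and the single scalar denominator would not fall out; you would instead need a $2\times 2$ Woodbury correction or the block/Schur-complement inversion formula, with denominators of the form $\big((\kmat+\lambda_n\mathbf{I})^{-1}\big)_{ii}=(1-d_{i,\lambda})/\lambda_n$. The paper sidesteps this entirely by performing the rank-one update where it genuinely is rank one: on the empirical covariance operator, $\hat{\Sigma}^{(-i)}=\frac{n}{n-1}\big(\hat{\Sigma}_{\mathit{XX}}-\frac{a\otimes a}{n}\big)$ with $a=k(\cdot,x_i)$ (and $\hat{\mu}^{(-i)}=\frac{n}{n-1}(\hat{\mu}-\frac{a}{n})$). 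Sherman--Morrison applied to $(\hat{\Sigma}_{\mathit{XX}}+\lambda'_n I)-\frac{a\otimes a}{n}$ yields the denominator $n-\langle a,(\hat{\Sigma}_{\mathit{XX}}+\lambda'_n I)^{-1}a\rangle_\hbspace=n(1-d_{i,\lambda})$, and a list of identities of the form $(\hat{\Sigma}_{\mathit{XX}}+\lambda'_n I)^{-1}a=n\Phi(\kmat+\lambda_n\mathbf{I})^{-1}\mathbf{e}_i$ then converts every operator expression into the Gram-matrix terms appearing in $\mathbf{c}_{i,\lambda}$. Your plan can be repaired by switching to the operator-level rank-one update (or by replacing Sherman--Morrison with the block-inversion lemma on the bordered matrix), but as written the step that is supposed to produce $1-d_{i,\lambda}$ does not go through. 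A minor additional correction: $\Phi$ as defined in Proposition~\ref{pro:equivalence} carries no $1/\sqrt{n}$ factor (the $\sqrt{n}$'s in (\ref{eq:temppp}) cancel), so the normalization issue you flag is not actually present.
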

\begin{proof} See Section~\ref{proof:thm:akmse-loocv}.
\end{proof}
Unlike R-KMSE, a closed form expression for the minimizer of $LOOCV(\lambda)$ in Proposition~\ref{thm:akmse-loocv} is not possible and so proving the consistency of S-KMSE
along with results similar to those in Theorem~\ref{thm:skmse-loocv-consistency} are highly non-trivial. Hence, we are not able to provide any theoretical comparison of $\check{\mu}_\lambda$ (with $\lambda$
being chosen as a minimizer of $LOOCV(\lambda)$ in Proposition~\ref{thm:akmse-loocv}) with $\hat{\mu}$. However, in the next section, we provide an empirical comparison through
simulations where we show that the S-KMSE outperforms the empirical estimator.

\section{Proofs}
In this section, we present the missing proofs of the results of Sections~\ref{sec:inadmissibility}--\ref{sec:flexible}.

\subsection{Proof of Proposition~\ref{prop:zero-risk}}\label{proof:prop:zero-risk}
( $\Rightarrow$ ) If $\mathbb{P} = \delta_x$ for some $x\in\inspace$, then $\empmm=\mu=k(\cdot,x)$ and thus $\Delta=0$. \\
%  we have 
%  \begin{equation*}
%    \mu = \int_{\inspace} k(y,\cdot) \dd\delta_x(y) = k(x,\cdot) \,.
%  \end{equation*}
%  Then, it follows that
%  \begin{equation*}
%    R(\mu,\hat{\mu}) = \mathbb{E}_{\delta_x}\left[\|\hat{\mu} - \mu\|^2_{\hbspace}\right] = \mathbb{E}_{\delta_x}[\|\hat{\mu}\|^2_{\hbspace}] - \|\mu\|^2_{\hbspace} = \|k(x,\cdot)\|^2_{\hbspace} - \|k(x,\cdot)\|^2_{\hbspace} = 0.
%  \end{equation*}
  ( $\Leftarrow$ ) Suppose $\Delta = 0$. It follows from (\ref{eq:RiskKF}) that $\iint (k(x,x) - k(x,y)) \dd\pp{P}(x)\dd\pp{P}(y) = 0$. 
%  \begin{equation}
%    \label{eq:equality}
%    \int\int (k(x,x) - k(x,y)) \dd\pp{P}(x)\dd\pp{P}(y) = 0 \,.
%  \end{equation}
%  It is clear that if $k$ is a constant function, the equality \eqref{eq:equality} holds for any $\pp{P}$. Thus, constant kernels have to be ignored.
Since $k$ is translation invariant, this reduces to 
$$
\iint (\psi(0) - \psi(x-y))\dd\pp{P}(x)\dd\pp{P}(y) = 0.
$$ 
By invoking Bochner's theorem \cite[Theorem 6.6]{Wendland-05}, which states that $\psi$ is the Fourier transform of a non-negative finite Borel measure $\Lambda$, \ie, $\psi(x)=\int e^{-ix^{\top}\omega}\dd\Lambda(\omega),\,\,x\in\mathbb{R}^d$, we obtain 
(see (16) in the proof of Proposition 5 in \citet{Sriperumbudur10b:Universal})
$$\iint \psi(x-y)\dd\pp{P}(x)\dd\pp{P}(y)=\int \vert \phi_{\pp{P}}(\omega)\vert^2\dd\Lambda(\omega),$$
thereby yielding \begin{equation}\int (|\phi_{\pp{P}}(\omega)|^2 - 1) \dd\Lambda(\omega) = 0,\label{Eq:zero}\end{equation} 
where $\phi_\pp{P}$ is the characteristic function of $\pp{P}$. 
Note that $\phi_{\pp{P}}$ is uniformly continuous and $|\phi_{\pp{P}}|\leq 1$. Since $k$ is characteristic, Theorem 9 in \citet{Sriperumbudur10:Metrics} implies that $\mathrm{supp}(\Lambda) = \rr^d$, using 
which in (\ref{Eq:zero}) yields $|\phi_{\pp{P}}(\omega)| = 1$ for all $\omega\in\mathbb{R}^d$. Since $\phi_\pp{P}$ is positive definite on $\pp{R}^d$, it follows from \citet[Lemma 1.5.1]{Sasvari-13} 
that $\phi_\pp{P}(\omega)=e^{\sqrt{-1}\omega^{\top}x}$ for some $x\in\pp{R}^d$ and thus $\pp{P} = \delta_x$.

\subsection{Proof of Theorem~\ref{thm:conc}}\label{proof:thm:conc}
Before we prove Theorem~\ref{thm:conc}, we present Bernstein's inequality in separable Hilbert spaces, quoted from \citet[Theorem 3.3.4]{Yurinksy-95}, which will be used to prove Theorem~\ref{thm:conc}.
\begin{theorem}[Bernstein's inequality]\label{thm:bernstein}
Let $(\Omega,\mathcal{A},P)$ be a probability space, $H$ be a separable Hilbert space, $B>0$ and $\theta>0$. Furthermore, let $\xi_1,\ldots,\xi_n:\Omega\rightarrow H$ be zero mean independent
random variables satisfying 
\begin{equation}\sum^n_{i=1}\pp{E}\Vert \xi_i\Vert^m_{H}\le \frac{m!}{2}\theta^2B^{m-2}.\label{Eq:bern-main}\end{equation}
Then for any $\tau>0$,
$$P^n\left\{(\xi_1,\ldots,\xi_n):\left\Vert \sum^n_{i=1}\xi_i\right\Vert_H\ge 2B\tau+\sqrt{2\theta^2\tau}\right\}\le 2 e^{-\tau}.$$
\end{theorem}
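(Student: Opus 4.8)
The plan is to prove the tail bound by the exponential (Chernoff) method applied to the nonnegative real random variable $\Vert S_n\Vert_H$, where $S_n\triangleq\sum_{i=1}^n\xi_i$, combined with a supermartingale argument over the partial sums that exploits the $2$-smoothness of the Hilbert norm. First I would symmetrize the exponential: since $\Vert S_n\Vert_H\ge 0$, for any $\lambda>0$ we have $e^{\lambda\Vert S_n\Vert_H}\le 2\cosh(\lambda\Vert S_n\Vert_H)$, so Markov's inequality gives $P\big(\Vert S_n\Vert_H\ge r\big)\le 2e^{-\lambda r}\,\pp{E}\cosh(\lambda\Vert S_n\Vert_H)$. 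This already accounts for the factor $2$ in the conclusion and reduces the problem to bounding $\pp{E}\cosh(\lambda\Vert S_n\Vert_H)$.

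The crux is a one-step contraction estimate. Writing $\mathcal{F}_{k-1}$ for the $\sigma$-field generated by $\xi_1,\dots,\xi_{k-1}$ and $S_k=\sum_{i=1}^k\xi_i$, I would establish the conditional inequality
\[
\pp{E}\!\left[\cosh(\lambda\Vert S_k\Vert_H)\mid\mathcal{F}_{k-1}\right]\le \cosh(\lambda\Vert S_{k-1}\Vert_H)\,h_k(\lambda),\qquad h_k(\lambda)\triangleq 1+\sum_{m\ge 2}\frac{\lambda^m}{m!}\,\pp{E}\Vert \xi_k\Vert^m_H.
\]
Working with $\cosh$ rather than $\Vert S_k\Vert_H$ is deliberate: $\cosh$ is smooth at the origin, which sidesteps the $1/\Vert S_{k-1}\Vert_H$ singularity arising in a naive second-order expansion of the norm about a point. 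Proving this inequality is the main obstacle; it rests on the identity $\Vert s+\xi\Vert^2_H=\Vert s\Vert^2_H+2\langle s,\xi\rangle_H+\Vert\xi\Vert^2_H$, the hypothesis $\pp{E}\xi_k=0$ (which annihilates the cross term in expectation), and elementary convexity bounds for $\cosh$. This is exactly the step where the Hilbert (equivalently $2$-smooth) geometry is used, and where a Banach-space analogue would incur an additional smoothness constant.

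Granting the one-step estimate, I would iterate it to get $\pp{E}\cosh(\lambda\Vert S_n\Vert_H)\le\prod_{k=1}^n h_k(\lambda)$ and then feed in the moment hypothesis \eqref{Eq:bern-main}. Since $\log\prod_k h_k(\lambda)\le\sum_k\big(h_k(\lambda)-1\big)$, summing the Taylor series and using $\sum_{i=1}^n\pp{E}\Vert\xi_i\Vert^m_H\le\frac{m!}{2}\theta^2 B^{m-2}$ collapses everything into a geometric series in $\lambda B$: for $0<\lambda<1/B$,
\[
\log\pp{E}\cosh(\lambda\Vert S_n\Vert_H)\le\sum_{m\ge 2}\frac{\lambda^m}{m!}\sum_{i=1}^n\pp{E}\Vert\xi_i\Vert^m_H\le\frac{\theta^2}{2}\sum_{m\ge 2}\lambda^m B^{m-2}=\frac{\lambda^2\theta^2/2}{1-\lambda B}.
\]
Combining with the Chernoff step yields the Bernstein-type bound $P(\Vert S_n\Vert_H\ge r)\le 2\exp\!\big(-\lambda r+\tfrac{\lambda^2\theta^2/2}{1-\lambda B}\big)$ for all $\lambda\in(0,1/B)$.

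Finally I would invert this in $\lambda$. The quantity $r=2B\tau+\sqrt{2\theta^2\tau}$ is chosen so that a convenient (not necessarily sharp) choice of $\lambda\in(0,1/B)$ makes the exponent $-\lambda r+\frac{\lambda^2\theta^2/2}{1-\lambda B}$ at most $-\tau$; this is the routine Legendre-transform inversion of a Bernstein exponent and reflects the split of the deviation into the sub-Gaussian term $\sqrt{2\theta^2\tau}$ and the heavy-tail term $2B\tau$. The only genuinely delicate ingredient is the $\cosh$ contraction lemma of the second step; the Chernoff reduction, the geometric summation of the moment condition, and the final inversion are all standard.
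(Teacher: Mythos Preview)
The paper does not prove this theorem at all: it is simply quoted from \citet[Theorem 3.3.4]{Yurinksy-95} as a tool to be used in the proof of Theorem~\ref{thm:conc}, with no argument given. So there is no ``paper's own proof'' to compare against; you have supplied a proof sketch where the authors merely cite a reference.

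That said, your outline is essentially the standard route to such Hilbert-space Bernstein inequalities (in the spirit of Pinelis and Yurinsky): reduce to a Chernoff bound via $e^{\lambda\Vert S_n\Vert_H}\le 2\cosh(\lambda\Vert S_n\Vert_H)$, establish a one-step $\cosh$ supermartingale-type inequality using $\Vert s+\xi\Vert^2=\Vert s\Vert^2+2\langle s,\xi\rangle+\Vert\xi\Vert^2$ together with $\pp{E}\xi_k=0$, iterate, sum the moment hypothesis into the geometric series $\frac{\lambda^2\theta^2/2}{1-\lambda B}$, and invert. You correctly flag the $\cosh$ contraction lemma as the only nontrivial ingredient; everything else is routine. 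The sketch is sound at the level of a plan, though the conditional $\cosh$ inequality you state would need a careful derivation (this is exactly where the $2$-smoothness of the Hilbert norm enters, and where the constants $1$ in front of each moment term have to be checked).
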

% \begin{proposition}
%   \label{thm:conc}
%   Assume $k(x,y)=\psi(x-y),\,x,y\in\rr^d$ where $\psi(X)\ge 0,\,\forall\,x\in\pp{R}^d$ and $\psi$ is positive definite. Define 
% \begin{equation*} \hat{\Delta} \triangleq \frac{1}{n}\left(\widehat{\pp{E}}_xk(x,x)-\widehat{\pp{E}}_{x,\tilde{x}}k(x,\tilde{x})\right) \quad \text{and} \quad \|\hat{\mu}\|^2_{\hbspace} \triangleq \frac{1}{n^2}\sum_{i,j=1}^n k(x_i,x_j) 
% \end{equation*}
% \noindent where $\hat{\pp{E}}k(x,x)=\frac{1}{n}\sum^n_{i=1}k(x_i,x_i)$ and
% $\hat{\pp{E}}k(x,x')=\frac{1}{n(n-1)}\sum^n_{i\ne j}k(x_i,x_j)$ are the
% empirical estimators of $\pp{E}k(x,x)$ and $\pp{E}k(x,x')$ respectively. Then with probability at least $1-\delta$ over the choice of $\{x_i\}^n_{i=1}$, 
% \begin{equation*}
%   \hat{\Delta} - \Delta < \frac{1}{n}\sqrt{\frac{8\psi^2(0)\log\frac{1}{\delta}}{n}} \qquad \text{ and } \qquad 
%   \|\mu\|^2_{\hbspace} - \|\hat{\mu}\|^2_{\hbspace} < \frac{n-1}{n}\sqrt{\frac{8\psi^2(0)\log\frac{1}{\delta}}{n}} \,.
% \end{equation*}
% \end{proposition}
%% --- proof
\begin{proof}\textbf{(of Theorem~\ref{thm:conc})}
Consider
\begin{eqnarray}
\tilde{\alpha}-\alpha_\ast&{}={}&\frac{\hat{\Delta}}{\hat{\Delta}+\Vert
\hat{\mu}\Vert^2_\hbspace}-\frac{\Delta}{\Delta+\Vert
\mu\Vert^2_\hbspace}=\frac{\hat{\Delta}\Vert\mu\Vert^2_\hbspace-\Delta\Vert\hat{\mu}\Vert^2_\hbspace}{(\hat{\Delta}+\Vert
\hat{\mu}\Vert^2_\hbspace)(\Delta+\Vert
\mu\Vert^2_\hbspace)}\nonumber\\
%which after simplification yields
&{}={}&\frac{\hat{\Delta}(\Vert\mu\Vert^2_\hbspace-\Vert
\hat{\mu}\Vert^2_\hbspace)}{(\Delta+\Vert
\mu\Vert^2_\hbspace)(\hat{\Delta}+\Vert
\hat{\mu}\Vert^2_\hbspace)} + \frac{(\hat{\Delta}-\Delta)\Vert\hat{\mu}\Vert^2_\hbspace}{(\Delta+\Vert
\mu\Vert^2_\hbspace)(\hat{\Delta}+\Vert
\hat{\mu}\Vert^2_\hbspace)}\nonumber\\
&{}={}&\frac{\tilde{\alpha}(\Vert\mu\Vert^2_\hbspace-\Vert
\hat{\mu}\Vert^2_\hbspace)}{(\Delta+\Vert
\mu\Vert^2_\hbspace)} + \frac{(\hat{\Delta}-\Delta)(1-\tilde{\alpha})}{(\Delta+\Vert
\mu\Vert^2_\hbspace)}.\nonumber
\end{eqnarray}
Rearranging $\tilde{\alpha}$, we obtain
$$\tilde{\alpha}-\alpha_\ast=\frac{\alpha_\ast(\Vert\mu\Vert^2_\hbspace-\Vert
\hat{\mu}\Vert^2_\hbspace)+(1-\alpha_\ast)(\hat{\Delta}-\Delta)}{\hat{\Delta}+\Vert
\hat{\mu}\Vert^2_\hbspace}.$$
Therefore,
\begin{equation}|\tilde{\alpha}-\alpha_\ast|\le\frac{\alpha_\ast|\Vert\mu\Vert^2_\hbspace-\Vert
\hat{\mu}\Vert^2_\hbspace|+(1+\alpha_\ast)|\hat{\Delta}-\Delta|}{(\Delta+\Vert
\mu\Vert^2_\hbspace)-(\Vert\mu\Vert^2_\hbspace-\Vert
\hat{\mu}\Vert^2_\hbspace)+(\hat{\Delta}-\Delta)},\label{Eq:alpha}
% \frac{\hat{\Delta}|\Vert\mu\Vert^2_\hbspace-\Vert
% \hat{\mu}\Vert^2_\hbspace|}{(\Delta+\Vert
% \mu\Vert^2_\hbspace)(\hat{\Delta}+\Vert
% \hat{\mu}\Vert^2_\hbspace)} + \frac{|\hat{\Delta}-\Delta|\Vert\hat{\mu}\Vert^2_\hbspace}{(\Delta+\Vert
% \mu\Vert^2_\hbspace)(\hat{\Delta}+\Vert
% \hat{\mu}\Vert^2_\hbspace)}.\label{Eq:alpha}
\end{equation}
where it is easy to verify that
\begin{equation}\label{Eq:chain-1}
|\hat{\Delta}-\Delta|\le  \frac{|\pp{E}_{x,\tilde{x}}
k(x,\tilde{x})-\hat{\pp{E}}k(x,\tilde{x})|} { n }+\frac{|\hat{\pp{E}}k(x,x)-\pp{E}_xk(x,x)|}{n}.
\end{equation}
In the following we obtain bounds on $|\hat{\pp{E}}k(x,x)-\pp{E}_xk(x,x)|$, $|\pp{E}_{x,\tilde{x}}
k(x,\tilde{x})-\hat{\pp{E}}k(x,\tilde{x})|$ and $|\Vert\mu\Vert^2_\hbspace-\Vert
\hat{\mu}\Vert^2_\hbspace|$ when the kernel satisfies (\ref{Eq:bern-condition-1}) and (\ref{Eq:bern-condition-2}).\vspace{2mm}\\
\noindent\textbf{Bound on $|\hat{\pp{E}}k(x,x)-\pp{E}_xk(x,x)|$:} \vspace{1mm}\\
Since $k$ is a continuous kernel on a separable topological space $\mathcal{X}$, it follows from Lemma 4.33 of \citet{Steinwart-08} that $\hbspace$ is separable. By defining $\xi_i\triangleq k(x_i,x_i)-\pp{E}_xk(x.x)$, it follows from
(\ref{Eq:bern-condition-2}) that $\theta=\sqrt{n}\sigma_2$ and $B=\kappa_2$ and so by Theorem~\ref{thm:bernstein}, for any $\tau>0$, with probability at least $1-2e^{-\tau}$,
\begin{equation}
 |\hat{\pp{E}}k(x,x)-\pp{E}_xk(x,x)|\le \sqrt{\frac{2\sigma^2_2\tau}{n}}+\frac{2\kappa_2\tau}{n}.\label{Eq:bound-1}
\end{equation}
\noindent\textbf{Bound on $\Vert\hat{\mu}-\mu\Vert_\hbspace$:}\vspace{1mm}\\
By defining 
$\xi_i\triangleq k(\cdot,x_i)-\mu$ and using (\ref{Eq:bern-condition-1}), we have $\theta=\sqrt{n}\sigma_1$ and $B=\kappa_1$. Therefore, by Theorem~\ref{thm:bernstein}, for any $\tau>0$, 
with probability at least $1-2e^{-\tau}$,
\begin{equation}
 \Vert\hat{\mu}-\mu\Vert_\hbspace\le \sqrt{\frac{2\sigma^2_1\tau}{n}}+\frac{2\kappa_1\tau}{n}.\label{Eq:bound-2}
\end{equation}
\noindent\textbf{Bound on $|\Vert\hat{\mu}\Vert^2_\hbspace-\Vert\mu\Vert^2_\hbspace|$:}\vspace{1mm}\\
Since
$$\left|\Vert \hat{\mu}\Vert^2_\hbspace-\Vert\mu\Vert^2_\hbspace\right|\le(\Vert\hat{\mu}\Vert_\hbspace+\Vert\mu\Vert_\hbspace)\Vert\hat{\mu}-\mu\Vert_\hbspace\le (\Vert\hat{\mu}-\mu\Vert_\hbspace+2\Vert\mu\Vert_\hbspace)\Vert\hat{\mu}-\mu\Vert_\hbspace,$$
it follows from (\ref{Eq:bound-2}) that for any $\tau>0$, with probability at least $1-2e^{-\tau}$,
\begin{equation}\label{Eq:bound-3}
 \left|\Vert \hat{\mu}\Vert^2_\hbspace-\Vert\mu\Vert^2_\hbspace\right|\le D_1\sqrt{\frac{\tau}{n}}+D_2\left(\frac{\tau}{n}\right)+D_3\left(\frac{\tau}{n}\right)^{3/2}+D_4\left(\frac{\tau}{n}\right)^2,
\end{equation}
where $(D_i)^4_{i=1}$ are positive constants that depend only on $\sigma^2_1$, $\kappa$ and $\Vert\mu\Vert_\hbspace$, and not on $n$ and $\tau$.\vspace{2mm}\\
\noindent\textbf{Bound on $|\hat{\pp{E}}k(x,\tilde{x})-\pp{E}_{x,\tilde{x}}k(x,\tilde{x})|$:}\vspace{1mm}\\
Since
$$\hat{\pp{E}}k(x,\tilde{x})-\pp{E}_{x,\tilde{x}}k(x,\tilde{x})=\frac{n^2(\Vert\hat{\mu}\Vert^2_\hbspace-\Vert\mu\Vert^2_\hbspace)+n(\pp{E}_xk(x,x)-\hat{\pp{E}}k(x,x))+n(\Vert\mu\Vert^2_\hbspace-\pp{E}_xk(x,x))}{n(n-1)},$$
it follows from (\ref{Eq:bound-1}) and (\ref{Eq:bound-3}) that for any $\tau>0$, with probability at least $1-4e^{-\tau}$,
\begin{eqnarray}
|\hat{\pp{E}}k(x,\tilde{x})-\pp{E}_{x,\tilde{x}}k(x,\tilde{x})|&{}\le{}& F_1\sqrt{\frac{\tau}{n}}+F_2\left(\frac{\tau}{n}\right)+F_3\left(\frac{\tau}{n}\right)^{3/2}+F_4\left(\frac{\tau}{n}\right)^2+\frac{F_5}{n}\nonumber\\
&{}\le{}& F^\prime_1\sqrt{\frac{1+\tau}{n}}+F^\prime_2\left(\frac{1+\tau}{n}\right)+F^\prime_3\left(\frac{1+\tau}{n}\right)^{3/2}+F^\prime_4\left(\frac{1+\tau}{n}\right)^2,\label{Eq:bound-4}
% &{}{}&\frac{\pp{E}_xk^2(x,x)}{(n-1)\delta}+\sqrt{\frac{4n\Vert\mu\Vert^2_\hbspace\pp{E}_xk^2(x,x)}{(n-1)^2\delta}}+\sqrt{\frac{\pp{E}_xk^2(x,x)}{n(n-1)^2\delta}}\nonumber\\
% &{}{}&\qquad\qquad+\frac{|\Vert\mu\Vert^2_\hbspace-\pp{E}_xk(x,x)|}{n-1}\nonumber\\
% &{}\le{}& \frac{\pp{E}_xk^2(x,x)}{(n-1)\delta}+\sqrt{\frac{8\Vert\mu\Vert^2_\hbspace\pp{E}_xk^2(x,x)}{(n-1)\delta}}+\sqrt{\frac{\pp{E}_xk^2(x,x)}{(n-1)\delta}}\nonumber\\
% &{}{}&\qquad\qquad+\frac{|\Vert\mu\Vert^2_\hbspace-\pp{E}_xk(x,x)|}{\sqrt{(n-1)\delta}}\nonumber\\
% &{}={}&\frac{2\pp{E}_xk^2(x,x)}{n\delta}+\frac{F^\prime}{\sqrt{n\delta}},\label{eq:term2}
\end{eqnarray}
where $(F_i)^5_{i=1}$ and $(F^\prime_i)^4_{i=1}$ are positive constants that do not depend on $n$ and $\tau$. \vspace{2mm}\\
\noindent\textbf{Bound on $|\tilde{\alpha}-\alpha_\ast|$:}\vspace{1mm}\\
Using (\ref{Eq:bound-1}) and (\ref{Eq:bound-4}) in (\ref{Eq:chain-1}), for any $\tau>0$, with probability at least $1-4e^{-\tau}$,
$$|\hat{\Delta}-\Delta|\le \frac{F^{\prime\prime}_1}{n}\sqrt{\frac{1+\tau}{n}}+\frac{F^{\prime\prime}_2}{n}\left(\frac{1+\tau}{n}\right)+\frac{F^{\prime\prime}_3}{n}\left(\frac{1+\tau}{n}\right)^{3/2}+\frac{F^{\prime\prime}_4}{n}\left(\frac{1+\tau}{n}\right)^2,$$
using which in (\ref{Eq:alpha}) along with (\ref{Eq:bound-3}), we obtain that for any $\tau>0$,
with probability at least $1-4e^{-\tau}$,
\begin{equation}
|\tilde{\alpha}-\alpha_\ast|\le\frac{\sum^4_{i=1}\left(G_{i1}\alpha_\ast+\frac{G_{i2}}{n}(1+\alpha_\ast)\right)\left(\frac{1+\tau}{n}\right)^{i/2}}{\left|\theta_n-\sum^4_{i=1}\left(G_{i1}+\frac{G_{i2}}{n}\right)\left(\frac{1+\tau}{n}\right)^{i/2}\right|},\label{Eq:bound-5} 
\end{equation}
where $\theta_n\triangleq \Delta+\Vert\mu\Vert^2_\hbspace$ and $(G_{i1})^4_{i=1}$, $(G_{i2})^4_{i=1}$ are positive constants that do not depend on $n$ and $\tau$. Since $\alpha_\ast=\frac{\Delta}{\Delta+\Vert\mu\Vert^2_\hbspace}=\frac{\pp{E}_xk(x,x)-\pp{E}_{x,\tilde{x}}k(x,\tilde{x})}{\pp{E}_xk(x,x)+(n-1)\pp{E}_{x,\tilde{x}}k(x,\tilde{x})}=O(n^{-1})$ 
and $\theta_n=\frac{\pp{E}_xk(x,x)+(n-1)\Vert\mu\Vert^2_\hbspace}{n}=O(1)$ as $n\rightarrow\infty$, it follows from (\ref{Eq:bound-5}) that 
% 
% along with the observation $$\Delta+\Vert\mu\Vert^2_\hbspace=\frac{\pp{E}_xk(x,x)+(n-1)\Vert\mu\Vert^2_\hbspace}{n}\ge \frac{\pp{E}_xk(x,x)}{n}+\frac{1}{2}\Vert\mu\Vert^2_\hbspace\triangleq \theta_n,$$
% we obtain that for any $\tau>0$,
% with probability at least $1-4e^{-\tau}$
% \begin{equation}
% |\tilde{\alpha}-\alpha_\ast|\le \frac{F^{\prime\prime}_1}{\theta_n}\sqrt{\frac{1+\tau}{n}}+\frac{F^{\prime\prime}_2}{\theta_n}\left(\frac{1+\tau}{n}\right)+\frac{F^{\prime\prime}_3}{\theta_n}\left(\frac{1+\tau}{n}\right)^{3/2}+\frac{F^{\prime\prime}_4}{\theta_n}\left(\frac{1+\tau}{n}\right)^2,\label{Eq:bound-5} 
% \end{equation}
%thereby proving 
$|\tilde{\alpha}-\alpha_\ast|=O_{\pp{P}}(n^{-3/2})$ as $n\rightarrow \infty$.\vspace{2mm}\\
\noindent\textbf{Bound on $\left|\Vert \hat{\mu}_{\tilde{\alpha}}-\mu\Vert_\hbspace-\Vert\hat{\mu}_{\alpha_\ast}-\mu\Vert_\hbspace\right|$:}\vspace{1mm}\\
Using (\ref{Eq:bound-2}) and (\ref{Eq:bound-5}) in
$$\left|\Vert \hat{\mu}_{\tilde{\alpha}}-\mu\Vert_\hbspace-\Vert\hat{\mu}_{\alpha_\ast}-\mu\Vert_\hbspace\right|\le \Vert \hat{\mu}_{\tilde{\alpha}}- \hat{\mu}_{\alpha_\ast}\Vert_\hbspace\le |\tilde{\alpha}-\alpha_\ast|\Vert\hat{\mu}-\mu\Vert_\hbspace+|\tilde{\alpha}-\alpha_\ast|\Vert\mu\Vert_\hbspace,$$
for any $\tau>0$, with probability at least $1-4e^{-\tau}$, we have
\begin{eqnarray}
\left|\Vert \hat{\mu}_{\tilde{\alpha}}-\mu\Vert_\hbspace-\Vert\hat{\mu}_{\alpha_\ast}-\mu\Vert_\hbspace\right|&{}\le{}& \frac{\sum^6_{i=1}\left(G^\prime_{i1}\alpha_\ast+\frac{G^\prime_{i2}}{n}(1+\alpha_\ast)\right)\left(\frac{1+\tau}{n}\right)^{i/2}}{\left|\theta_n-\sum^4_{i=1}\left(G_{i1}+\frac{G_{i2}}{n}\right)\left(\frac{1+\tau}{n}\right)^{i/2}\right|},\label{Eq:bound-6} 
\end{eqnarray}
where $(G^\prime_{i1})^6_{i=1}$ and $(G^\prime_{i2})^6_{i=1}$ are positive constants that do not depend on $n$ and $\tau$.
% \begin{eqnarray}
% \left|\Vert \hat{\mu}_{\tilde{\alpha}}-\mu\Vert_\hbspace-\Vert\hat{\mu}_{\alpha_\ast}-\mu\Vert_\hbspace\right|&{}\le{}&  \frac{G_1}{\theta_n}\sqrt{\frac{1+\tau}{n}}+\frac{G_2}{\theta_n}\left(\frac{1+\tau}{n}\right)+\frac{G_3}{\theta_n}\left(\frac{1+\tau}{n}\right)^{3/2}\nonumber\\
% &{}{}&\,\,\,+\frac{G_4}{\theta_n}\left(\frac{1+\tau}{n}\right)^2+\frac{G_5}{\theta_n}\left(\frac{1+\tau}{n}\right)^{5/2}+\frac{G_6}{\theta_n}\left(\frac{1+\tau}{n}\right)^{3},\label{Eq:bound-6}
% \end{eqnarray}
% where $(G_i)^6_{i=1}$ are positive constants that do not depend on $n$ and $\tau$. 
From (\ref{Eq:bound-6}), it is easy to see that $\left|\Vert \hat{\mu}_{\tilde{\alpha}}-\mu\Vert_\hbspace-\Vert\hat{\mu}_{\alpha_\ast}-\mu\Vert_\hbspace\right|=O_{\pp{P}}(n^{-3/2})$ as 
$n\rightarrow \infty$.\vspace{2mm}\\
\noindent\textbf{Bound on $\pp{E}\Vert \hat{\mu}_{\tilde{\alpha}}-\mu\Vert^2_\hbspace-\pp{E}\Vert\hat{\mu}_{\alpha_\ast}-\mu\Vert^2_\hbspace$:}\vspace{1mm}\\
Since
\begin{eqnarray}
\Vert\hat{\mu}_{\tilde{\alpha}}-\mu\Vert^2_{\hbspace}-\Vert\hat{\mu}_{\alpha_\ast}-\mu\Vert^2_{\hbspace}&{}\le{}&(\Vert\hat{\mu}_{\tilde{\alpha}}-\mu\Vert_{\hbspace}+\Vert\hat{\mu}_{\alpha_\ast}-\mu\Vert_{\hbspace})\left|\Vert\hat{\mu}_{\tilde{\alpha}}-\mu\Vert_{\hbspace}-\Vert\hat{\mu}_{\alpha_\ast}-\mu\Vert_{\hbspace}\right|\nonumber\\
&{}\le{}& 2(\Vert\hat{\mu}\Vert_\hbspace+\Vert\mu\Vert_\hbspace)\left|\Vert\hat{\mu}_{\tilde{\alpha}}-\mu\Vert_{\hbspace}-\Vert\hat{\mu}_{\alpha_\ast}-\mu\Vert_{\hbspace}\right|\nonumber\\
&{}\le{}&2(\Vert\hat{\mu}-\mu\Vert_\hbspace+2\Vert\mu\Vert_\hbspace)\left|\Vert\hat{\mu}_{\tilde{\alpha}}-\mu\Vert_{\hbspace}-\Vert\hat{\mu}_{\alpha_\ast}-\mu\Vert_{\hbspace}\right|,\nonumber
\end{eqnarray}
for any $\tau>0$, with probability at least $1-4e^{-\tau}$,
\begin{eqnarray}
\Vert\hat{\mu}_{\tilde{\alpha}}-\mu\Vert^2_{\hbspace}-\Vert\hat{\mu}_{\alpha_\ast}-\mu\Vert^2_{\hbspace}&{}\le{}& \frac{\sum^8_{i=1}\left(G^{\prime\prime}_{i1}\alpha_\ast+\frac{G^{\prime\prime}_{i2}}{n}(1+\alpha_\ast)\right)\left(\frac{1+\tau}{n}\right)^{i/2}}{\left|\theta_n-\sum^4_{i=1}\left(G_{i1}+\frac{G_{i2}}{n}\right)\left(\frac{1+\tau}{n}\right)^{i/2}\right|},\nonumber\\%\label{Eq:bound-6} 
&{}\le{}&\frac{\sum^8_{i=1}\left(G^{\prime\prime}_{i1}\alpha_\ast+\frac{G^{\prime\prime}_{i2}}{n}(1+\alpha_\ast)\right)\left(\frac{1+\tau}{n}\right)^{i/2}}{\left|\theta_n-\sum^4_{i=1}\left(G_{i1}+\frac{G_{i2}}{n}\right)\left(\frac{1}{n}\right)^{i/2}\right|},\nonumber\\
&{}\le{}&\begin{cases} 
          \frac{\gamma_n}{\phi_n}\sqrt{\frac{1+\tau}{n}},&\mbox{} 0<\tau\le n-1\\
          \frac{\gamma_n}{\phi_n}\left(\frac{1+\tau}{n}\right)^4,&\mbox{} \tau\ge n-1\nonumber
         \end{cases},
\end{eqnarray}
where $\gamma_n\triangleq H_1\alpha_\ast+\frac{H_2}{n}(1+\alpha_\ast)$, $\phi_n\triangleq \left|\theta_n-\sum^4_{i=1}\left(G_{i1}+\frac{G_{i2}}{n}\right)\left(\frac{1}{n}\right)^{i/2}\right|$ and 
$(H_i)^2_{i=1}$ are positive constants that do not depend on $n$ and $\tau$. In other words,
$$\pp{P}\left(\Vert\hat{\mu}_{\tilde{\alpha}}-\mu\Vert^2_{\hbspace}-\Vert\hat{\mu}_{\alpha_\ast}-\mu\Vert^2_{\hbspace}>\epsilon\right)\le
\begin{cases}
4\exp\left(1-n\left(\frac{\epsilon\phi_n}{\gamma_n}\right)^2\right),&\mbox{}\frac{\gamma_n}{\phi_n\sqrt{n}}\le\epsilon\le\frac{\gamma_n}{\phi_n}\\
4\exp\left(1-n\left(\frac{\epsilon\phi_n}{\gamma_n}\right)^{1/4}\right),&\mbox{}\epsilon\ge\frac{\gamma_n}{\phi_n}\nonumber
\end{cases}.
$$
Therefore,
\begin{eqnarray}
\pp{E}\Vert\hat{\mu}_{\tilde{\alpha}}-\mu\Vert^2_{\hbspace}-\pp{E}\Vert\hat{\mu}_{\alpha_\ast}-\mu\Vert^2_{\hbspace}&{}={}&\int^\infty_0 \pp{P}\left(\Vert\hat{\mu}_{\tilde{\alpha}}-\mu\Vert^2_{\hbspace}-\Vert\hat{\mu}_{\alpha_\ast}-\mu\Vert^2_{\hbspace}>\epsilon\right)\,\dd\epsilon\nonumber\\
&{}\le{}&\frac{\gamma_n}{\phi_n\sqrt{n}}+4\int^{\frac{\gamma_n}{\phi_n}}_{\frac{\gamma_n}{\phi_n\sqrt{n}}}\exp\left(1-n\left(\frac{\epsilon\phi_n}{\gamma_n}\right)^2\right)\,\dd\epsilon\nonumber\\
&{}{}&\qquad+\,4\int^\infty_{\frac{\gamma_n}{\phi_n}}\exp\left(1-n\left(\frac{\epsilon\phi_n}{\gamma_n}\right)^{1/4}\right)\,\dd\epsilon\nonumber\\
&{}={}& \frac{\gamma_n}{\phi_n\sqrt{n}}+\frac{2\gamma_n}{\phi_n\sqrt{n}}\int^{n-1}_0\frac{e^{-t}}{\sqrt{t+1}}\,\dd t
+\frac{16e\gamma_n}{n^4\phi_n}\int^\infty_{n}t^3e^{-t}\,\dd t.\nonumber
\end{eqnarray}
Since $\int^{n-1}_0\frac{e^{-t}}{\sqrt{t+1}}\,\dd t\le\int^\infty_0 e^{-t}\,\dd t=1$ and $\int^\infty_n t^3 e^{-t}\,\dd t\le \int^\infty_0 t^3e^{-t}\,\dd t=6$, we have
$$\pp{E}\Vert\hat{\mu}_{\tilde{\alpha}}-\mu\Vert^2_{\hbspace}-\pp{E}\Vert\hat{\mu}_{\alpha_\ast}-\mu\Vert^2_{\hbspace}\le \frac{3\gamma_n}{\phi_n\sqrt{n}}+\frac{96e\gamma_n}{n^4\phi_n}.$$
The claim in (\ref{Eq:oracle}) follows by noting that $\gamma_n=O(n^{-1})$ and $\phi_n=O(1)$ as $n\rightarrow\infty$.\vspace{-5mm}
\end{proof}

\subsection{Proof of Proposition~\ref{thm:skmse-loocv}}\label{proof:thm:skmse-loocv}
Define $\alpha\triangleq\frac{\lambda}{\lambda+1}$ and $\phi(x_i)\triangleq k(\cdot,x_i)$. Note that%The LOOCV score in (\ref{}) is given by
% \begin{equation*}
%   LOOCV(\lambda) \triangleq \frac{1}{n}\sum_{i=1}^n\left\|\frac{(1-\alpha)}{n-1}\sum_{j\ne i}\phi(x_j) - \phi(x_i)\right\|^2_{\hbspace},
% \end{equation*}
% \noindent which can be simplified as
\begin{eqnarray*}
LOOCV(\lambda) &{}\triangleq{}& \frac{1}{n}\sum_{i=1}^n\left\|\frac{(1-\alpha)}{n-1}\sum_{j\ne i}\phi(x_j) - \phi(x_i)\right\|^2_{\hbspace}\nonumber\\
  %LOOCV(\lambda) 
  &{}={}& \frac{1}{n}\sum_{i=1}^n\left\|\frac{n(1-\alpha)}{n-1}\hat{\mu} - \frac{1-\alpha}{n-1}\phi(x_i) - \phi(x_i)\right\|^2_{\hbspace} \\
  &{}={}& \left\|\frac{n(1-\alpha)}{n-1}\hat{\mu}\right\|^2_{\hbspace}
  -
  \frac{2}{n}\left\langle\sum_{i=1}^n\frac{n-\alpha}{n-1}\phi(x_i),\frac{n(1-\alpha)}{n-1}\hat{\mu}\right\rangle_{\hbspace}+ \frac{1}{n}\sum_{i=1}^n\left\|\frac{n-\alpha}{n-1}\phi(x_i)\right\|^2_{\hbspace} \\
%  &=& \frac{n^2(1-\alpha)^2}{(n-1)^2}\|\hat{\mu}\|^2_{\hbspace} - \left(\frac{2}{n}\right)\left(\frac{(n-\alpha)n}{n-1}\right)\left(\frac{n(1-\alpha)}{n-1}\right)\|\hat{\mu}\|^2_{\hbspace} \\
%  && + \frac{1}{n}\left(\frac{n-\alpha}{n-1}\right)^2\sum_{i=1}^n k(x_i,x_i) \\
  &{}={}& \left(\frac{n^2(1-\alpha)^2}{(n-1)^2} - \frac{2n(n-\alpha)(1-\alpha)}{(n-1)^2}\right)\|\hat{\mu}\|^2_{\hbspace}  + \frac{(n-\alpha)^2}{n(n-1)^2}\sum_{i=1}^n k(x_i,x_i) \\
%.
%\end{eqnarray*}
%Let $\rho:=\frac{1}{n^2}\sum_{i,j=1}^nk(x_i,x_j)$ and $\varrho:=\frac{1}{n}\sum_{i=1}^nk(x_i,x_i)$. Then, the leave-one-out score becomes
%\begin{eqnarray*}
%  LOOCV(\alpha) 
  &{}={}& \frac{1}{(n-1)^2}\left\{\alpha^2(n^2\rho-2n\rho+\varrho)+2n\alpha(\rho-\varrho)+n^2(\varrho-\rho)\right\}\triangleq\frac{F(\alpha)}{(n-1)^2}.
\end{eqnarray*} 
Since $\frac{d}{d\lambda}LOOCV(\lambda)=(n-1)^{-2}\frac{d}{d\alpha}F(\alpha)\frac{d\alpha}{d\lambda}=(n-1)^{-2}(1+\lambda)^{-2}\frac{d}{d\alpha}F(\alpha)$, equating it zero yields (\ref{eq:skmse-optimal}).
%Taking the derivative of $LOOCV(\alpha)$ with respect to $\alpha$ and setting it to zero yields
% \begin{equation}
%   \label{eq:loocv-alpha}
%   \alpha_* = \frac{\varrho - \rho}{(n-2)\rho + \varrho/n} \,.
% \end{equation}
It is easy to show that the second derivative of $LOOCV(\lambda)$ is positive implying that $LOOCV(\lambda)$ is strictly convex and so $\lambda_r$ is unique.
%Therefore the minimizer $\alpha_*$ is unique. Setting $\lambda_* = \alpha_*/(1-\alpha_*)$ gives \eqref{thm:skmse-loocv} as required.

\subsection{Proof of Theorem~\ref{thm:skmse-loocv-consistency}}\label{proof:thm:skmse-loocv-consistency}
 Since $\hat{\mu}_{\lambda_r}=\frac{\hat{\mu}}{1+\lambda_r}=(1-\alpha_r)\hat{\mu}$, we have %=\frac{(n-1)(n\rho-\varrho)}{n(n-2)\rho+\varrho}\hat{\mu}$, we have
 $\Vert \hat{\mu}_{\lambda_r}-\mu\Vert_\hbspace\le \alpha_r\Vert \hat{\mu}\Vert_\hbspace+\Vert\hat{\mu}-\mu\Vert_\hbspace$. Note that
 $$\alpha_r=\frac{n(\varrho-\rho)}{n(n-2)\rho+\varrho}=\frac{n\hat{\Delta}}{\hat{\Delta}+(n-1)\Vert\hat{\mu}\Vert^2_\hbspace}=\frac{\hat{\pp{E}}k(x,x)-\hat{\pp{E}}k(x,\tilde{x})}{\hat{\pp{E}}k(x,x)+(n-2)\hat{\pp{E}}k(x,\tilde{x})},$$
 where $\hat{\Delta}$, $\Vert\hat{\mu}\Vert^2_\hbspace$, $\hat{\pp{E}}k(x,x)$ and $\hat{\pp{E}}k(x,\tilde{x})$ are defined in Theorem~\ref{thm:conc}. Consider
 $|\alpha_r-\alpha_\ast|\le |\alpha_r-\tilde{\alpha}|+|\tilde{\alpha}-\alpha_\ast|$ where $\tilde{\alpha}$ is defined in (\ref{eq:empirical-alpha}). From Theorem~\ref{thm:conc}, we have
 $|\tilde{\alpha}-\alpha_\ast|=O_\pp{P}(n^{-3/2})$ as $n\rightarrow\infty$ and
 \begin{eqnarray}
\alpha_r-\tilde{\alpha}&{}={}&\frac{\hat{\pp{E}}k(x,x)-\hat{\pp{E}}k(x,\tilde{x})}{\hat{\pp{E}}k(x,x)+(n-2)\hat{\pp{E}}k(x,\tilde{x})}-\frac{\hat{\pp{E}}k(x,x)-\hat{\pp{E}}k(x,\tilde{x})}{2\hat{\pp{E}}k(x,x)+(n-2)\hat{\pp{E}}k(x,\tilde{x})}\nonumber\\
&{}={}&\frac{\tilde{\alpha}\hat{\pp{E}}k(x,x)}{\hat{\pp{E}}k(x,x)+(n-2)\hat{\pp{E}}k(x,\tilde{x})}=(\tilde{\alpha}-\alpha_\ast)\beta+\alpha_\ast\beta,\nonumber
 \end{eqnarray}
where $\beta\triangleq\frac{\hat{\pp{E}}k(x,x)}{\hat{\pp{E}}k(x,x)+(n-2)\hat{\pp{E}}k(x,\tilde{x})}$. Therefore, $|\alpha_r-\tilde{\alpha}|\le |\tilde{\alpha}-\alpha_\ast||\beta|+\alpha_\ast|\beta|$, where
$\alpha_\ast=O(n^{-1})$ as $n\rightarrow\infty$, which follows from Remark~\ref{rem:consistent}(i). Since $|\hat{\pp{E}}k(x,x)-\pp{E}_{x}k(x,x)|=O_\pp{P}(n^{-1/2})$ and $|\hat{\pp{E}}k(x,\tilde{x})-\pp{E}_{x,\tilde{x}}k(x,\tilde{x})|=O_\pp{P}(n^{-1/2})$, which follow from
(\ref{Eq:bound-1}) and (\ref{Eq:bound-4}) respectively, we have $|\beta|=O_{\pp{P}}(n^{-1})$ as $n\rightarrow\infty$. Combining the above, we have $|\alpha_r-\tilde{\alpha}|=O_\pp{P}(n^{-2})$, thereby yielding
$|\alpha_r-\alpha_\ast|=O_\pp{P}(n^{-3/2})$. Proceeding as in Theorem~\ref{thm:conc}, we have
$$\left| \Vert \hat{\mu}_{\lambda_r}-\mu\Vert_\hbspace-\Vert \hat{\mu}_{\alpha_\ast}-\mu\Vert_\hbspace\right|\le \Vert \hat{\mu}_{\lambda_r}-\mu_{\alpha_\ast}\Vert_\hbspace\le |\alpha_r-\alpha_\ast|\Vert \hat{\mu}-\mu\Vert_\hbspace+|\alpha_r-\alpha_\ast|\Vert\mu\Vert_\hbspace,$$
which from the above follows that $\left| \Vert \hat{\mu}_{\lambda_r}-\mu\Vert_\hbspace-\Vert \hat{\mu}_{\alpha_\ast}-\mu\Vert_\hbspace\right|=O_{\pp{P}}(n^{-3/2})$ as $n\rightarrow\infty$. By 
arguing as in Remark~\ref{rem:consistent}(i), it is easy to show that $\hat{\mu}_{\lambda_r}$ is a $\sqrt{n}$-consistent estimator of $\mu$. (\ref{eq:risk-lambda}) follows by 
carrying out the analysis as in the proof of Theorem~\ref{thm:conc} verbatim by replacing $\tilde{\alpha}$ with $\alpha_r$, while (\ref{eq:risk-lambda-1}) follows by appealing to 
Remark~\ref{rem:consistent}(ii).

\subsection{Proof of Proposition~\ref{pro:equivalence}}\label{proof:pro:equivalence}
First note that for any $i\in \{1,\ldots,n\}$,
$$\hat{\Sigma}_{\mathit{XX}}k(\cdot,x_i)=\frac{1}{n}\sum^n_{j=1}k(\cdot,x_j)k(x_i,x_j)=\frac{1}{n}\Phi \mathbf{k}^\top_i$$
with $\mathbf{k}_i$ being the $i^{th}$ row of $\kmat$. This implies for any $\mathbf{a}\in\pp{R}^n$,
$$\hat{\Sigma}_{\mathit{XX}}\Phi\mathbf{a}=\hat{\Sigma}_{\mathit{XX}}\left(\sum^n_{i=1}a_ik(\cdot,x_i)\right)\stackrel{(*)}{=}\sum^n_{i=1}a_i\hat{\Sigma}_{\mathit{XX}}k(\cdot,x_i)=\frac{1}{n}\sum^n_{i=1}a_i\Phi \mathbf{k}^\top_i,$$
where $(*)$ holds since $\hat{\Sigma}_{\mathit{XX}}$ is a linear operator. Also, since $\Phi$ is a linear operator, we obtain
\begin{equation}\hat{\Sigma}_{\mathit{XX}}\Phi\mathbf{a}=\frac{1}{n}\Phi\left(\sum^n_{i=1}a_i\mathbf{k}^\top_i\right)=\frac{1}{n}\Phi\mathbf{Ka}.\label{Eq:equality}\end{equation}
To prove the result, let us define $\mathbf{a}\triangleq(\kmat+n\lambda \mathbf{I})^{-1}\kmat\mathbf{1}_n$ and consider
\begin{eqnarray}
(\hat{\Sigma}_{\mathit{XX}}+\lambda I)\Phi \mathbf{a}&{}\stackrel{(\ref{Eq:equality})}{=}{}&n^{-1}\Phi\mathbf{Ka}+\lambda \Phi\mathbf{a}=\Phi(n^{-1}\mathbf{K}+\lambda \mathbf{I})\mathbf{a}=\frac{1}{n}\Phi\kmat\mathbf{1}_n\stackrel{(\ref{Eq:equality})}{=}\hat{\Sigma}_{\mathit{XX}}\Phi\mathbf{1}_n=\hat{\Sigma}_{\mathit{XX}}\hat{\mu}.\nonumber
\end{eqnarray}
% 
% 
% Choosing $(\kmat+n\lambda \mathbf{I})^{-1}\kmat\mathbf{1}_n$ for $\mathbf{a}$ in (\ref{Eq:equality}) yields the result, which we show below:
% 
% 
% Consider
% \begin{equation}(\hat{\Sigma}_{XX}+\lambda I)\Phi(\kmat+n\lambda \mathbf{I})^{-1}\kmat\mathbf{1}_n=(\hat{\Sigma}_{XX}\Phi+\lambda \Phi)(\kmat+n\lambda \mathbf{I})^{-1}\kmat \mathbf{1}_n.\label{eq:main}\end{equation}
% Note that $\hat{\Sigma}_{XX}\Phi=[\hat{\Sigma}_{XX}k(\cdot,x_1),\ldots, \hat{\Sigma}_{XX}k(\cdot,x_n)]$ where for any $i\in \{1,\ldots,n\}$,
% $$\hat{\Sigma}_{XX}k(\cdot,x_i)=\frac{1}{n}\sum^n_{j=1}k(\cdot,x_j)k(x_i,x_j)=\frac{1}{n}\Phi \mathbf{k}^\top_i$$
% with $\mathbf{k}_i$ being the $i^{th}$ row of $\kmat$. Therefore $\hat{\Sigma}_{XX}\Phi=\frac{1}{n}\Phi \kmat$. Using this in (\ref{eq:main}), we have
% \begin{eqnarray}
% (\hat{\Sigma}_{XX}+\lambda I)\Phi(\kmat+n\lambda \mathbf{I})^{-1}\kmat\mathbf{1}_n&{}={}&(n^{-1}\Phi \kmat+\lambda \Phi)(\kmat+n\lambda \mathbf{I})^{-1}\kmat\mathbf{1}_n\nonumber\\
% &{}={}&\Phi(n^{-1}\kmat+\lambda \mathbf{I})(\mathbf{K}+n\lambda \mathbf{I})^{-1}\kmat \mathbf{1}_n\nonumber\\
% &{}={}&\frac{1}{n}\Phi\kmat\mathbf{1}_n=\hat{\Sigma}_{XX}\Phi\mathbf{1}_n=\hat{\Sigma}_{XX}\hat{\mu}.\nonumber
% \end{eqnarray}
Multiplying to the left on both sides of the above equation by $(\hat{\Sigma}_{\mathit{XX}}+\lambda I)^{-1}$, we obtain $\Phi(\kmat+n\lambda \mathbf{I})^{-1}\kmat\mathbf{1}_n=(\hat{\Sigma}_{\mathit{XX}}+\lambda I)^{-1}\hat{\Sigma}_{\mathit{XX}}\hat{\mu}$ 
and the result follows by noting that $(\hat{\Sigma}_{\mathit{XX}}+\lambda I)^{-1}\hat{\Sigma}_{\mathit{XX}}=\hat{\Sigma}_{\mathit{XX}}(\hat{\Sigma}_{\mathit{XX}}+\lambda I)^{-1}$.

\subsection{Proof of Theorem~\ref{thm:consistency-fkmse}}\label{proof:thm:consistency-fkmse}
By Proposition~\ref{pro:equivalence}, we have $\check{\mu}_\lambda=(\hat{\Sigma}_{\mathit{XX}}+\lambda I)^{-1}\hat{\Sigma}_{\mathit{XX}}\hat{\mu}$. Define $\mu_\lambda\triangleq (\Sigma_{\mathit{XX}}+\lambda I)^{-1}\Sigma_{\mathit{XX}}\mu$.
Let us consider the decomposition $\check{\mu}_\lambda-\mu= (\check{\mu}_\lambda-\mu_\lambda)+ (\mu_\lambda-\mu)$ with
\begin{eqnarray}
\check{\mu}_\lambda-\mu_\lambda &{}={}& (\hat{\Sigma}_{\mathit{XX}}+\lambda I)^{-1}(\hat{\Sigma}_{\mathit{XX}}\hat{\mu}-\hat{\Sigma}_{\mathit{XX}}\mu_\lambda-\lambda\mu_\lambda)\nonumber\\
&{}\stackrel{(*)}{=}{}& (\hat{\Sigma}_{\mathit{XX}}+\lambda I)^{-1}(\hat{\Sigma}_{\mathit{XX}}\hat{\mu}-\hat{\Sigma}_{\mathit{XX}}\mu_\lambda-\Sigma_{\mathit{XX}}\mu+\Sigma_{\mathit{XX}}\mu_\lambda)\nonumber\\
&{}={}&(\hat{\Sigma}_{\mathit{XX}}+\lambda I)^{-1}\hat{\Sigma}_{\mathit{XX}}(\hat{\mu}-\mu)-(\hat{\Sigma}_{\mathit{XX}}+\lambda I)^{-1}\hat{\Sigma}_{\mathit{XX}}(\mu_\lambda-\mu)\nonumber\\
&{}{}&\qquad\qquad+(\hat{\Sigma}_{\mathit{XX}}+\lambda I)^{-1}\Sigma_{\mathit{XX}}(\mu_\lambda-\mu),\nonumber
\end{eqnarray}
where we used $\lambda\mu_\lambda=\Sigma_{\mathit{XX}}\mu-\Sigma_{\mathit{XX}}\mu_\lambda$ in $(*)$. By defining $\mathcal{A}(\lambda)\triangleq\Vert \mu_\lambda-\mu\Vert_\hbspace$, we have
\begin{eqnarray}
 \Vert \check{\mu}_\lambda-\mu\Vert_\hbspace &{}\le{}& \Vert (\hat{\Sigma}_{\mathit{XX}}+\lambda I)^{-1}\hat{\Sigma}_{\mathit{XX}}(\hat{\mu}-\mu)\Vert_\hbspace + \Vert (\hat{\Sigma}_{\mathit{XX}}+\lambda I)^{-1}\hat{\Sigma}_{\mathit{XX}}(\mu_\lambda-\mu)\Vert_\hbspace\nonumber\\
 &{}{}&\qquad\qquad+\Vert (\hat{\Sigma}_{\mathit{XX}}+\lambda I)^{-1}\Sigma_{\mathit{XX}}(\mu_\lambda-\mu)\Vert_\hbspace+\mathcal{A}(\lambda)\nonumber\\
 &{}\le{}& \Vert (\hat{\Sigma}_{\mathit{XX}}+\lambda I)^{-1}\hat{\Sigma}_{\mathit{XX}}\Vert \left(\Vert\hat{\mu}-\mu\Vert_\hbspace + \mathcal{A}(\lambda)\right)+\Vert (\hat{\Sigma}_{\mathit{XX}}+\lambda I)^{-1}\Sigma_{\mathit{XX}}\Vert \mathcal{A}(\lambda)\nonumber\\
 &{}{}&\qquad\qquad+\mathcal{A}(\lambda),\label{eq:H}
\end{eqnarray}
where for any bounded linear operator $B$, $\Vert B\Vert$ denotes its operator norm. We now bound 
$\Vert (\hat{\Sigma}_{\mathit{XX}}+\lambda I)^{-1}\Sigma_{\mathit{XX}}\Vert$ as follows. It is easy to show that
\begin{eqnarray}
(\hat{\Sigma}_{\mathit{XX}}+\lambda I)^{-1}\Sigma_{\mathit{XX}}&{}={}&\left(I-(\Sigma_{\mathit{XX}}+\lambda I)^{-1}(\Sigma_{\mathit{XX}}-\hat{\Sigma}_{\mathit{XX}})\right)^{-1}(\Sigma_{\mathit{XX}}+\lambda I)^{-1}\Sigma_{\mathit{XX}}\nonumber\\
&{}={}&\left(\sum^\infty_{j=0}\left((\Sigma_{\mathit{XX}}+\lambda I)^{-1}(\Sigma_{\mathit{XX}}-\hat{\Sigma}_{\mathit{XX}})\right)^j\right)(\Sigma_{\mathit{XX}}+\lambda I)^{-1}\Sigma_{\mathit{XX}},\nonumber
\end{eqnarray}
where the last line denotes the Neumann series
and therefore
\begin{eqnarray}
\Vert(\hat{\Sigma}_{\mathit{XX}}+\lambda I)^{-1}\Sigma_{\mathit{XX}}\Vert&{}\le{}& \sum^\infty_{j=0}\left\Vert (\Sigma_{\mathit{XX}}+\lambda I)^{-1}(\Sigma_{\mathit{XX}}-\hat{\Sigma}_{\mathit{XX}})\right\Vert^j \Vert(\Sigma_{\mathit{XX}}+\lambda I)^{-1}\Sigma_{\mathit{XX}}\Vert\nonumber\\
&{}\le{}&  \sum^\infty_{j=0}\left\Vert (\Sigma_{\mathit{XX}}+\lambda I)^{-1}(\Sigma_{\mathit{XX}}-\hat{\Sigma}_{\mathit{XX}})\right\Vert^j_{\text{HS}},\nonumber
\end{eqnarray}
where we used $\Vert(\Sigma_{\mathit{XX}}+\lambda I)^{-1}\Sigma_{\mathit{XX}}\Vert\le 1$ and the fact that $\Sigma_{\mathit{XX}}$ and $\hat{\Sigma}_{\mathit{XX}}$ are Hilbert-Schmidt operators on 
$\hbspace$ as $\Vert \Sigma_{\mathit{XX}}\Vert_{\text{HS}}\le\kappa<\infty$ 
and $\Vert \hat{\Sigma}_{\mathit{XX}}\Vert_{\text{HS}}\le\kappa<\infty$ with $\kappa$ being the bound on the kernel. Define $\eta:\mathcal{X}\rightarrow \text{HS}(\hbspace)$,
$\eta(x)=(\Sigma_{\mathit{XX}}+\lambda I)^{-1}(\Sigma_{\mathit{XX}}-\Sigma_x)$, where $\text{HS}(\hbspace)$ is the space of
Hilbert-Schmidt operators on $\hbspace$ and $\Sigma_x\triangleq k(\cdot,x)\otimes k(\cdot,x)$. Observe that
$\pp{E}\frac{1}{n}\sum^n_{i=1}\eta(x_i)=0$. Also, for all $i\in\{1,\ldots,n\}$,
$\Vert\eta(x_i)\Vert_{\text{HS}}\le \Vert (\Sigma_{\mathit{XX}}+\lambda I)^{-1}\Vert \Vert
\Sigma_{\mathit{XX}}-\Sigma_x\Vert_{\text{HS}}\le \frac{2\kappa}{\lambda}\,\,\,\text{and}\,\,\,
\pp{E}\Vert\eta(x_i)\Vert^2_{\text{HS}}\le\frac{4\kappa^2}{\lambda^2}.$
Therefore, by Bernstein's inequality (see Theorem~\ref{thm:bernstein}), for any $\tau>0$, with probability at least $1-2e^{-\tau}$ over the
choice of $\{x_i\}^n_{i=1}$,
$$\Vert (\Sigma_{\mathit{XX}}+\lambda
I)^{-1}(\Sigma_{\mathit{XX}}-\hat{\Sigma}_{\mathit{XX}})\Vert_{\text{HS}}\le \frac{\kappa\sqrt{2\tau}}{\lambda\sqrt{n}}+\frac{2\kappa\tau}{\lambda n}\le
\frac{\kappa\sqrt{2\tau}(\sqrt{2\tau}+1)}{\lambda\sqrt{n}}.$$ For $\lambda\ge \frac{\kappa\sqrt{8\tau}(\sqrt{2\tau}+1)}{\sqrt{n}}$,
% \begin{equation}\lambda\ge \frac{4(\sqrt{\tau}+1)^2E^2_1
% d}{\sqrt{n}},%+\frac{8\nu\tau d}{n},
% \label{Eq:lambdaa}
% \end{equation}
we obtain that $\Vert (\Sigma_{\mathit{XX}}+\lambda I)^{-1}(\Sigma_{\mathit{XX}}-\hat{\Sigma}_{\mathit{XX}})\Vert_{\text{HS}}\le \frac{1}{2}$ and therefore $\Vert
(\hat{\Sigma}_{\mathit{XX}}+\lambda I)^{-1}\Sigma_{\mathit{XX}}\Vert\le 2$. Using this along with $\Vert (\hat{\Sigma}_{\mathit{XX}}+\lambda I)^{-1}\hat{\Sigma}_{\mathit{XX}}\Vert\le 1$ and (\ref{Eq:bound-2}) in (\ref{eq:H}), we obtain that
for any $\tau>0$ and $\lambda\ge \frac{\kappa\sqrt{8\tau}(\sqrt{2\tau}+1)}{\sqrt{n}}$, 
with probability at least $1-2e^{-\tau}$ over the choice of $\{x_i\}^n_{i=1}$, 
\begin{equation}\Vert \check{\mu}_\lambda-\mu\Vert_\hbspace\le \frac{\sqrt{2\kappa\tau}+4\tau\sqrt{\kappa}}{\sqrt{n}}+4\mathcal{A}(\lambda).\label{eq:finall}\end{equation} 
We now analyze $\mathcal{A}(\lambda)$. Since $k$ is continuous and $\mathcal{X}$ is separable, $\hbspace$ is separable \cite[Lemma 4.33]{Steinwart-08}. Also $\Sigma_{\mathit{XX}}$ is compact since it is Hilbert-Schmidt. The consistency result therefore follows from 
\citet[Proposition A.2]{sriperumbudur13density} which ensures $\mathcal{A}(\lambda)\rightarrow 0$ as $\lambda\rightarrow 0$. The rate also follows from \citet[Proposition A.2]{sriperumbudur13density}
which yields $\mathcal{A}(\lambda)\le \Vert \Sigma^{-1}_{\mathit{XX}}\mu\Vert_\hbspace\lambda$, thereby obtaining $\Vert\check{\mu}_\lambda-\mu\Vert_\hbspace=O_{\pp{P}}(n^{-1/2})$ for $\lambda=cn^{-1/2}$ with $c>0$ being a constant independent of $n$.
%   
% \emph{(b)} Since
% $k$ is $c_0$-universal, $\mu\mapsto \int k(\cdot,x)\dd\mu(x),\,\mu\in M_b(\mathcal{X})$ is injective, which implies $\Sigma_{XX}f=\int k(\cdot,x)f(x)\dd\pp{P}(x)=0\,\Rightarrow\,f=0$, i.e., $\Sigma_{XX}$ has a trivial 
% null-space, meaning that $\overline{\mathcal{R}(\Sigma_{XX})}=\hbspace$. \vspace{1mm}\\
% Based on these observations along with the fact that $\Sigma_{XX}$ is compact (as it is Hilbert-Schmidt), it follows from \citet[Proposition A.2]{sriperumbudur13density} that $\mathcal{A}(\lambda)\rightarrow 0$ as $\lambda\rightarrow 0$, therefore
% yielding the consistency of $\check{\mu}_\lambda$.

\subsection{Proof of Proposition~\ref{thm:akmse-loocv}}\label{proof:thm:akmse-loocv}
From Proposition~\ref{pro:equivalence}, we have $\check{\mu}^{(-i)}_\lambda=(\hat{\Sigma}^{(-i)}+\lambda I)^{-1}\hat{\Sigma}^{(-i)}\hat{\mu}^{(-i)}$ where $$\hat{\Sigma}^{(-i)}\triangleq\frac{1}{n-1}\sum_{j\ne i}k(\cdot,x_j)\otimes k(\cdot,x_j).$$ 
and $\hat{\mu}^{(-i)}\triangleq\frac{1}{n-1}\sum_{j\ne i}k(\cdot,x_j)$. Define $a\triangleq k(\cdot,x_i)$. It is easy to verify that
$$\hat{\Sigma}^{(-i)}=\frac{n}{n-1}\left(\hat{\Sigma}-\frac{a\otimes a}{n}\right)\,\,\,\text{and}\,\,\,\hat{\mu}^{(-i)}=\frac{n}{n-1}\left(\hat{\mu}-\frac{a}{n}\right).$$
Therefore,
$$\check{\mu}^{(-i)}_\lambda=\frac{n}{n-1}\left((\hat{\Sigma}+\lambda^\prime_n I)-\frac{a\otimes a}{n}\right)^{-1}\left(\hat{\Sigma}-\frac{a\otimes a}{n}\right)\left(\hat{\mu}-\frac{a}{n}\right),$$
which after using Sherman-Morrison formula\footnote{The Sherman-Morrison formula states that $(\mathbf{A}+\mathbf{uv}^\top)^{-1}=\mathbf{A}^{-1}-\frac{\mathbf{A}^{-1}\mathbf{uv}^\top\mathbf{A}^{-1}}{1+\mathbf{v}^\top\mathbf{A}^{-1}\mathbf{u}}$ where 
$\mathbf{A}$ is an invertible square matrix, $\mathbf{u}$ and $\mathbf{v}$ are column vectors such that $1+\mathbf{v}^\top\mathbf{A}^{-1}\mathbf{u}\ne 0$.} reduces to
$$\check{\mu}^{(-i)}_\lambda=\frac{n}{n-1}\left((\hat{\Sigma}+\lambda^\prime_n I)^{-1}+\frac{(\hat{\Sigma}+\lambda^\prime_n I)^{-1}(a\otimes a)(\hat{\Sigma}+\lambda^\prime_n I)^{-1}}{n-\langle a,(\hat{\Sigma}+\lambda^\prime_n I)^{-1}a\rangle_\hbspace}\right)\left(\hat{\Sigma}-\frac{a\otimes a}{n}\right)\left(\hat{\mu}-\frac{a}{n}\right),$$
where $\lambda^\prime_n\triangleq\frac{n-1}{n}\lambda$. Using the notation in the proof of Proposition~\ref{pro:equivalence}, the following can be proved:
\begin{itemize}
 \item[(i)] $(\hat{\Sigma}+\lambda^\prime_n I)^{-1}\hat{\Sigma}\hat{\mu}=n^{-1}\Phi(\kmat+\lambda_n\mathbf{I})^{-1}\kmat\mathbf{1}.$
 \item[(ii)]$(\hat{\Sigma}+\lambda^\prime_n I)^{-1}\hat{\Sigma}a=\Phi(\kmat+\lambda_n\mathbf{I})^{-1}\mathbf{k}_i.$
 \item[(iii)] $(\hat{\Sigma}+\lambda^\prime_n I)^{-1}a=n\Phi(\kmat+\lambda_n\mathbf{I})^{-1}\mathbf{e}_i.$
\end{itemize}
Based on the above, it is easy to show that
\begin{itemize}
 \item[(iv)] $(\hat{\Sigma}+\lambda^\prime_n I)^{-1}(a\otimes a)\hat{\mu}= (\hat{\Sigma}+\lambda^\prime_n I)^{-1}a \langle a,\hat{\mu}\rangle_\hbspace=\Phi(\kmat+\lambda_n\mathbf{I})^{-1}\mathbf{e}_i\mathbf{k}^\top_i\mathbf{1}$.
 \item[(v)] $(\hat{\Sigma}+\lambda^\prime_n I)^{-1}(a\otimes a)a= (\hat{\Sigma}+\lambda^\prime_n I)^{-1}a \langle a,a\rangle_\hbspace=n\Phi(\kmat+\lambda_n\mathbf{I})^{-1}\mathbf{e}_ik(x_i,x_i)$.
 \item[(vi)] $(\hat{\Sigma}+\lambda^\prime_n I)^{-1}(a\otimes a)(\hat{\Sigma}+\lambda^\prime_n I)^{-1}\hat{\Sigma}\hat{\mu}=\Phi(\kmat+\lambda_n\mathbf{I})^{-1}\mathbf{e}_i\mathbf{k}^\top_i(\kmat+\lambda_n\mathbf{I})^{-1}\kmat\mathbf{1}.$
 \item[(vii)] $(\hat{\Sigma}+\lambda^\prime_n I)^{-1}(a\otimes a)(\hat{\Sigma}+\lambda^\prime_n I)^{-1}\hat{\Sigma}a=n\Phi(\kmat+\lambda_n\mathbf{I})^{-1}\mathbf{e}_i\mathbf{k}^\top_i(\kmat+\lambda_n\mathbf{I})^{-1}\mathbf{k}_i.$
 \item[(viii)] $(\hat{\Sigma}+\lambda^\prime_n I)^{-1}(a\otimes a)(\hat{\Sigma}+\lambda^\prime_n I)^{-1}(a\otimes a)\hat{\mu}=n\Phi(\kmat+\lambda_n\mathbf{I})^{-1}\mathbf{e}_i\mathbf{k}^\top_i(\kmat+\lambda_n\mathbf{I})^{-1}\mathbf{e}_i\mathbf{k}^\top_i\mathbf{1}.$
 \item[(ix)] $(\hat{\Sigma}+\lambda^\prime_n I)^{-1}(a\otimes a)(\hat{\Sigma}+\lambda^\prime_n I)^{-1}(a\otimes a)a=n^2\Phi(\kmat+\lambda_n\mathbf{I})^{-1}\mathbf{e}_i\mathbf{k}^\top_i(\kmat+\lambda_n\mathbf{I})^{-1}\mathbf{e}_ik(x_i,x_i).$
 \item[(x)] $\langle a,(\hat{\Sigma}+\lambda^\prime_n I)^{-1}a\rangle_\hbspace=n\mathbf{k}^\top_i (\kmat+\lambda_n\mathbf{I})^{-1}\mathbf{e}_i.$
\end{itemize}
Using the above in $\check{\mu}^{(-i)}_\lambda$, we obtain $$\check{\mu}^{(-i)}_\lambda=\frac{1}{n-1}\Phi(\kmat+\lambda_n\mathbf{I})^{-1}\mathbf{c}_{i,\lambda}.$$
Substituting the above in (\ref{eq:loocv-score}) yields the result.
\section{Experiments}
\label{sec:experiments}
  
In this section, we empirically compare the proposed shrinkage estimators to the standard estimator of the kernel mean on both synthetic and real-world datasets. 
Specifically, we consider the following estimators:
\begin{inparaenum}[\itshape i\upshape)]
  \item empirical/standard kernel mean estimator (KME),
  \item KMSE whose parameter is obtained via empirical bound (B-KMSE),
  \item regularized KMSE whose parameter is obtained via Proposition \ref{thm:skmse-loocv} (R-KMSE), and
  \item spectral KMSE whose parameter is obtained via Proposition \ref{thm:akmse-loocv} (S-KMSE). 
\end{inparaenum}

\subsection{Synthetic Data}
\label{sec:synthetic}  

Given the true data-generating distribution $\pp{P}$ and the i.i.d. sample $X=\{x_1,x_2,\ldots,x_n\}$ from $\pp{P}$, we evaluate different estimators using the loss function 
\begin{equation*}
  L(\bvec,X,\pp{P}) \triangleq \left\|\sum_{i=1}^n\beta_i k(x_i,\cdot) - \ep_{x\sim\pp{P}}[k(x,\cdot)]\right\|^2_{\hbspace} ,
\end{equation*}
where $\bvec$ is the weight vector associated with different estimators. Then, we can estimate the risk of the estimator by averaging over $m$ independent copies of $X$, \ie, $\widehat{R} = \frac{1}{m}\sum_{j=1}^m L(\bvec_j,X_j,\pp{P})$. 

To allow for an exact calculation of $L(\bvec,X,\pp{P})$, we consider $\pp{P}$ to be a mixture-of-Gaussians distribution and $k$ being one of the following kernel functions: 
\begin{inparaenum}[\itshape i\upshape)]
\item linear kernel $k(x,x')=x^\top x'$,
\item polynomial degree-2 kernel $k(x,x')=(x^\top x' + 1)^2$, 
\item polynomial degree-3 kernel $k(x,x')=(x^\top x' + 1)^3$ and
\item Gaussian RBF kernel $k(x,x') = \exp\left(-\|x-x'\|^2/2\sigma^2\right)$. 
\end{inparaenum}
We refer to them as LIN, POLY2, POLY3, and RBF, respectively. The analytic forms of $\ep_{x\sim\pp{P}}[k(x,\cdot)]$ for Gaussian distribution are given in \citet{Song08:TDE} 
and \citet{Muandet12:SMM}. Unless otherwise stated, we set the bandwidth parameter of the Gaussian kernel as $\sigma^2 = \mathrm{median}\left\{\|x_i-x_j\|^2:i,j=1,\ldots,n\right\}$, \ie, the median heuristic.

%% simple experiments (1)
\begin{figure}[t!]
  \centering
  \includegraphics[width=0.32\textwidth]{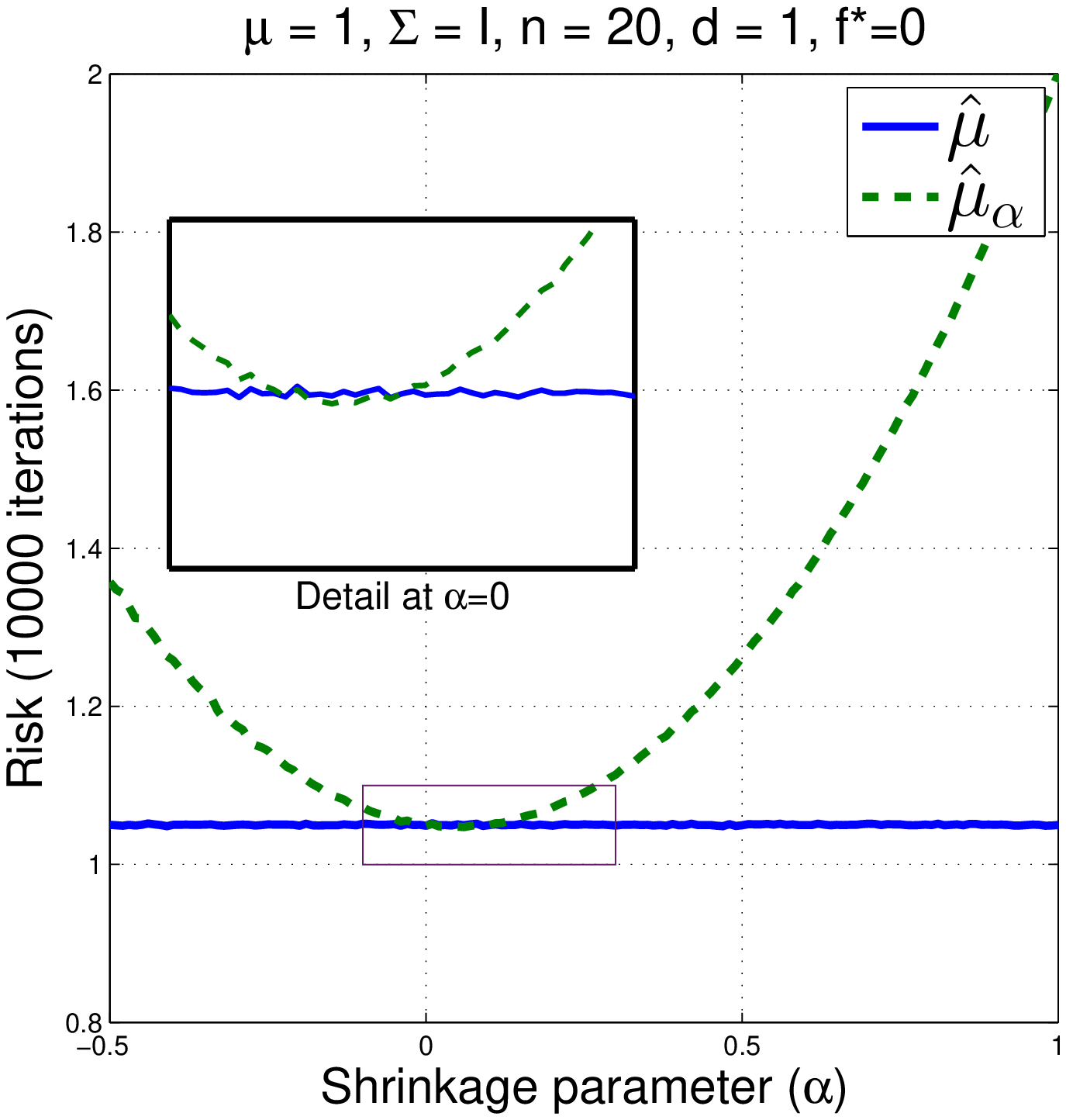}
  \includegraphics[width=0.32\textwidth]{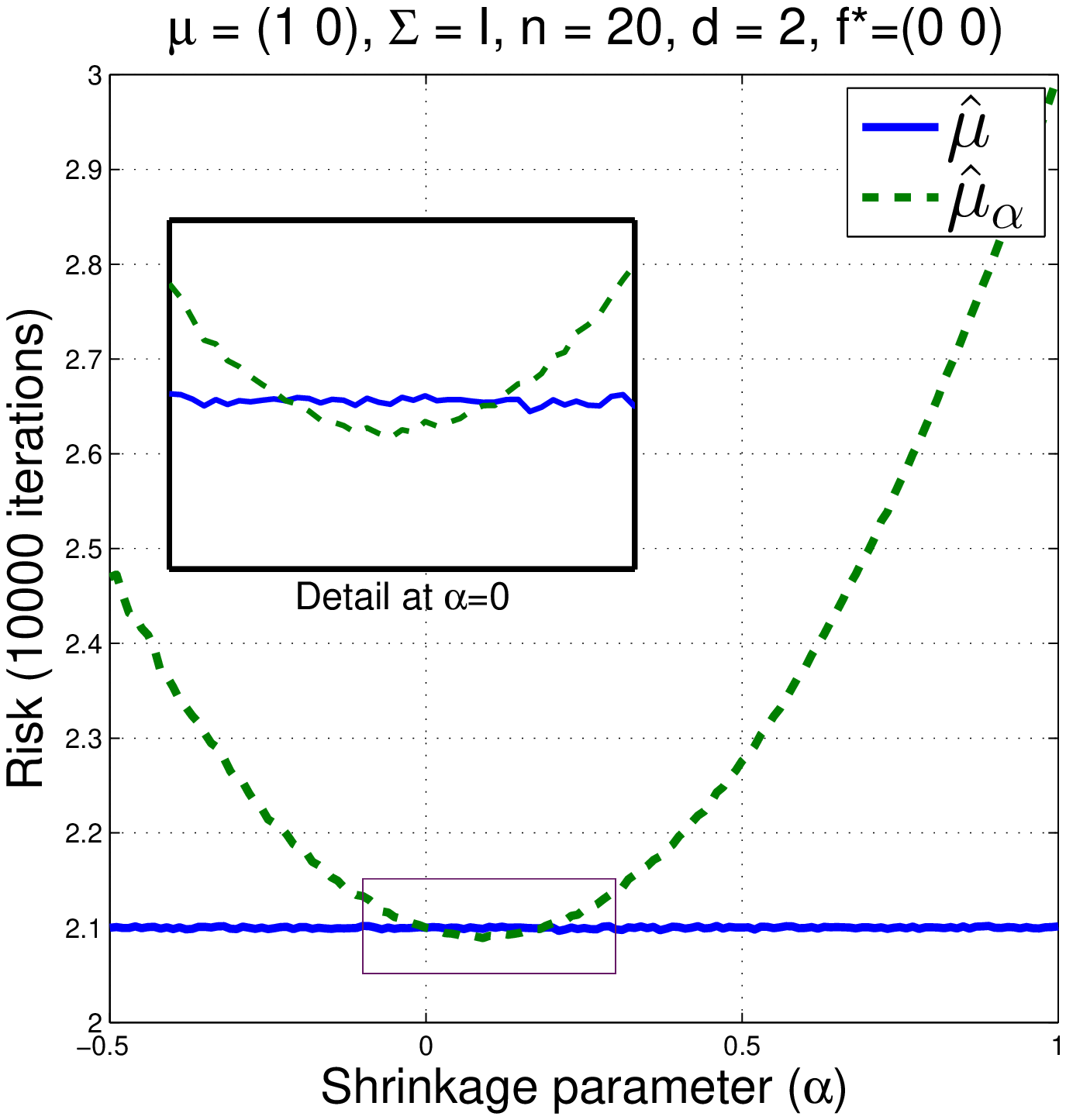}
  \includegraphics[width=0.32\textwidth]{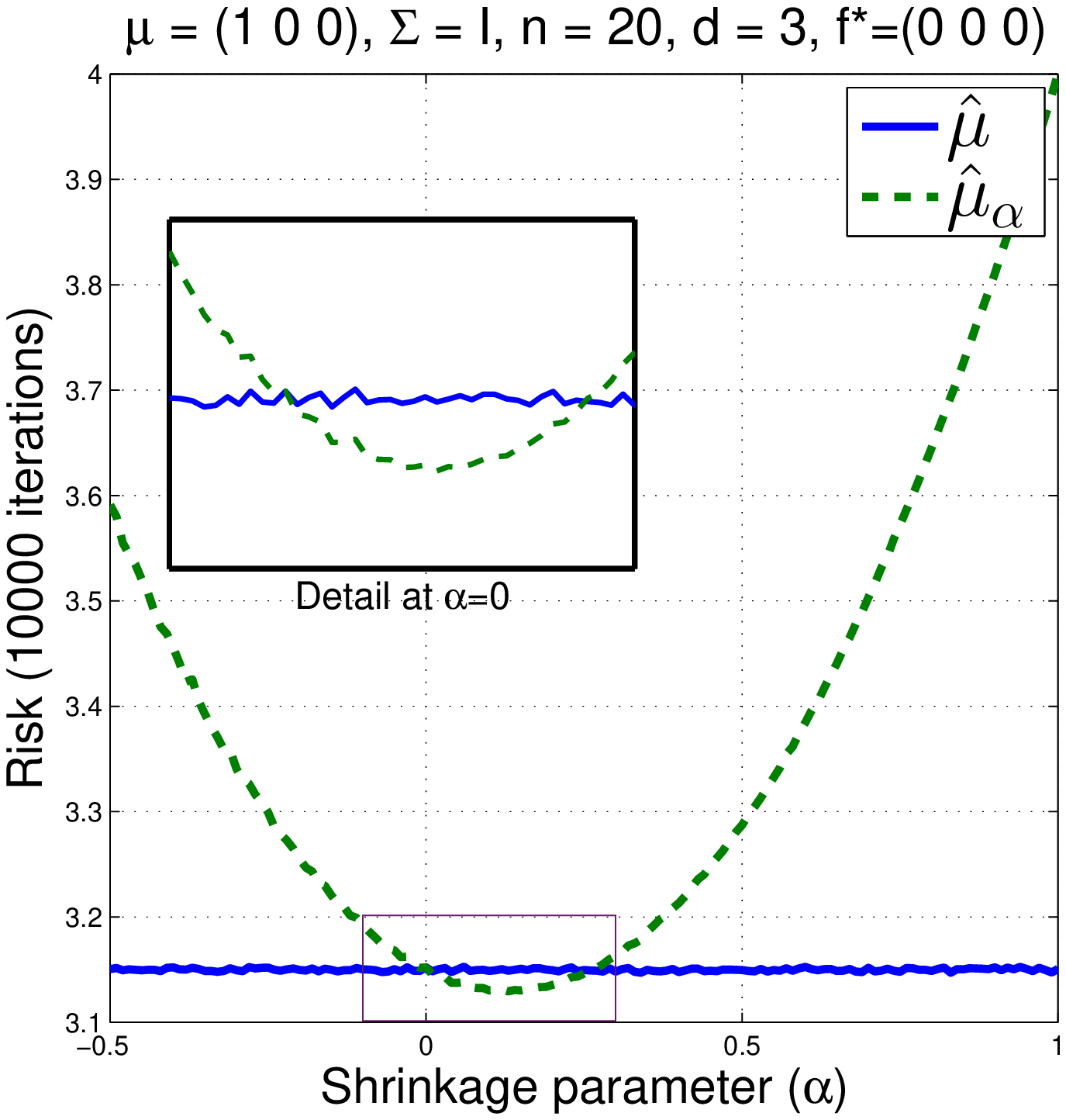}
  \caption{The comparison between standard estimator, $\hat{\mu}$ and shrinkage estimator, $\hat{\mu}_\alpha$ (with $f^*=0$) of the mean of the Gaussian distribution $\mathcal{N}(\mu,\Sigma)$ on $\rr^d$ where $d=1,2,3$.}
  \vspace{-3mm}
  \label{fig:gaussian1}
\end{figure}

%%%
\subsubsection{Gaussian Distribution}

We begin our empirical studies by considering the simplest case in which the distribution $\pp{P}$ is a Gaussian distribution $\mathcal{N}(\mu,\id)$ on $\rr^d$ where $d=1,2,3$ and 
$k$ is a linear kernel. In this case, the problem of kernel mean estimation reduces to just estimating the mean $\mu$ of the Gaussian distribution $\mathcal{N}(\mu,\id)$. We consider only 
shrinkage estimators of form 
%the simple shrinkage, i.e., 
$\hat{\mu}_{\alpha} = \alpha f^* + (1-\alpha)\hat{\mu}$. The true mean $\mu$ of the distribution is chosen to be $1$, $(1,0)^\top$, and $(1,0,0)^\top$, respectively. 
Figure \ref{fig:gaussian1} depicts the comparison between the standard estimator and the shrinkage estimator, $\hat{\mu}_\alpha$ when the target $f^*$ is the origin. We can 
clearly see that even in this simple case, an improvement can be gained by applying a small shrinkage. Furthermore, the improvement becomes more substantial as we increase 
the dimensionality of the underlying space. Figure \ref{fig:gaussian2} illustrates similar results when $f^*\ne 0$ but $f^*\in\{2,(2, 0)^\top,(2, 0, 0)^\top\}$. Interestingly, 
we can still observe similar improvement, which demonstrates that the choice of target $f^*$ can be arbitrary when no prior knowledge about $\mu_{\pp{P}}$ is 
available.

%%% 
\subsubsection{Mixture of Gaussians Distributions}
%Next, we move on to more realistic settings in which data are generated from a $d$-dimensional mixture of Gaussians (MoG). 
To simulate a more realistic case, let $y$ be a sample from $\pp{P}\triangleq \sum_{i=1}^4\pi_i\mathcal{N}(\bm{\theta}_i,\Sigma_i)$. In the following experiments, the sample $x$ is generated from the following generative process:
  \begin{equation*}
    x = y + \varepsilon,\quad
    \theta_{ij} \sim  \mathcal{U}(-10,10), \quad
    \Sigma_i \sim \mathcal{W}(2\times\mathbf{I}_d,7), \quad 
    \varepsilon \sim \mathcal{N}(0,0.2\times\mathbf{I}_d),    
  \end{equation*}
  \noindent where $\mathcal{U}(a,b)$ and $\mathcal{W}(\Sigma_0,df)$ represent the uniform distribution and Wishart distribution, respectively. We set $\bm{\pi}=(0.05, 0.3, 0.4, 0.25)^\top$. The choice of parameters here is quite arbitrary; we have experimented using various parameter settings and the results are similar to those presented here. 
%% simple experiments (2)
\begin{figure}[t!]
  \centering
  \includegraphics[width=0.32\textwidth]{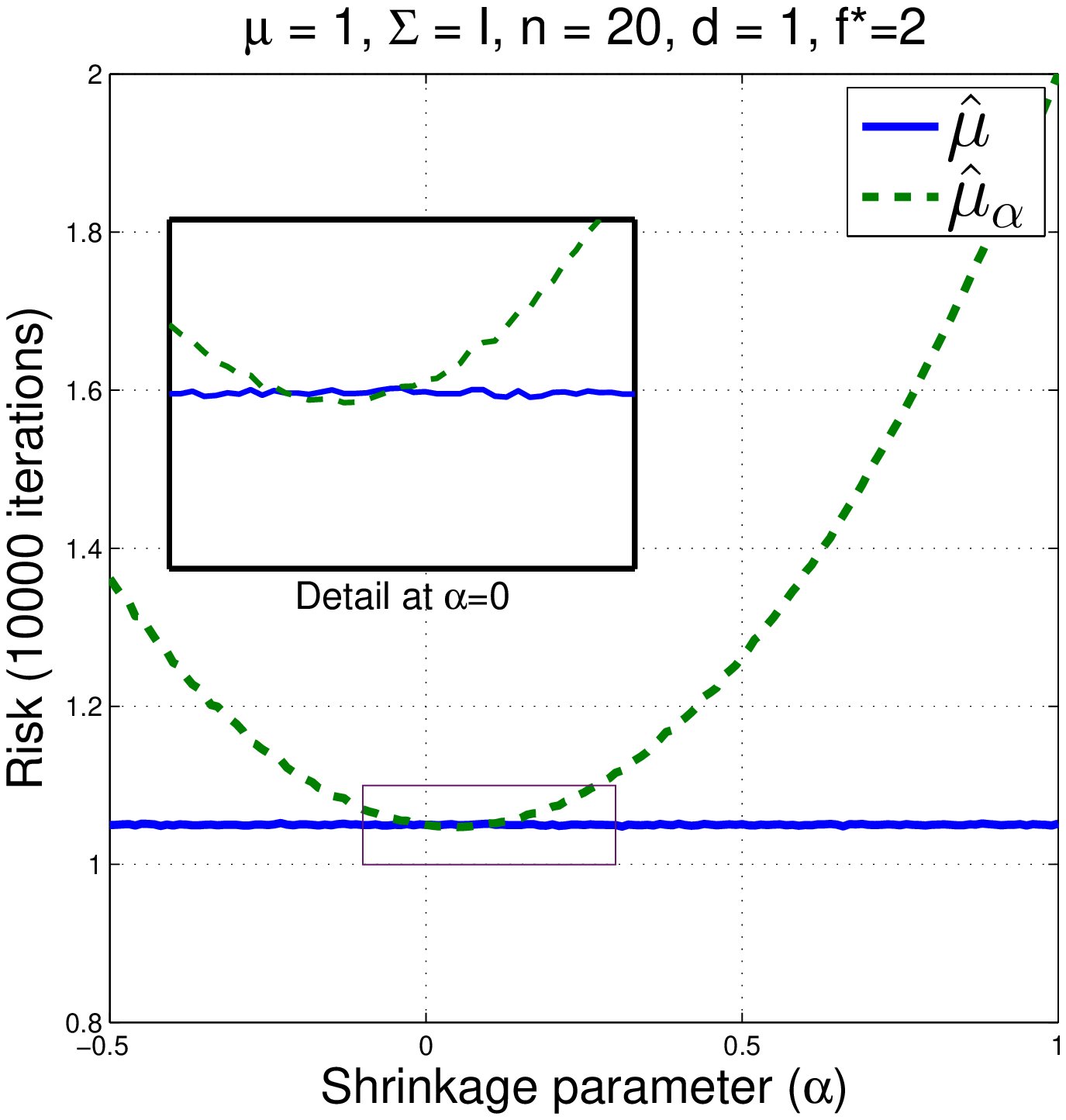}
  \includegraphics[width=0.32\textwidth]{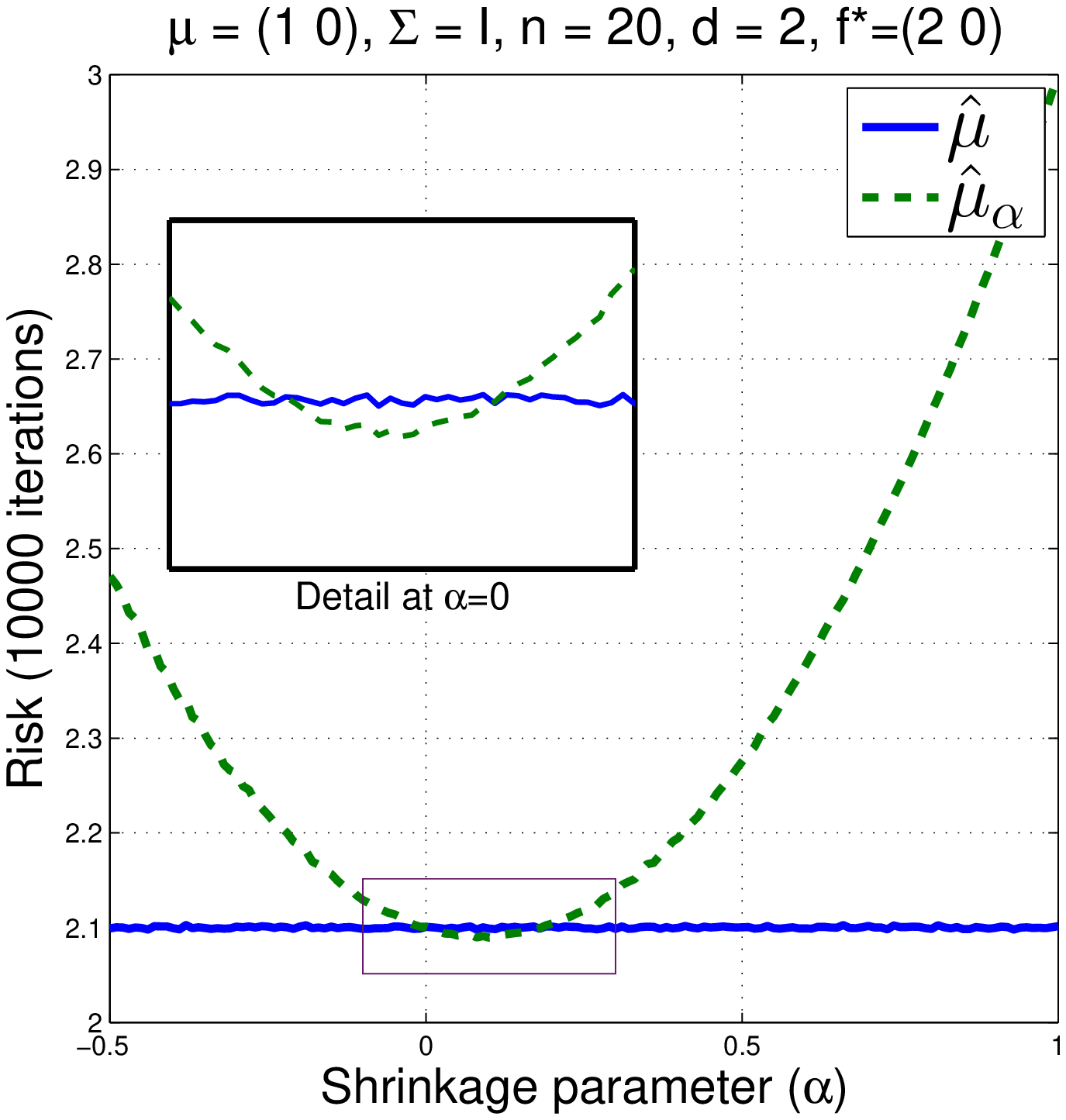}
  \includegraphics[width=0.32\textwidth]{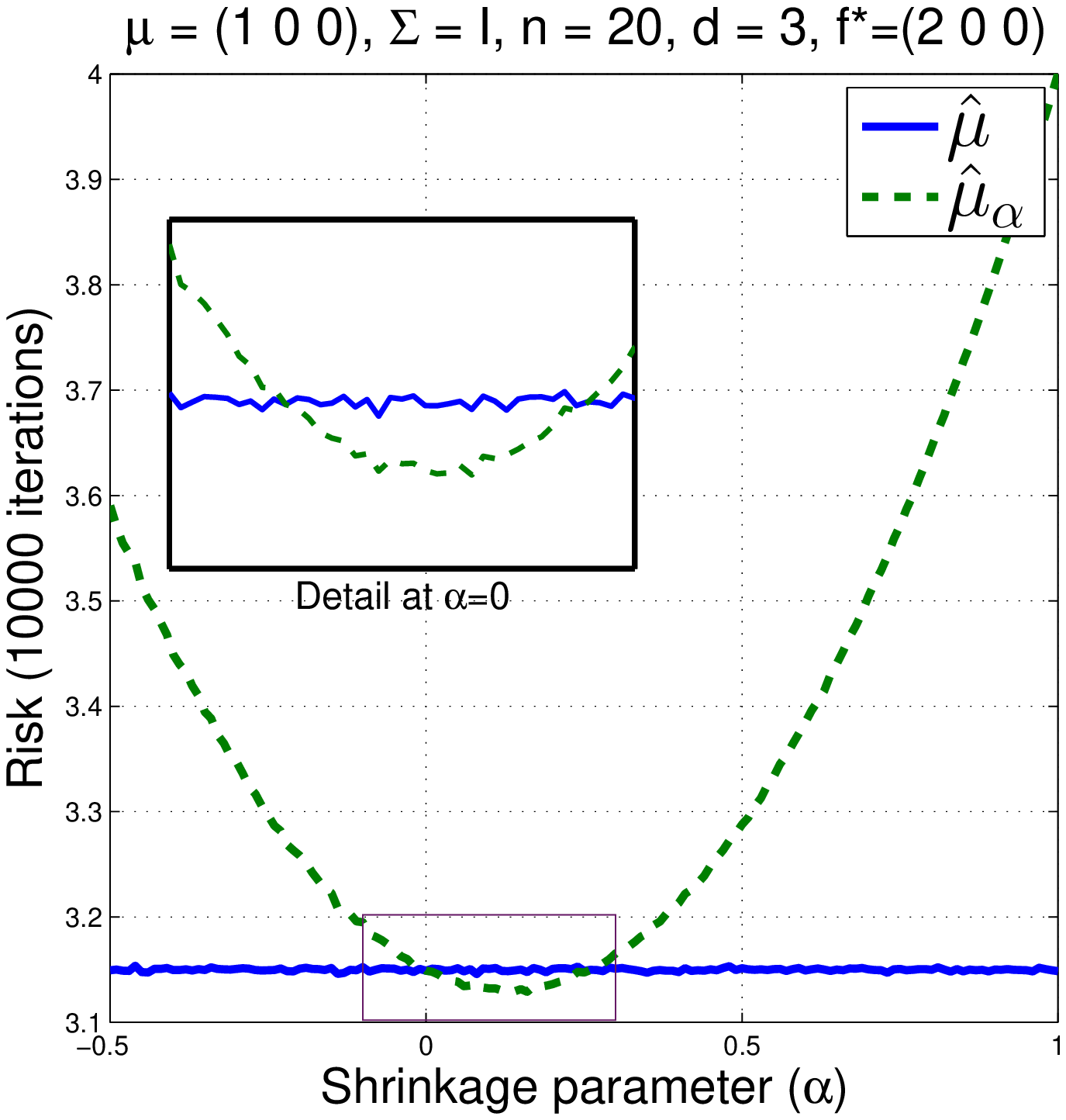}
  \caption{The risk comparison between standard estimator, $\hat{\mu}$ and shrinkage estimator, $\hat{\mu}_\alpha$ (with $f^*\in\{2, (2,0)^\top,(2,0,0)^\top\}$) of the mean of the Gaussian distribution 
  $\mathcal{N}(\mu,\Sigma)$ on $\rr^d$ where $d=1,2,3$.}\vspace{-3mm}
  \label{fig:gaussian2}
\end{figure}

%Figure \ref{fig:simple-shk} depicts the comparison between the standard kernel mean estimator and the shrinkage estimator, $\hat{\mu}_{\alpha}$ when the kernel $k$ is the Gaussian RBF kernel. In this case, we consider two choices of $f^*$, namely, $f^*=0$ and $f^* = C\times k(x,\cdot)$ where $C$ is a scaling factor and $x$ is a realization of uniform random variable. As we increase the scaling factor, the target function $f^*$ will be further away from the origin. The shrinkage parameter $\alpha$ is determined using the empirical bound, i.e., $\tilde{\alpha} = \hat{\Delta}/(\hat{\Delta} + \|f^*-\hat{\mu}\|^2_{\hbspace})$. As we can see in Figure \ref{fig:simple-shk}, the results reveal the importance of the prior knowledge about $\mu$. That is, We gain substantial improvement over the empirical estimator by incorporating appropriate prior knowledge through $f^*$.

Figure \ref{fig:simple-shk} depicts the comparison between the standard kernel mean estimator and the shrinkage estimator, $\hat{\mu}_{\alpha}$ when the kernel $k$ is the Gaussian RBF kernel. For shrinkage estimator $\hat{\mu}_{\alpha}$, we consider $f^* = C\times k(x,\cdot)$ where $C$ is a scaling factor and each element of $x$ is a realization of uniform random variable on $(0,1)$. That is, we allow the target $f^*$ to change depending on the value of $C$. As the absolute value of $C$ increases, the target function $f^*$ will move further away from the origin. The shrinkage parameter $\alpha$ is determined using the empirical bound, \ie, $\tilde{\alpha} = \hat{\Delta}/(\hat{\Delta} + \|f^*-\hat{\mu}\|^2_{\hbspace})$. As we can see in Figure \ref{fig:simple-shk}, the results reveal how important the choice of $f^*$ is. That is, we may get substantial improvement over the empirical estimator if appropriate prior knowledge is incorporated through $f^*$, which in this case suggests that $f^*$ should lie close to the origin. We intend to investigate the topic of prior knowledge in more detail in our future work.

%% non-zero shrinkage
\begin{figure}[t]
  \centering
  \subfigure[]{
    \includegraphics[height=2in]{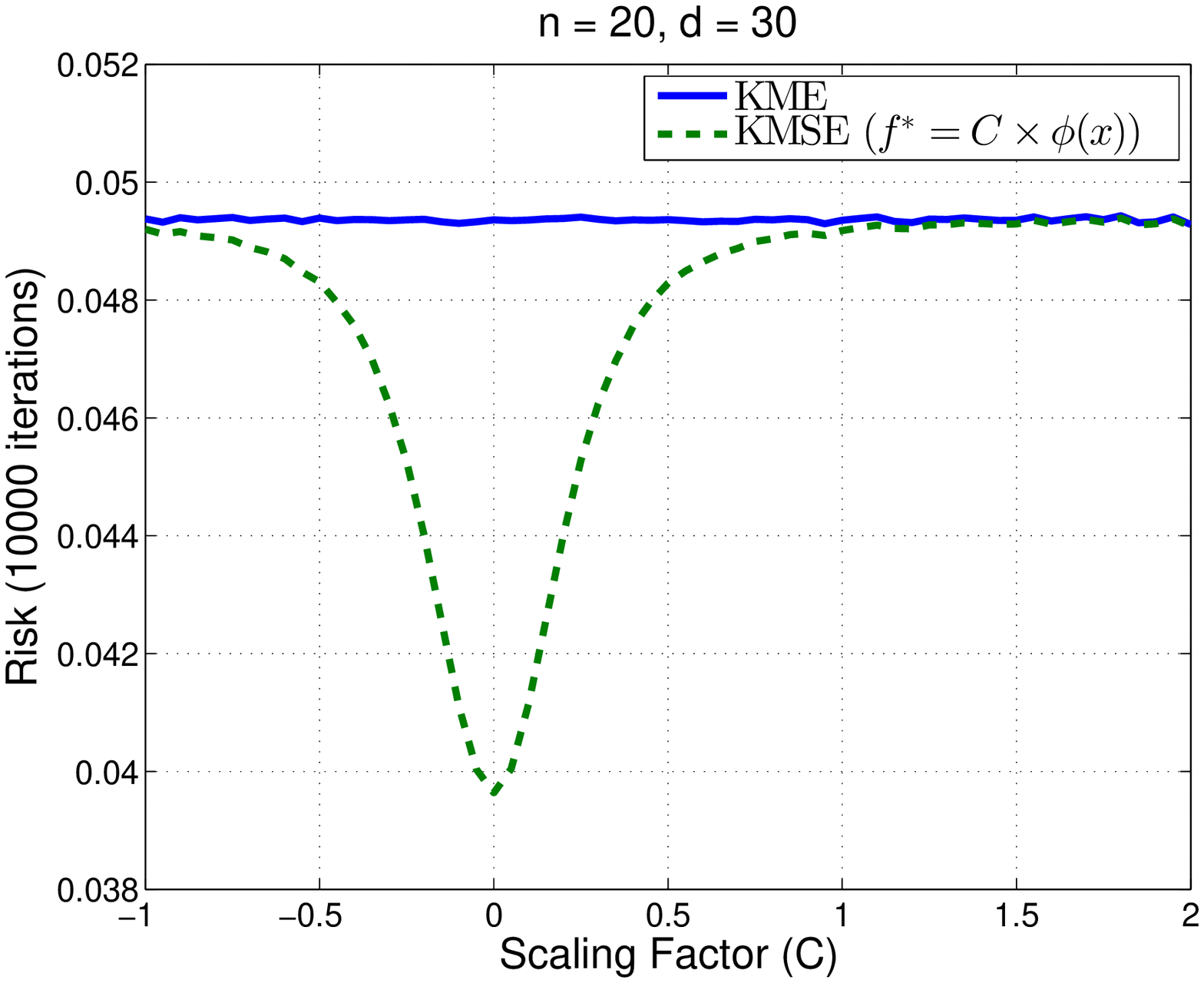}
    \label{fig:simple-shk}
  }
  \hspace{10pt}
  \subfigure[]{
    \includegraphics[height=2in]{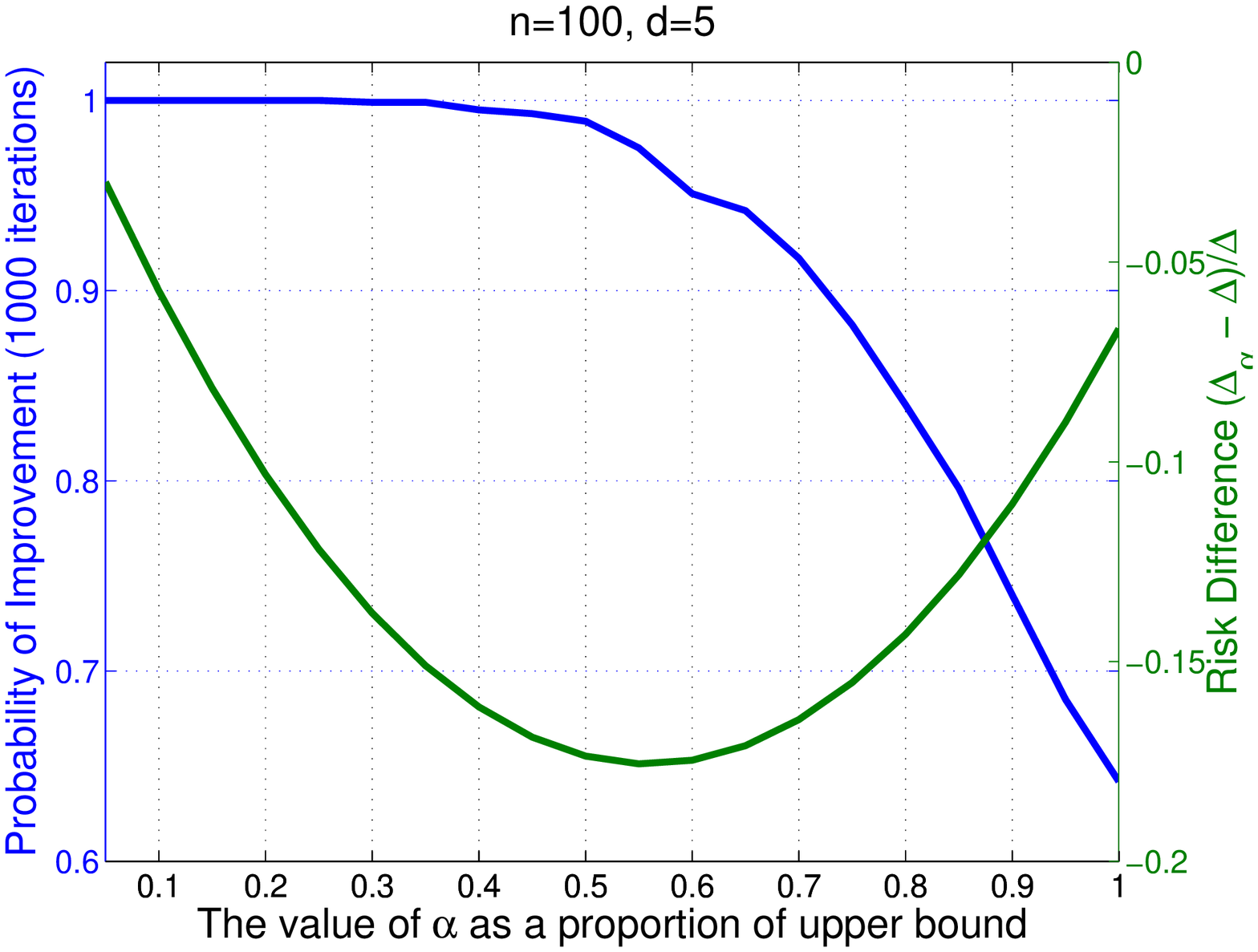}
    \label{fig:shrinkage-tradeoff}
  }\vspace{-3mm}
  \caption{\subref{fig:simple-shk} The risk comparison between $\hat{\mu}$ (KME) and $\hat{\mu}_{\tilde{\alpha}}$ (KMSE) where $\tilde{\alpha} = \hat{\Delta}/(\hat{\Delta} + \|f^*-\hat{\mu}\|^2_{\hbspace})$. We consider when $f^*= C\times k(x,\cdot)$ where $x$ is drawn uniformly from a pre-specified range and $C$ is a scaling factor. \subref{fig:shrinkage-tradeoff} The probability of improvement and the risk difference as a function of shrinkage parameter $\alpha$ averaged over 1,000 iterations. As the value of $\alpha$ increases, we get more improvement in term of the risk, 
  whereas the probability of improvement decreases as a function of $\alpha$.}\vspace{-6mm}
  \label{fig:simple-1}
\end{figure}
Previous comparisons between standard estimator and shrinkage estimator is based entirely on the notion of a risk, which is in fact not useful in practice as we only observe a 
single copy of sample from the probability distribution. Instead, one should also look at the probability that, given a single copy of sample, the shrinkage estimator outperforms 
the standard one in term of a loss. To this end, we conduct an experiment comparing the standard estimator and shrinkage estimator using the Gaussian RBF kernel. In addition to the 
risk comparison, we also compare the probability that the shrinkage estimator gives smaller loss than that of the standard estimator. To be more precise, the probability is defined 
as a proportion of the samples drawn from the same distribution whose shrinkage loss is smaller than the loss of the standard estimator. Figure \ref{fig:shrinkage-tradeoff} 
illustrates the risk difference ($\Delta_{\alpha} - \Delta$) and the probability of improvement (\ie, the fraction of times $\Delta_\alpha<\Delta$) %$N_{\ell_\alpha < \ell}/N_{total}$) 
as a function of shrinkage parameter $\alpha$. In this case, the value of $\alpha$ is specified as a proportion of empirical upper bound $2\hat{\Delta}/(\hat{\Delta} + \|\hat{\mu}\|^2_{\hbspace})$. 
The results suggest that the shrinkage parameter $\alpha$ controls the trade-off between the amount of improvement in terms of risk and the probability that the shrinkage estimator will improve upon the standard one. However, this trade-off only holds up to a certain value of $\alpha$. As $\alpha$ becomes too large, both the probability of improvement and the amount of improvement itself decrease, which coincides with the intuition given for the positive-part shrinkage estimators (cf. Section \ref{sec:positive-part}).

%it also suggests that if the shrinkage parameter $\alpha$ is chosen appropriately, e.g., not too large, substantial improvement in terms of the loss can be gained with \emph{high probability} by using the shrinkage estimator.

%%%
\subsubsection{Shrinkage Estimators via Leave-One-Out Cross-Validation}

In addition to the empirical upper bound, one can alternatively compute the shrinkage parameter using leave-one-out cross-validation proposed in Section \ref{sec:optimization}. 
Our goal here is to compare the B-KMSE, R-KMSE and S-KMSE on synthetic data when the shrinkage parameter $\lambda$ is chosen via leave-one-out cross-validation procedure. Note that 
the only difference between B-KMSE and R-KMSE is the way we compute the shrinkage parameter.

Figure \ref{fig:sim-results1} shows the empirical risk of different estimators using different kernels as we increase the value of shrinkage parameter $\lambda$ (note that 
R-KMSE and S-KMSE in Figure~\ref{fig:sim-results1} refer to those in (\ref{eq:simple-easy}) and (\ref{eq:flexible}) respectively). Here we scale the 
shrinkage parameter by the smallest non-zero eigenvalue $\gamma_0$ of the kernel matrix $\kmat$. In general, we find that R-KMSE and S-KMSE outperforms KME. Nevertheless, as the 
shrinkage parameter $\lambda$ becomes large, there is a tendency that the specific shrinkage estimate might actually perform worse than the KME, \eg, see LIN kernel and outliers in 
Figure \ref{fig:sim-results1}. The result also supports our previous observation regarding Figure \ref{fig:shrinkage-tradeoff}, which suggests that it is very important to choose the 
parameter $\lambda$ appropriately. 
%% simulated experiments 1
\begin{figure*}[tp!]
  \centering
  \subfigure[LIN]{
    \includegraphics[width=0.9\linewidth]{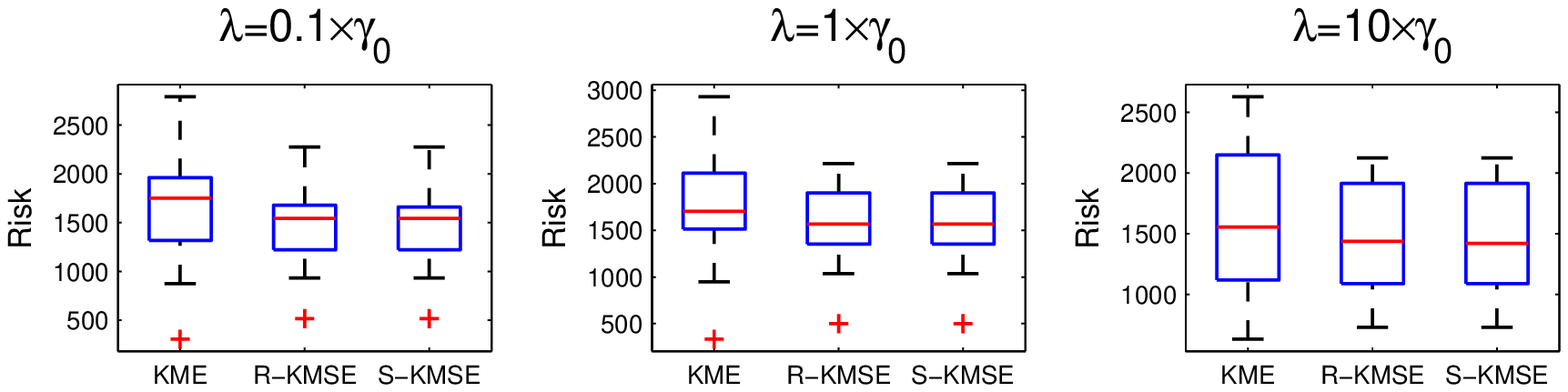}
    \label{fig:synthetic-sf1}
  }
  \hfill
  \subfigure[POLY2]{
    \includegraphics[width=0.9\linewidth]{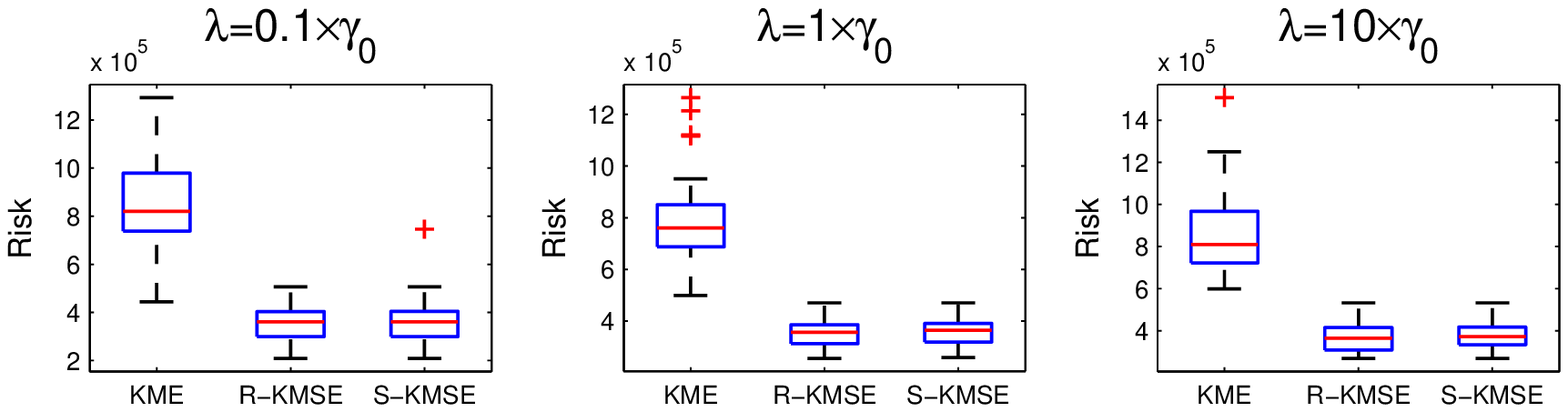}
    \label{fig:synthetic-sf2}
  }
  \hfill
  \subfigure[POLY3]{
    \includegraphics[width=0.9\linewidth]{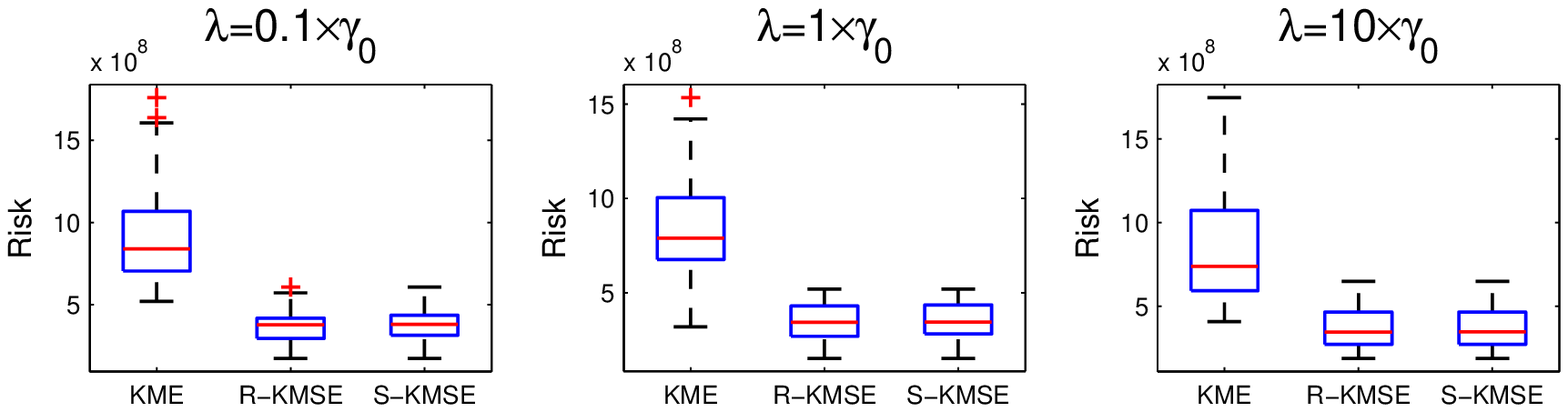}
    \label{fig:synthetic-sf3}
  }
  \hfill
  \subfigure[RBF]{
    \includegraphics[width=0.9\linewidth]{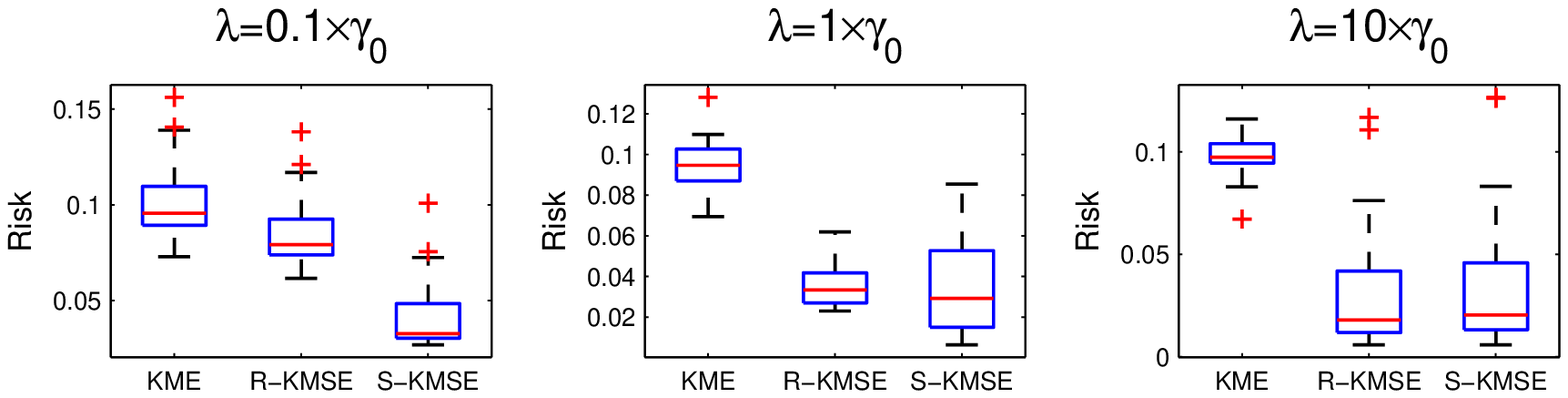}
    \label{fig:synthetic-sf4}
  }
  \caption{The average loss of KME (left), R-KMSE (middle) and S-KMSE (right) estimators with
    different values of shrinkage parameter. %Inside boxes correspond to estimators. 
    We repeat the experiments over 30 different distributions with $n=10$ and $d=30$.}
  \label{fig:sim-results1}
\end{figure*}

To demonstrate the leave-one-out cross-validation procedure, we conduct similar experiments in which the parameter $\lambda$ is chosen by the proposed LOOCV procedure. 
Figure \ref{fig:sim-results2} depicts the percentage of improvement (with respect to the empirical risk of the KME\footnote{If we denote the loss of KME and KMSE as $\ell_{KME}$ and 
$\ell_{KMSE}$, respectively, the percentage of improvement is calculated as $100\times(\ell_{KME}-\ell_{KMSE})/\ell_{KME}$.}) as we vary the sample size and dimension of the data. 
Clearly, B-KMSE, R-KMSE and S-KMSE outperform the standard estimator. Moreover, both R-KMSE and S-KMSE tend to outperform the B-KMSE. We can also see that the performance of S-KMSE 
depends on the choice of kernel. This makes sense intuitively because S-KMSE also incorporates the eigen-spectrum of $\kmat$, whereas R-KMSE does not. The effects of both 
sample size and data dimensionality are also transparent from Figure \ref{fig:sim-results2}. 
While it is intuitive to see that the improvement gets smaller with increase in sample size, it is a bit surprising to see that we can gain much more in high-dimensional input space, 
especially when the kernel function is non-linear, because the estimation happens in the feature space associated with the kernel function rather than in the input space. Lastly, we 
note that the improvement is more substantial in the ``large $d$, small $n$'' paradigm. 
% In the worst cases, the 
% S-KMSE and F-KMSE perform as well as the KME.
% simulated experiments 2
\begin{figure}[t!] 
  \centering
  \includegraphics[width=0.85\linewidth]{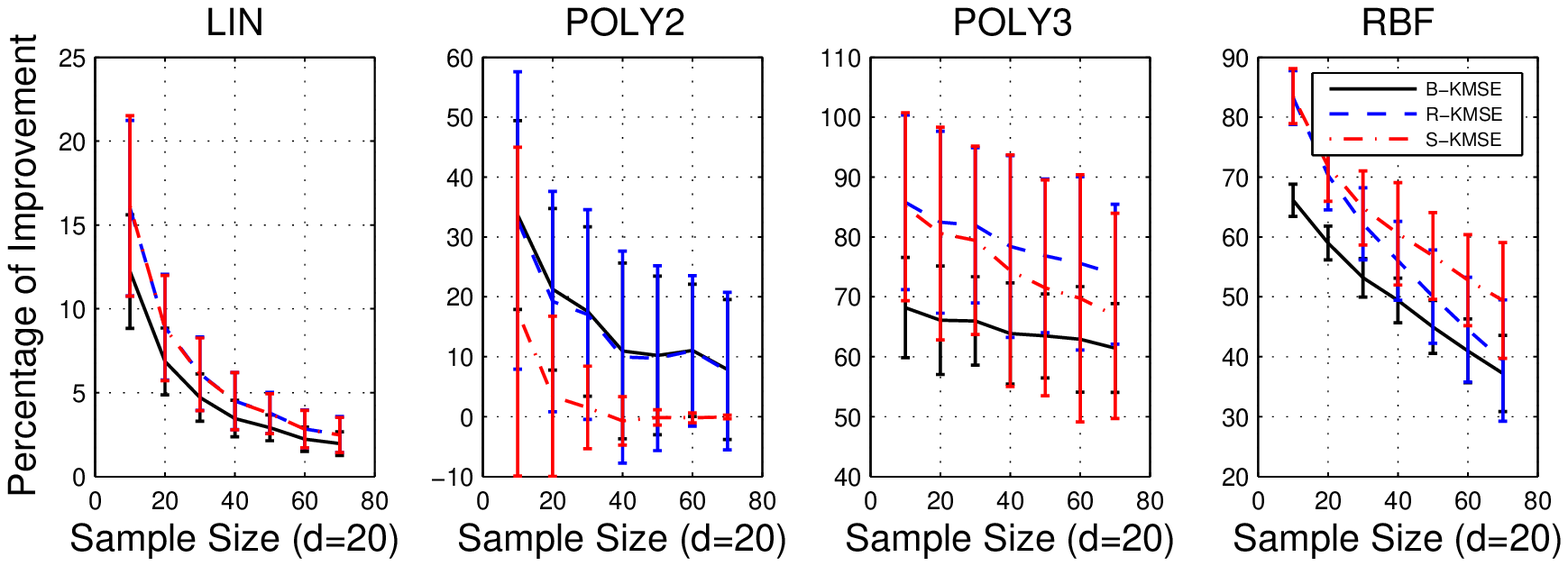} \\
  \includegraphics[width=0.85\linewidth]{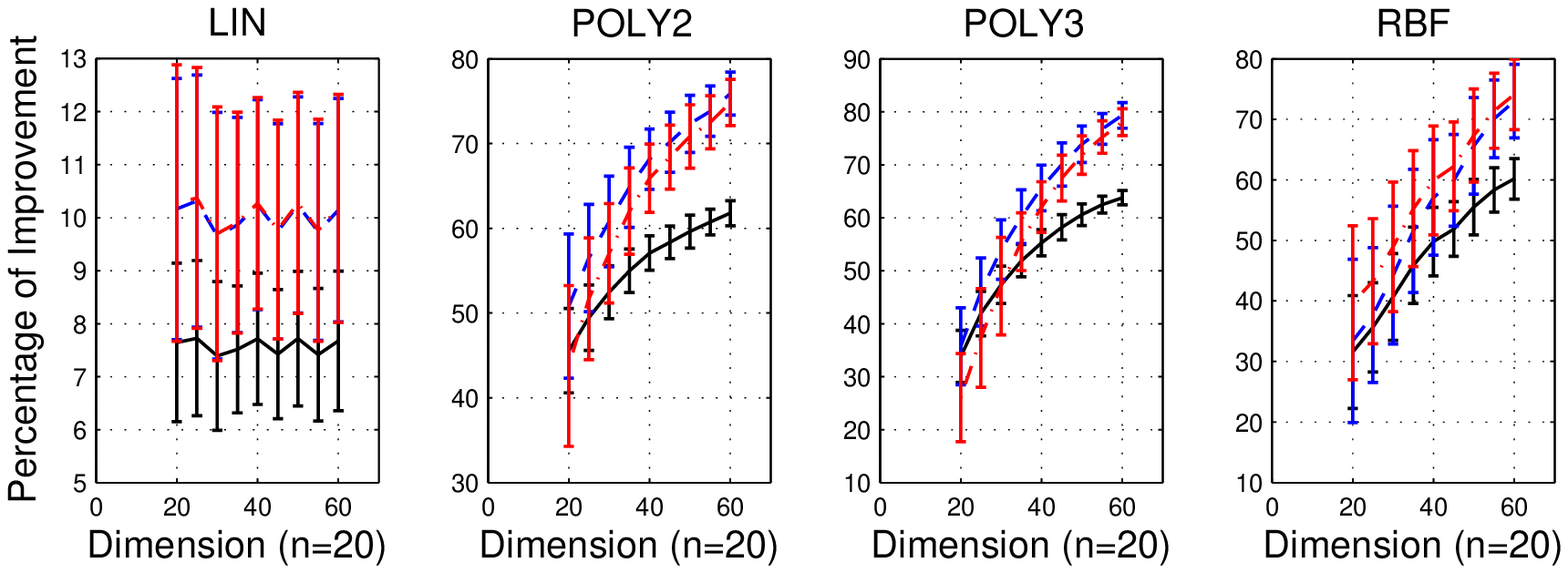}
  \caption{The percentage of improvement compared to KME over 30 different distributions of B-KMSE, R-KMSE and S-KMSE with varying sample size ($n$) and dimension ($d$). For B-KMSE, we calculate $\alpha$ using \eqref{eq:empirical-alpha}, whereas R-KMSE and S-KMSE use LOOCV to choose $\lambda$.}
  \vspace{-2mm}
  \label{fig:sim-results2}
\end{figure}  

% real-world data
\subsection{Real Data}  

To evaluate the proposed estimators on real-world data, we consider several benchmark applications, namely, classification via Parzen window classifier, density estimation via 
kernel mean matching \citep{Song08:TDE}, 
%kernel PCA using shrinkage mean and covariance operator \citep{Scholkopf98:NCA}, 
and discriminative learning on distributions \citep{Muandet12:SMM,Muandet13:OCSMM}. For some of these tasks we employ datasets from the UCI repositories. We use only real-valued features, each of which is normalized to have zero mean and unit variance.

\subsubsection{Parzen Window Classifiers}

One of the oldest and best-known classification algorithms is the \emph{Parzen window classifier} \citep{Duda00:PC}. It is easy to implement and is one of the powerful non-linear supervised learning techniques. Suppose we have data points from two classes, namely, positive class and negative class. For positive class, we observe $\mathfrak{X}\triangleq \{x_1,x_2,\ldots,x_n\}\subset\inspace$, while for negative class we have $\mathfrak{Y} \triangleq \{y_1,y_2,\ldots,y_m\}\subset\inspace$. 
Following \citet[Sec. 5.1.2]{Shawe04:KMPA}, the Parzen window classifier is given by
\begin{equation}
  \label{eq:parzen-window}
  f(z) = \mathrm{sgn}\left(\frac{1}{n}\sum_{i=1}^n k(z,x_i) - \frac{1}{m}\sum_{j=1}^m k(z,y_j) + b\right) = \mathrm{sgn}\left( \hat{\mu}_{\mathfrak{X}}(z) - \hat{\mu}_{\mathfrak{Y}}(z) + b\right),
\end{equation}
\noindent where $b$ is a bias term given by $b = \frac{1}{2}(\|\hat{\mu}_{\mathfrak{Y}}\|^2_{\hbspace} - \|\hat{\mu}_{\mathfrak{X}}\|^2_{\hbspace})$. Note that $f(z)$ is a threshold linear function in $\hbspace$ with weight vector $\mathbf{w} = (1/n)\sum_{i=1}^n\phi(x_i) - (1/m)\sum_{j=1}^m\phi(y_j)$ (see \citet[Sec. 5.1.2]{Shawe04:KMPA} for more detail). This algorithm is often referred to as the lazy algorithm as it does not require training.

\begin{table}[t]  
  \centering
  \resizebox{\textwidth}{!}{ 
  \begin{tabular}{|c||cccc|}
    \hline
    \multirow{2}{*}{\textbf{Dataset}} & \multicolumn{4}{c|}{\textbf{Classification Error Rate}} \\
    & \textbf{KME} & \textbf{B-KMSE} & \textbf{R-KMSE} & \textbf{S-KMSE} \\
    \hline
    \hline
    Climate Model & 0.0348$\pm$0.0118 & 0.0348$\pm$0.0118 & 0.0348$\pm$0.0118 & 0.0348$\pm$0.0118 \\
    Ionosphere &  0.2873$\pm$0.0343 & \textbf{0.2768$\pm$0.0359} & \textbf{0.2749$\pm$0.0341} & 0.2800$\pm$0.0367 \\
    Parkinsons & 0.1318$\pm$0.0441 & 0.1250$\pm$0.0366 & \textbf{0.1157$\pm$0.0395} & 0.1309$\pm$0.0396 \\
    Pima & 0.2951$\pm$0.0462 & 0.2921$\pm$0.0442 & 0.2937$\pm$0.0458 & 0.2943$\pm$0.0471 \\
    SPECTF & 0.2583$\pm$0.0829 & 0.2597$\pm$0.0817 & \textbf{0.2263$\pm$0.0626} & 0.2417$\pm$0.0651 \\
    Iris &  0.1079$\pm$0.0379 & 0.1071$\pm$0.0389 & 0.1055$\pm$0.0389 & 0.1040$\pm$0.0383 \\
    Wine &  0.1301$\pm$0.0381 & \textbf{0.1183$\pm$0.0445} & \textbf{0.1161$\pm$0.0414} & \textbf{0.1183$\pm$0.0431} \\
    \hline
  \end{tabular}
  }
  \caption{The classification error rate of Parzen window classifier via different kernel mean estimators. The boldface represents the result whose difference from the baseline, \ie, KME, is statistically significant.}
  \vspace{-7mm}
  \label{tab:parzen-window}
\end{table}

%% old result
%\begin{table}[h]  
%  \centering
%  \resizebox{\textwidth}{!}{ 
%  \begin{tabular}{|c||cccc|}
%    \hline
%    \multirow{2}{*}{\textbf{Dataset}} & \multicolumn{4}{c|}{\textbf{Classification Error Rate}} \\
%    & \textbf{KME} & \textbf{B-KMSE} & \textbf{R-KMSE} & \textbf{S-KMSE} \\
%    \hline
%    \hline
%    Climate Model & 0.0700$\pm$0.0093 & \textbf{0.0654$\pm$0.0129} & \textbf{0.0651$\pm$0.0153} & \textbf{0.0640$\pm$0.0137} \\
%    Ionosphere & 0.2812$\pm$0.0414 & \textbf{0.2670$\pm$0.0416} & \textbf{0.2668$\pm$0.0416} & \textbf{0.2740$\pm$0.0404} \\
%    Parkinsons & 0.1226$\pm$0.0506 & \textbf{0.1144$\pm$0.0472} & 0.1167$\pm$0.0500 & 0.1218$\pm$0.0484 \\
%    Pima & 0.2939$\pm$0.0302 & 0.2927$\pm$0.0300 & 0.2927$\pm$0.0298 & 0.2934$\pm$0.0303 \\
%    SPECTF & 0.2606$\pm$0.1025 & 0.2600$\pm$0.1043 & \textbf{0.2463$\pm$0.1031} & 0.2563$\pm$0.0959 \\
%    Iris & 0.0853$\pm$0.0491 & \textbf{0.0823$\pm$0.0477} & \textbf{0.0817$\pm$0.0479} & 0.0823$\pm$0.0511 \\
 %   Wine & 0.0266$\pm$0.0239 & 0.0257$\pm$0.0222 & 0.0257$\pm$0.0225 & \textbf{0.0234$\pm$0.0194} \\
%   \hline
%  \end{tabular}}
%  \caption{The classification error rate of Parzen window classifier via different kernel mean estimators. The boldface represents the result whose difference from the baseline, i.e., KME, is statistically significant.}
%  \vspace{-7mm}
%  \label{tab:parzen-window}
%\end{table}

In brief, the classifier \eqref{eq:parzen-window} assigns the data point $z$ to the class whose empirical kernel mean $\hat{\mu}$ is closer to the feature map $k(z,\cdot)$ of the data point in the RKHS. On the other hand, we may view the empirical kernel mean $\hat{\mu}_{\mathfrak{X}} \triangleq \frac{1}{n}\sum_{i=1}^nk(x_i,\cdot)$ (resp. $\hat{\mu}_{\mathfrak{Y}} \triangleq \frac{1}{m}\sum_{j=1}^mk(y_j,\cdot)$) as a standard empirical estimate, \ie, KME, of the true kernel mean representation of the class-conditional distribution $\pp{P}(X|Y=+1)$ (resp. $\pp{P}(X|Y=-1)$). 
Given the improvement of shrinkage estimators over the empirical estimator of kernel mean, it is natural to expect that the performance of Parzen window classifier can be improved by 
employing shrinkage estimators of the true mean representation.

Our goal in this experiment is to compare the performance of Parzen window classifier using different kernel mean estimators. That is, we replace $\hat{\mu}_{\mathfrak{X}}$ 
and $\hat{\mu}_{\mathfrak{Y}}$ by their shrinkage counterparts and evaluate the resulting classifiers across several datasets taken from the UCI machine learning repository. In 
this experiment, we only consider the Gaussian RBF kernel whose bandwidth parameter is chosen by cross-validation procedure over a uniform grid $\sigma\in[0.1,2]$. We use 30\% of each dataset as a test set and the rest as a training set. We employ a simple pairwise coupling and majority vote for multi-class classification. We repeat the experiments 100 times and perform the paired-sample $t$-test on the results at 5\% significance level.
Table \ref{tab:parzen-window} reports the classification error rates of the Parzen window classifiers with different kernel mean estimators. Although the improvement is not substantial, we can see that the shrinkage estimators consistently give better performance than the standard estimator.

%% results of mean classifiers 
\subsubsection{Density Estimation}
  We perform density estimation via kernel mean matching \citep{Song08:TDE}, wherein we fit the density $Q=\sum_{j=1}^m\pi_j\mathcal{N}(\bm{\theta}_j,\sigma_j^2\id)$ to each dataset by the following minimization problem:
\begin{equation}  
  \min_{\bm{\pi},\bm{\theta},\bm{\sigma}} \|\hat{\mu} - \mu_{Q}\|_{\hbspace}^2 \quad \text{subject to} \quad \sum_{j=1}^m\pi_j = 1, \; \pi_j \geq 0 \,. \label{eq:optimaa}
\end{equation}
The empirical mean map $\hat{\mu}$ is obtained from samples using different estimators, whereas $\mu_{Q}$ is the kernel mean embedding of the density $Q$. Unlike experiments in 
\citet{Song08:TDE}, our goal is to compare different estimators of $\mu_{\pp{P}}$ (where $\pp{P}$ is the true data distribution), by replacing $\hat\mu$ in (\ref{eq:optimaa}) with different shrinkage 
estimators. A better estimate of $\mu_{\pp{P}}$ should lead to better density estimation, as measured by the negative log-likelihood of $Q$ on the test set, which we choose to be 30\%
of the dataset. For each dataset, we set the number of mixture components $m$ to be $10$. The model is initialized by running 50 random initializations using the k-means algorithm and 
returning the best. We repeat the experiments 30 times and perform the paired sign test on the results at 5\% significance level.\footnote{The paired sign test is a nonparametric 
test that can be used to examine whether two paired samples have the same distribution. In our case, we compare B-KMSE, R-KMSE and S-KMSE against KME.}

The average negative log-likelihood of the model $Q$, optimized via different estimators, is reported in Table \ref{tab:kmm}. In most cases, both R-KMSE and S-KMSE consistently achieve 
smaller negative log-likelihood when compared to KME. B-KMSE also tends to outperform the KME. However, in few cases the KMSEs achieve larger negative log-likelihood, especially when we 
use linear and degree-2 polynomial kernels. This highlight the potential of our estimators in a non-linear setting.

%There are however few cases in which KME outperforms the proposed estimators, especially when the dataset is relatively large, e.g., \texttt{satimage} and \texttt{abalone}. We suspect that in those cases the standard KME already provides an accurate estimate of the kernel mean. To get a better estimate, more effort is required to optimize for the shrinkage parameter. Moreover, the improvement across different kernels is consistent with results on the synthetic datasets.
\subsubsection{Discriminative Learning on Probability Distributions}

The last experiment involves the discriminative learning on a collection of probability distributions via the kernel mean representation. A positive semi-definite kernel between 
distributions can be defined via their kernel mean embeddings. That is, given a training sample 
$(\widehat{\pp{P}}_1,y_1),\ldots,(\widehat{\pp{P}}_m,y_m)\in\mathscr{P}\times\{-1,+1\}$ where $\widehat{\pp{P}}_i := \frac{1}{n_i}\sum_{p=1}^{n_i}\delta_{x^{i}_{p}}$ and 
$x^{i}_{p}\sim\pp{P}_i$, the 
%% desnity estimation  
% \subsubsection{Discriminative Learning on Probability Distributions}
% 
% The last experiment involves the discriminative learning on a collection of probability distributions via the kernel mean representation. A positive semi-definite kernel between 
% distributions can be defined via their kernel mean embeddings. That is, given a training sample 
% $(\widehat{\pp{P}}_1,y_1),\ldots,(\widehat{\pp{P}}_m,y_m)\in\mathscr{P}\times\{-1,+1\}$ where $\widehat{\pp{P}}_i := \frac{1}{n_i}\sum_{p=1}^{n_i}\delta_{x^{i}_{p}}$ and 
% $x^{i}_{p}\sim\pp{P}_i$, 
%the linear 
\noindent linear kernel between two distributions is approximated by 
   \begin{equation*}
     \langle\hat{\mu}_{\pp{P}_i},\hat{\mu}_{\pp{P}_j}\rangle_{\hbspace} 
     = \left\langle \sum_{p=1}^{n_i}\beta^i_p \phi(x^i_p),\sum_{q=1}^{n_j}\beta^j_q \phi(x^j_q)\right\rangle_{\hbspace} 
     = \sum_{p=1}^{n_i}\sum_{q=1}^{n_j}\beta^i_p\beta^j_q k(x^{i}_{p},x^{j}_{q}) ,
   \end{equation*}
\noindent where the weight vectors $\bvec^i$ and $\bvec^j$ come from the kernel mean estimates of $\mu_{\pp{P}_i}$ and $\mu_{\pp{P}_j}$, respectively. The non-linear kernel can then
be defined accordingly, \eg, $\kappa(\pp{P}_i,\pp{P}_j) = \exp(\|\hat{\mu}_{\pp{P}_i} - \hat{\mu}_{\pp{P}_j}\|^2_{\hbspace}/2\sigma^2)$, see \citet{Christmann10:Kernels}. Our goal in this experiment is to 
investigate if the shrinkage estimators of the kernel mean improve the performance of discriminative learning on distributions. To this end, we conduct experiments on natural 
scene categorization using support measure machine (SMM) \citep{Muandet12:SMM} and group anomaly detection on a high-energy physics dataset using one-class SMM (OCSMM) \citep{Muandet13:OCSMM}. We use both linear and non-linear kernels where the Gaussian RBF kernel is employed as an embedding kernel \citep{Muandet12:SMM}. All hyper-parameters are chosen by 10-fold 
cross-validation.\footnote{In principle one can incorporate the shrinkage parameter into the cross-validation procedure. In this work we are only interested in the value of $\lambda$ returned by the proposed LOOCV procedure.} For our unsupervised problem, we repeat the experiments using several parameter settings and report the best results.
% \begin{table}[tp!]
%   \centering
%   \begin{tabular}{|l|cc|cc|}
%     \hline
%     \multirow{2}{*}{\textbf{Estimator}} & \multicolumn{2}{c|}{\textbf{Linear Kernel}} & \multicolumn{2}{c|}{\textbf{Non-linear Kernel}} \\
%     & SMM & OCSMM & SMM & OCSMM \\
%     \hline
%     KME & 0.5432 & 0.6955 & 0.6017 & 0.9085 \\
%     B-KMSE & 0.5455 & 0.6964 & 0.6106 & 0.9088 \\
%     S-KMSE & 0.5521 & 0.6970 & 0.6303 & 0.9105 \\
%     F-KMSE & 0.5610 & 0.6970 & 0.6522 & 0.9095 \\
%     \hline
%   \end{tabular}
%   \caption{The classification accuracy of SMM and the area under ROC curve (AUC) of OCSMM using different estimators to construct the kernel on distributions.}
%   \label{tab:smm-ocsmm}
% \end{table}
Table \ref{tab:smm-ocsmm} reports the classification accuracy of SMM and the area under ROC curve (AUC) of OCSMM using different kernel mean estimators. All shrinkage estimators consistently lead to better performance on both SMM and OCSMM when compared to KME.

In summary, the proposed shrinkage estimators outperform the standard KME. While B-KMSE and R-KMSE are very competitive compared to KME, S-KMSE tends to outperform both B-KMSE and R-KMSE, however, sometimes leading to poor estimates depending on the dataset and the kernel function.

\afterpage{
\begin{landscape}
\begin{table}
  \centering  
  \resizebox{1.4\textwidth}{!}{ 
  \begin{tabular}{|rl|cccc|cccc|cccc|cccc|} 
    \hline
    \multicolumn{2}{|c|}{\multirow{2}{*}{\textbf{Dataset}}} & \multicolumn{4}{c|}{\textbf{LIN}} & \multicolumn{4}{c|}{\textbf{POLY2}} & \multicolumn{4}{c|}{\textbf{POLY3}} & \multicolumn{4}{c|}{\textbf{RBF}} \\    
    && KME & B-KMSE & R-KMSE & S-KMSE & KME & B-KMSE & R-KMSE & S-KMSE & KME & B-KMSE & R-KMSE & S-KMSE & KME & B-KMSE & R-KMSE & S-KMSE \\
    \hline \hline 
    1. & ionosphere & 39.878 & 40.038 & 39.859 & 39.823 & 34.651 & \textbf{34.352} & 34.390 & \textbf{34.009} & 35.943 & \textbf{35.575} & 35.543 & \textbf{34.617} & 41.601 & 40.976 & \textbf{40.817} & 41.229 \\
    2. & sonar & 72.240 & \textbf{72.044} & 72.198 & \textbf{72.157} & 100.420 & \textbf{99.573} & \textbf{97.844} & \textbf{97.783} & 72.294 & \textbf{71.933} & 72.003 & 71.835 & 98.540 & \textbf{95.815} & \textbf{93.458} & \textbf{93.010} \\
    3. & Australian & 18.277 & 18.280 & 18.294 & 18.293 & 18.357 & 18.381 & 18.391 & 18.429 & 18.611 & 18.463 & 18.466 & 18.495 & 19.428 & 19.325 & 19.418 & 19.393 \\
    4. & specft & 57.444 & \textbf{57.2808} & 57.218 & \textbf{57.224} & 67.018 & 66.979 & \textbf{66.431} & \textbf{66.391} & 59.585 & \textbf{58.969} & 60.006 & 60.616 & 65.674 & 65.138 & 65.039 & \textbf{64.699} \\ 
    5. & wdbc & 31.801 &  31.759 & 31.776 & 31.781 & 32.421 & 32.310 & 32.373 & 32.316 & 31.183 & \textbf{31.167} & \textbf{31.127} & 31.110 & 36.471 & 36.453 & 36.335 & 35.898 \\ 
    6. & wine & 16.019 & 16.000 & 16.039 & 16.009 & 17.070 & 16.920 & \textbf{16.886} & 16.960 & 16.393 & 16.300 & 16.309 & 16.202 & 17.569 & 17.546 & 17.498 & 17.498 \\
    7. & satimage & 25.258 & 25.317 & 25.219 & 25.186 & 24.214 & 24.111 & 24.132 & 24.259 & 25.284 & 25.276 & 25.239 & 25.263 & 23.741 & 23.753 & 23.728 & \textbf{24.384} \\
    8. & segment & 18.326 & \textbf{17.868} & 18.055 & \textbf{18.124} & 18.571 & 18.292 & 18.277 & 18.631 & 19.642 & 19.549 & 19.404 & 19.628 & 21.946 & \textbf{21.598} & \textbf{21.580} & \textbf{21.822} \\
    9. & vehicle & 16.633 & 16.519 & 16.521 & 16.499 & 16.096 & \textbf{15.998} & 16.031 & 16.041 & 16.288 & 16.278 & \textbf{16.281} & \textbf{16.263} & 18.260 & 18.056 & 18.119 & \textbf{17.911} \\
    10. & svmguide2 & 27.298 & 27.273 & 27.281 & 27.276 & 27.812 & \textbf{28.030} & 27.985 & \textbf{27.975} & 28.014 & \textbf{28.177} & \textbf{28.321} & 28.250 & 28.132 & 28.122 & 28.119 & 28.020 \\
    11. & vowel & 12.632 & 12.626 & 12.629 & 12.656 & 12.532 & 12.471 & 12.479 & 12.472 & 13.069 & 13.061 & 13.056 & \textbf{13.054} & 13.526 & 13.486 & 13.462 & \textbf{13.453} \\
    12. & housing & 14.637 & 14.441 & 14.469 & \textbf{14.296} & 15.543 & 15.467 & 15.414 & 15.390 & 15.592 & \textbf{15.543} & \textbf{15.509} & \textbf{15.408} & 16.487 & 16.239 & 16.424 & \textbf{16.019} \\
    13. & bodyfat & 17.527 & 17.362 & \textbf{17.348} & 17.396 & 17.386 & 17.358 & 17.356 & 17.329 & 16.418 & 16.393 & \textbf{16.305} & \textbf{16.194} & 17.875 & 17.652 & 17.607 & \textbf{17.651} \\
    14. & abalone & 5.706 &  5.665 & 5.708 &  5.722 & 7.281 & 7.116 & 7.185 & 7.025 & 5.864 & 5.847 & 5.853 & 5.832 & 6.068 & 6.039 & 6.049 & 5.910 \\
    15. & glass & 9.245 & 9.211 & 9.198 & 9.217 & 8.571 & 8.473 & 8.457 & 8.414 & 9.050 & 8.991 & 9.012 & \textbf{8.737} & 9.606 & \textbf{9.605} & \textbf{9.575} & \textbf{9.573} \\
    \hline 
  \end{tabular}} 
  \caption{Average negative log-likelihood of the model $Q$ on test points over 30 randomizations. The boldface represents the result whose difference from the baseline, \ie, KME, is statistically significant.}
  \label{tab:kmm}  
\end{table}
\begin{table}[tp!]
  \centering
  \begin{tabular}{|l|cc|cc|}
    \hline
    \multirow{2}{*}{\textbf{Estimator}} & \multicolumn{2}{c|}{\textbf{Linear Kernel}} & \multicolumn{2}{c|}{\textbf{Non-linear Kernel}} \\
    & SMM & OCSMM & SMM & OCSMM \\
    \hline
    KME & 0.5432 & 0.6955 & 0.6017 & 0.9085 \\
    B-KMSE & 0.5455 & 0.6964 & 0.6106 & 0.9088 \\
    R-KMSE & 0.5521 & 0.6970 & 0.6303 & 0.9105 \\
    S-KMSE & 0.5606 & 0.6970 & 0.6412 & 0.9063 \\
    \hline
  \end{tabular}
  \caption{The classification accuracy of SMM and the area under ROC curve (AUC) of OCSMM using different estimators to construct the kernel on distributions.}
  \label{tab:smm-ocsmm}
\end{table}
\end{landscape}
}

%% conclusions
\section{Conclusion and Discussion}
\label{sec:conclusions}
Motivated by the classical James-Stein phenomenon, in this paper, we proposed a shrinkage estimator for the kernel mean $\mu$ in a reproducing kernel Hilbert space $\hbspace$ and showed they improve upon the empirical estimator $\hat{\mu}$ in the mean squared sense. We 
showed the proposed shrinkage estimator $\tilde{\mu}$ (with the shrinkage parameter being learned from data) to be $\sqrt{n}$-consistent and satisfies $\pp{E}\Vert \tilde{\mu}-\mu\Vert^2_\hbspace<\pp{E}\Vert \hat{\mu}-\mu\Vert^2_\hbspace+O(n^{-3/2})$ as $n\rightarrow\infty$. We 
also provided a regularization interpretation to shrinkage estimation, using which we also presented two shrinkage estimators, namely regularized shrinkage estimator and spectral shrinkage estimator, wherein 
the first one is closely related to $\tilde{\mu}$ while the latter exploits the spectral decay of the covariance operator in $\hbspace$. 
%can be improved. 
%Our theoretical result suggests that there exists a wide class of kernel mean estimators that outperform the standard one. We propose shrinkage estimators for the kernel mean and provide an empirical estimate of the shrinkage parameter with theoretical guarantees. 
% Moreover, the proposed estimators allow one to incorporate prior knowledge as to the location of the true kernel mean. Using particular prior knowledge, we formulate the kernel mean 
% estimation as a regularization problem, resulting in two computationally efficient shrinkage estimators, namely, simple and flexible kernel mean shrinkage estimators. 
We showed through numerical experiments that the proposed estimators outperform the empirical estimator in various scenarios. Most importantly, the shrinkage estimators 
not only provide more accurate estimation, but also lead to superior performance on many real-world applications. 
 
In this work, while we focused mainly on an estimation of the mean function in RKHS, it is quite straightforward to extend the shrinkage idea to
estimate covariance (and cross-covariance) operators and tensors in RKHS (see Appendix \ref{appx:shrink-cov} for a brief description). The key observation is that the covariance operator can be viewed as a mean function in a tensor RKHS. 
Covariance operators in RKHS are ubiquitous in many classical learning algorithms such as kernel PCA, kernel FDA, and kernel CCA. Recently, a preliminary investigation with some numerical results on shrinkage estimation of covariance operators is carried out 
in \citet{Muandet14:KMSE} and \citet{Wehbe15:SmallSize}. In the future, we intend to carry out a detailed study on the shrinkage estimation of covariance (and cross-covariance) 
operators.

\acks{The authors thanks the reviewers and the action editor for their detailed comments that signficantly improved the manuscript. This work was partly done while Krikamol Muandet was visiting the Institute of Statistical Mathematics, Tokyo, and New York University, New York; and while Bharath Sriperumbudur was visiting the Max Planck Institute for Intelligent Systems, Germany. The authors wish to thank David Hogg and Ross Fedely for reading the first draft and giving valuable comments. We also thank Motonobu Kanagawa, Yu Nishiyama, and Ingo Steinwart for fruitful discussions. Kenji Fukumizu has been supported in part by MEXT Grant-in-Aid for Scientific Research on Innovative Areas 25120012.}

%% appendix
\appendix

\section{Shrinkage Estimation of Covariance Operator}
\label{appx:shrink-cov}

%In this section, we present our preliminary investigation on extending the shrinkage idea to estimate covariance (and cross-covariance) operators and present some numerical results that demonstrate the performance of the corresponding shrinkage estimator.

Let $(\hbspace_X,k_X)$ and $(\hbspace_Y,k_Y)$ be separable RKHSs of functions on measurable spaces $\inspace$ and $\mathcal{Y}$, with measurable reproducing kernels $k_X$ and $k_Y$ 
(with corresponding feature maps $\phi$ and $\varphi$), respectively. We consider a random vector $(X,Y):\Omega \rightarrow \inspace\times\mathcal{Y}$ with distribution $\pp{P}_{\mathit{XY}}$. The marginal distributions of $X$ and $Y$ are denoted by $\pp{P}_X$ and $\pp{P}_Y$, respectively. If $\ep_Xk_X(X,X) < \infty$ and $\ep_Yk_Y(Y,Y) < \infty$, then there exists a unique \emph{cross-covariance operator} $\Sigma_{\mathit{YX}}:\hbspace_X\rightarrow\hbspace_Y$ such that
\begin{equation*}
  \langle g,\Sigma_{\mathit{YX}}f\rangle_{\hbspace_Y} = \ep_{\mathit{XY}}[(f(X) - \ep_X[f(X)])(g(Y) - \ep_Y[g(Y)])] = Cov(f(X),g(Y))
\end{equation*}
holds for all $f\in\hbspace_X$ and $g\in\hbspace_Y$ \citep{Baker1973,Fukumizu04:DRS}. If $X$ is equal to $Y$, we obtain the self-adjoint operator $\Sigma_{\mathit{XX}}$ called the \emph{covariance operator}. 
Given i.i.d sample $\{(x_i,y_i)\}^n_{i=1}$ from $\pp{P}_{\mathit{XY}}$, we can write the empirical cross-covariance operator $\widehat{\Sigma}_{\mathit{YX}}$ as
\begin{equation}
  \label{eq:emp-cco}
  \widehat{\Sigma}_{\mathit{YX}} \triangleq \frac{1}{n}\sum_{i=1}^n\phi(x_i)\otimes\varphi(y_i) - \hat{\mu}_X\otimes\hat{\mu}_Y
\end{equation}
\noindent where $\hat{\mu}_X = \frac{1}{n}\sum_{i=1}^n\phi(x_i)$ and $\hat{\mu}_Y = \frac{1}{n}\sum_{i=1}^n\varphi(y_i)$.\footnote{Although it is possible to estimate $\hat{\mu}_X$ and $\hat{\mu}_Y$ using our shrinkage estimators, the key novelty here is to directly shrink the \emph{centered} covariance operator.} 
Let $\tilde{\phi}$ and $\tilde{\varphi}$ be the centered version of the feature map $\phi$ and $\varphi$ defined as $\tilde{\phi}(x)=\phi(x)-\hat{\mu}_X$ and $\tilde{\varphi}(y)=\varphi(y)-\hat{\mu}_Y$, respectively. Then, the empirical cross-covariance operator in \eqref{eq:emp-cco} can be rewritten as
\begin{equation*}
  \widehat{\Sigma}_{\mathit{YX}}=\frac{1}{n}\sum_{i=1}^n\tilde{\phi}(x_i)\otimes\tilde{\varphi}(y_i),
\end{equation*}
and therefore a shrinkage estimator of $\Sigma_{\mathit{YX}}$, \eg, an equivalent of B-KMSE, can be constructed based on the ideas presented in this paper. That is, by the inner product property in product space, we have 
 \begin{eqnarray*}
   \langle \tilde{\phi}(x)\otimes\tilde{\varphi}(y),\tilde{\phi}(x')\otimes\tilde{\varphi}(y')\rangle_{\hbspace_X\otimes\hbspace_Y} 
   &{}={}& \langle\tilde{\phi}(x),\tilde{\phi}(x')\rangle_{\hbspace_X}\langle\tilde{\varphi(y)},\tilde{\varphi(y')}\rangle_{\hbspace_Y} \\
   &{}={}& \tilde{k}_X(x,x')\tilde{k}_Y(y,y'). 
\end{eqnarray*}
\noindent where $\tilde{k}_X$ and $\tilde{k}_Y$ denote the centered kernel functions. 
As a result, we can obtain the shrinkage estimators for $\Sigma_{\mathit{YX}}$ by plugging the above kernel into the KMSEs. 
\vskip 0.2in
\bibliographystyle{plainnat}
\bibliography{kmse-jmlr2014}

\begin{thebibliography}{52}
\providecommand{\natexlab}[1]{#1}
\providecommand{\url}[1]{\texttt{#1}}
\expandafter\ifx\csname urlstyle\endcsname\relax
  \providecommand{\doi}[1]{doi: #1}\else
  \providecommand{\doi}{doi: \begingroup \urlstyle{rm}\Url}\fi

\bibitem[Aronszajn(1950)]{aronszajn50reproducing}
Nachman Aronszajn.
\newblock Theory of reproducing kernels.
\newblock \emph{Transactions of the American Mathematical Society}, 68\penalty0
  (3):\penalty0 337--404, 1950.

\bibitem[Baker(1973)]{Baker1973}
Charles~R. Baker.
\newblock Joint measures and cross-covariance operators.
\newblock \emph{Transactions of the American Mathematical Society},
  186:\penalty0 pp. 273--289, 1973.

\bibitem[Bauer et~al.(2007)Bauer, Pereverzev, and
  Rosasco]{Bauer07:Regularization}
Frank Bauer, Sergei Pereverzev, and Lorenzo Rosasco.
\newblock On regularization algorithms in learning theory.
\newblock \emph{Journal of Complexity}, 23\penalty0 (1):\penalty0 52 -- 72,
  2007.
\newblock ISSN 0885-064X.

\bibitem[Berger and Wolpert(1983)]{Berger83:GP-Stein}
James Berger and Robert Wolpert.
\newblock Estimating the mean function of a {G}aussian process and the {S}tein
  effect.
\newblock \emph{Journal of Multivariate Analysis}, 13\penalty0 (3):\penalty0
  401--424, 1983.

\bibitem[Berger(1976)]{Berger76:quadratic}
James~O. Berger.
\newblock Admissible minimax estimation of a multivariate normal mean with
  arbitrary quadratic loss.
\newblock \emph{Annals of Statistics}, 4\penalty0 (1):\penalty0 223--226, 1976.

\bibitem[Berlinet and Thomas-Agnan(2004)]{Berlinet04:RKHS}
Alain Berlinet and Christine Thomas-Agnan.
\newblock \emph{Reproducing Kernel {H}ilbert Spaces in Probability and
  Statistics}.
\newblock Kluwer Academic Publishers, 2004.

\bibitem[Christmann and Steinwart(2010)]{Christmann10:Kernels}
Andreas Christmann and Ingo Steinwart.
\newblock Universal kernels on {Non-Standard} input spaces.
\newblock In \emph{Advances in Neural Information Processing Systems (NIPS)},
  pages 406--414. 2010.

\bibitem[Dhillon et~al.(2004)Dhillon, Guan, and Kulis]{Dhillon04:KKS}
Inderjit~S. Dhillon, Yuqiang Guan, and Brian Kulis.
\newblock Kernel $k$-means: {S}pectral clustering and normalized cuts.
\newblock In \emph{Proceedings of the 10th ACM SIGKDD International Conference
  on Knowledge Discovery and Data Mining}, pages 551--556, New York, NY, USA,
  2004.

\bibitem[Diestel and Uhl(1977)]{Diestel-77}
Joseph Diestel and John~J. Uhl.
\newblock \emph{Vector Measures}.
\newblock American Mathematical Society, Providence, 1977.

\bibitem[Dinculeanu(2000)]{Dinculeanu:2000}
Nicolae Dinculeanu.
\newblock \emph{Vector Integration and Stochastic Integration in Banach
  Spaces}.
\newblock Wiley, 2000.

\bibitem[Duda et~al.(2000)Duda, Hart, and Stork]{Duda00:PC}
Richard~O. Duda, Peter~E. Hart, and David~G. Stork.
\newblock \emph{Pattern Classification (2nd Edition)}.
\newblock Wiley-Interscience, 2000.

\bibitem[Efron and Morris(1977)]{Efron77:paradox}
Bradley Efron and Carl~N. Morris.
\newblock Stein's paradox in statistics.
\newblock \emph{Scientific American}, 236\penalty0 (5):\penalty0 119--127,
  1977.

\bibitem[Fukumizu et~al.(2004)Fukumizu, Bach, and Jordan]{Fukumizu04:DRS}
Kenji Fukumizu, Francis~R. Bach, and Michael~I. Jordan.
\newblock Dimensionality reduction for supervised learning with reproducing
  kernel {H}ilbert spaces.
\newblock \emph{Journal of Machine Learning Research}, 5:\penalty0 73--99,
  2004.

\bibitem[Fukumizu et~al.(2007)Fukumizu, Bach, and Gretton]{Fukumizu07:KCCA}
Kenji Fukumizu, Francis~R. Bach, and Arthur Gretton.
\newblock Statistical consistency of kernel canonical correlation analysis.
\newblock \emph{Journal of Machine Learning Research}, 8:\penalty0 361--383,
  2007.

\bibitem[Fukumizu et~al.(2011)Fukumizu, Song, and Gretton]{Fukumizu11:KBR}
Kenji Fukumizu, Le~Song, and Arthur Gretton.
\newblock Kernel {B}ayes' rule.
\newblock In \emph{Advances in Neural Information Processing Systems (NIPS)},
  pages 1737--1745. 2011.

\bibitem[Gretton et~al.(2007)Gretton, Borgwardt, Rasch, Sch\"{o}lkopf, and
  Smola]{Gretton07:MMD}
Arthur Gretton, Karsten~M. Borgwardt, Malte Rasch, Bernhard Sch\"{o}lkopf, and
  Alexander~J. Smola.
\newblock A kernel method for the two-sample-problem.
\newblock In \emph{Advances in Neural Information Processing Systems (NIPS)},
  2007.

\bibitem[Gretton et~al.(2008)Gretton, Fukumizu, Teo, Song, Sch{\"o}lkopf, and
  Smola]{Gretton-08}
Arthur Gretton, Kenji Fukumizu, Choon~Hui Teo, Le~Song, Bernhard Sch{\"o}lkopf,
  and Alexander~J. Smola.
\newblock A kernel statistical test of independence.
\newblock In \emph{Advances in Neural Information Processing Systems 20}, pages
  585--592. MIT Press, 2008.

\bibitem[Gretton et~al.(2012)Gretton, Borgwardt, Rasch, Sch\"{o}lkopf, and
  Smola]{Gretton12:KTT}
Arthur Gretton, Karsten~M. Borgwardt, Malte~J. Rasch, Bernhard Sch\"{o}lkopf,
  and Alexander Smola.
\newblock A kernel two-sample test.
\newblock \emph{Journal of Machine Learning Research}, 13:\penalty0 723--773,
  2012.

\bibitem[Gruber(1998)]{Gruber98:Shrinkage}
Marvin Gruber.
\newblock \emph{Improving Efficiency by Shrinkage: The James-Stein and Ridge
  Regression Estimators}.
\newblock Statistics Textbooks and Monographs. Marcel Dekker, 1998.

\bibitem[Gr{\"u}new{\"a}lder et~al.(2012)Gr{\"u}new{\"a}lder, Lever, Gretton,
  Baldassarre, Patterson, and Pontil]{Grunewalder12:LGBPP}
Steffen Gr{\"u}new{\"a}lder, Guy Lever, Arthur Gretton, Luca Baldassarre, Sam
  Patterson, and Massimiliano Pontil.
\newblock Conditional mean embeddings as regressors.
\newblock In \emph{Proceedings of the 29th International Conference on Machine
  Learning (ICML)}, 2012.

\bibitem[Gr\"{u}new\"{a}lder et~al.(2013)Gr\"{u}new\"{a}lder, Gretton, and
  Shawe-Taylor]{Grunewalder13:SO}
Steffen Gr\"{u}new\"{a}lder, Arthur Gretton, and John Shawe-Taylor.
\newblock Smooth operators.
\newblock In \emph{Proceedings of the 30th International Conference on Machine
  Learning (ICML)}, 2013.

\bibitem[James and Stein(1961)]{Stein61:JSE}
W.~James and James Stein.
\newblock Estimation with quadratic loss.
\newblock In \emph{Proceedings of the Third Berkeley Symposium on Mathematical
  Statistics and Probability}, pages 361--379. University of California Press,
  1961.

\bibitem[Kim and Scott(2012)]{Kim12:RKDE}
JooSeuk Kim and Clayton~D. Scott.
\newblock Robust kernel density estimation.
\newblock \emph{Journal of Machine Learning Research}, 13:\penalty0
  2529−--2565, Sep 2012.

\bibitem[Mandelbaum and Shepp(1987)]{Mandelbaum87:admissibility}
Avi Mandelbaum and L.~A. Shepp.
\newblock Admissibility as a touchstone.
\newblock \emph{Annals of Statistics}, 15\penalty0 (1):\penalty0 252--268,
  1987.

\bibitem[Muandet and Sch\"{o}lkopf(2013)]{Muandet13:OCSMM}
Krikamol Muandet and Bernhard Sch\"{o}lkopf.
\newblock One-class support measure machines for group anomaly detection.
\newblock In \emph{Proceedings of the 29th Conference on Uncertainty in
  Artificial Intelligence (UAI)}. AUAI Press, 2013.

\bibitem[Muandet et~al.(2012)Muandet, Fukumizu, Dinuzzo, and
  Sch\"{o}lkopf]{Muandet12:SMM}
Krikamol Muandet, Kenji Fukumizu, Francesco Dinuzzo, and Bernhard
  Sch\"{o}lkopf.
\newblock Learning from distributions via support measure machines.
\newblock In \emph{Advances in Neural Information Processing Systems (NIPS)},
  pages 10--18. 2012.

\bibitem[Muandet et~al.(2014{\natexlab{a}})Muandet, Fukumizu, Sriperumbudur,
  Gretton, and Sch\"olkopf]{Muandet14:KMSE}
Krikamol Muandet, Kenji Fukumizu, Bharath Sriperumbudur, Arthur Gretton, and
  Bernhard Sch\"olkopf.
\newblock Kernel mean estimation and {S}tein effect.
\newblock In \emph{ICML}, pages 10--18, 2014{\natexlab{a}}.

\bibitem[Muandet et~al.(2014{\natexlab{b}})Muandet, Sriperumbudur, and
  Sch\"{o}lkopf]{Muandet14:Spectral}
Krikamol Muandet, Bharath Sriperumbudur, and Bernhard Sch\"{o}lkopf.
\newblock Kernel mean estimation via spectral filtering.
\newblock In \emph{Advances in Neural Information Processing Systems 27}, pages
  1--9. Curran Associates, Inc., 2014{\natexlab{b}}.

\bibitem[Privault and Réveillac(2008)]{Privault08:GP-malliavin}
Nicolas Privault and Anthony Réveillac.
\newblock Stein estimation for the drift of {G}aussian processes using the
  {M}alliavin calculus.
\newblock \emph{Annals of Statistics}, 36\penalty0 (5):\penalty0 2531--2550,
  2008.

\bibitem[Rasmussen and Williams(2006)]{Rasmussen06:GPML}
Carl~E. Rasmussen and Christopher Williams.
\newblock \emph{Gaussian Processes for Machine Learning}.
\newblock MIT Press, 2006.

\bibitem[Reed and Simon(1972)]{Reed-72}
Michael Reed and Barry Simon.
\newblock \emph{Functional Analysis}.
\newblock Academic Press, New York, 1972.

\bibitem[Sasv{\'{a}}ri(2013)]{Sasvari-13}
Zolt{\'{a}}n Sasv{\'{a}}ri.
\newblock \emph{Multivariate Characteristic and Correlation Functions}.
\newblock De Gruyter, Berlin, Germany, 2013.

\bibitem[Sch\"{o}lkopf et~al.(1998)Sch\"{o}lkopf, Smola, and
  M\"{u}ller]{Scholkopf98:NCA}
Bernhard Sch\"{o}lkopf, Alexander Smola, and Klaus-Robert M\"{u}ller.
\newblock Nonlinear component analysis as a kernel eigenvalue problem.
\newblock \emph{Neural Computation}, 10\penalty0 (5):\penalty0 1299--1319, July
  1998.

\bibitem[Sch\"{o}lkopf et~al.(2001)Sch\"{o}lkopf, Herbrich, and
  Smola]{Scholkopf01:GRT}
Bernhard Sch\"{o}lkopf, Ralf Herbrich, and Alex~J. Smola.
\newblock A generalized representer theorem.
\newblock In \emph{Proceedings of the 14th Annual Conference on Computational
  Learning Theory and and 5th European Conference on Computational Learning
  Theory}, COLT '01/EuroCOLT '01, pages 416--426, London, UK, UK, 2001.
  Springer-Verlag.

\bibitem[Shawe-Taylor and Cristianini(2004)]{Shawe04:KMPA}
John Shawe-Taylor and Nello Cristianini.
\newblock \emph{Kernel Methods for Pattern Analysis}.
\newblock Cambridge University Press, Cambridge, UK, 2004.

\bibitem[Smola et~al.(2007)Smola, Gretton, Song, and
  Sch\"{o}lkopf]{Smola07Hilbert}
Alexander~J. Smola, Arthur Gretton, Le~Song, and Bernhard Sch\"{o}lkopf.
\newblock A {H}ilbert space embedding for distributions.
\newblock In \emph{Proceedings of the 18th International Conference on
  Algorithmic Learning Theory (ALT)}, pages 13--31. Springer-Verlag, 2007.

\bibitem[Song et~al.(2008)Song, Zhang, Smola, Gretton, and
  Sch\"{o}lkopf]{Song08:TDE}
Le~Song, Xinhua Zhang, Alex Smola, Arthur Gretton, and Bernhard Sch\"{o}lkopf.
\newblock Tailoring density estimation via reproducing kernel moment matching.
\newblock In \emph{Proceedings of the 25th International Conference on Machine
  Learning (ICML)}, pages 992--999, 2008.

\bibitem[Song et~al.(2010)Song, Boots, Siddiqi, Gordon, and Smola]{Song10:HMM}
Le~Song, Byron Boots, Sajid~M. Siddiqi, Geoffrey Gordon, and Alexander~J.
  Smola.
\newblock {H}ilbert space embeddings of hidden {M}arkov models.
\newblock In \emph{Proceedings of the 27th International Conference on Machine
  Learning (ICML)}, 2010.

\bibitem[Song et~al.(2011)Song, Parikh, and Xing]{Song11:LTGM}
Le~Song, Ankur~P. Parikh, and Eric~P. Xing.
\newblock Kernel embeddings of latent tree graphical models.
\newblock In \emph{Advances in Neural Information Processing Systems (NIPS)},
  pages 2708--2716, 2011.

\bibitem[Sriperumbudur et~al.(2008)Sriperumbudur, Gretton, Fukumizu, Lanckriet,
  and Sch\"{o}lkopf]{Sriperumbudur08injectivehilbert}
Bharath Sriperumbudur, Arthur Gretton, Kenji Fukumizu, Gert Lanckriet, and
  Bernhard Sch\"{o}lkopf.
\newblock Injective {H}ilbert space embeddings of probability measures.
\newblock In \emph{The 21st Annual Conference on Learning Theory (COLT)}, 2008.

\bibitem[Sriperumbudur et~al.(2010)Sriperumbudur, Gretton, Fukumizu,
  Sch\"{o}lkopf, and Lanckriet]{Sriperumbudur10:Metrics}
Bharath Sriperumbudur, Arthur Gretton, Kenji Fukumizu, Bernhard Sch\"{o}lkopf,
  and Gert Lanckriet.
\newblock {H}ilbert space embeddings and metrics on probability measures.
\newblock \emph{Journal of Machine Learning Research}, 99:\penalty0 1517--1561,
  2010.

\bibitem[Sriperumbudur et~al.(2011)Sriperumbudur, Fukumizu, and
  Lanckriet]{Sriperumbudur10b:Universal}
Bharath Sriperumbudur, Kenji Fukumizu, and Gert Lanckriet.
\newblock Universality, characteristic kernels and {RKHS} embedding of
  measures.
\newblock \emph{Journal of Machine Learning Research}, 12:\penalty0 2389--2410,
  2011.

\bibitem[Sriperumbudur et~al.(2012)Sriperumbudur, Fukumizu, Gretton,
  Sch\"{o}lkopf, and Lanckriet]{Sriperumbudur12:Empirical}
Bharath Sriperumbudur, Kenji Fukumizu, Arthur Gretton, Bernhard Sch\"{o}lkopf,
  and Gert R.~G. Lanckriet.
\newblock On the empirical estimation of integral probability metrics.
\newblock \emph{Electronic Journal of Statistics}, 6:\penalty0 1550--1599,
  2012.
\newblock \doi{10.1214/12-EJS722}.

\bibitem[Sriperumbudur et~al.(2013)Sriperumbudur, Fukumizu, Gretton,
  Hyv{\"a}rinen, and Kumar]{sriperumbudur13density}
Bharath Sriperumbudur, Kenji Fukumizu, Arthur Gretton, Aapo Hyv{\"a}rinen, and
  Revant Kumar.
\newblock Density estimation in infinite dimensional exponential families.
\newblock 2013.
\newblock \url{http://arxiv.org/pdf/1312.3516}.

\bibitem[Stein(1955)]{Stein55:Inadmissible}
Charles Stein.
\newblock Inadmissibility of the usual estimator for the mean of a multivariate
  normal distribution.
\newblock In \emph{Proceedings of the 3rd Berkeley Symposium on Mathematical
  Statistics and Probability}, volume~1, pages 197--206. University of
  California Press, 1955.

\bibitem[Steinwart and Christmann(2008)]{Steinwart-08}
Ingo Steinwart and Andreas Christmann.
\newblock \emph{Support Vector Machines}.
\newblock Springer, New York, 2008.

\bibitem[van~der Vaart(1998)]{Vaart-98}
A.~W. van~der Vaart.
\newblock \emph{Asymptotic Statistics}.
\newblock Cambridge University Press, Cambridge, 1998.

\bibitem[Vapnik(1998)]{Vapnik98:SLT}
Vladimir Vapnik.
\newblock \emph{Statistical learning theory}.
\newblock Wiley, 1998.
\newblock ISBN 978-0-471-03003-4.

\bibitem[Wasserman(2006)]{Wasserman06:NS}
Larry Wasserman.
\newblock \emph{All of Nonparametric Statistics}.
\newblock Springer-Verlag New York, Inc., Secaucus, NJ, USA, 2006.

\bibitem[Wehbe and Ramdas(2015)]{Wehbe15:SmallSize}
Leila Wehbe and Aaditya Ramdas.
\newblock Nonparametric independence testing for small sample sizes.
\newblock In \emph{Proceedings of the Twenty-Fourth International Joint
  Conference on Artificial Intelligence {(IJCAI)}}, pages 3777--3783, July
  2015.

\bibitem[Wendland(2005)]{Wendland-05}
Holger Wendland.
\newblock \emph{Scattered Data Approximation}.
\newblock Cambridge University Press, Cambridge, UK, 2005.

\bibitem[Yurinsky(1995)]{Yurinksy-95}
Vadim Yurinsky.
\newblock \emph{Sums and Gaussian Vectors}, volume 1617 of \emph{Lecture Notes
  in Mathematics}.
\newblock Springer-Verlag, Berlin, 1995.

\end{thebibliography}

\end{document}